\documentclass{siamart}
\usepackage{amssymb,graphicx,enumerate,url,mathtools}
\usepackage[mathscr]{eucal}
\usepackage{color}
\usepackage{tikz,tikz-cd}
\usepackage[normalem]{ulem}
\usepackage{soul}
\usepackage{url}

\usepackage{pgf}
\usepackage{pgffor}
\usepgfmodule{shapes}
\usepgfmodule{plot}
\usetikzlibrary{decorations}
\usetikzlibrary{arrows}
\usetikzlibrary{snakes}

\newcommand{\physicalpendulum}{
    \pgfpathmoveto{\pgfpoint{0cm}{2cm}}
    \pgfpathcurveto{\pgfpoint{.2cm}{2cm}}{\pgfpoint{.2cm}{2cm}}{\pgfpoint{1cm}{1cm}}
    \pgfpathcurveto{\pgfpoint{1.75cm}{0cm}}{\pgfpoint{1.8cm}{0cm}}{\pgfpoint{2cm}{-2cm}}
    \pgfpathcurveto{\pgfpoint{2.2cm}{-4cm}}{\pgfpoint{2.3cm}{-4.2cm}}{\pgfpoint{2.5cm}{-5cm}}
    \pgfpathcurveto{\pgfpoint{2.7cm}{-5.8cm}}{\pgfpoint{3cm}{-6cm}}{\pgfpoint{3cm}{-7cm}}
    \pgfpathcurveto{\pgfpoint{3cm}{-8cm}}{\pgfpoint{2.25cm}{-8.75cm}}{\pgfpoint{2cm}{-9cm}}
    \pgfpathcurveto{\pgfpoint{1.75cm}{-9.25cm}}{\pgfpoint{1cm}{-10cm}}{\pgfpoint{0cm}{-10cm}}
    \pgfpathcurveto{\pgfpoint{-2cm}{-10cm}}{\pgfpoint{-3cm}{-8cm}}{\pgfpoint{-3cm}{-7cm}}
    \pgfpathcurveto{\pgfpoint{-3cm}{-3cm}}{\pgfpoint{-1cm}{-2cm}}{\pgfpoint{-1cm}{-1cm}}
    \pgfpathcurveto{\pgfpoint{-1cm}{0cm}}{\pgfpoint{-.5cm}{2cm}}{\pgfpoint{0cm}{2cm}}
    \pgfusepath{fill,stroke}
}

\makeatletter
\newcommand*{\saved@uline}{}
\let\saved@uline\uline

\newcommand*{\mathuline}{%
  \mathpalette{\math@uline\saved@uline}%
}
\newcommand*{\math@uline}[3]{%
  \mbox{#1{$#2#3\m@th$}}%
}

\renewcommand*{\uline}{%
  \relax  
  \ifmmode
    \expandafter\mathuline
  \else
    \expandafter\saved@uline
  \fi
}
\makeatother

\DeclareMathOperator{\SO}{SO}
\DeclareMathOperator{\U}{U}

\headers{Cohomology of Cryo-Electron Microscopy}{K.~Ye and L.-H.~Lim}

\title{{Cohomology of Cryo-Electron Microscopy}\thanks{LHL and KY are generously supported by AFOSR FA9550-13-1-0133, DARPA D15AP00109, NSF IIS 1546413, DMS 1209136, and DMS 1057064.}}

\author{
  Ke~Ye\thanks{Computational and Applied Mathematics Initiative, Department of Statistics, University of Chicago, 5747 South Ellis Avenue, Chicago, IL 60637, USA, (\email{kye@galton.uchicago.edu}).}
  \and
  Lek-Heng~Lim\thanks{Computational and Applied Mathematics Initiative, Department of Statistics, University of Chicago, 5747 South Ellis Avenue, Chicago, IL 60637, USA,
    (corresponding author: \email{lekheng@galton.uchicago.edu}).}
}

\begin{document}
\maketitle

\begin{abstract}
The goal of cryo-electron microscopy (EM) is to reconstruct the $3$-dimensional structure of a molecule from a collection of its $2$-dimensional projected images. In this article, we show that the basic premise of cryo-EM --- patching together $2$-dimensional projections to reconstruct a $3$-dimensional object --- is naturally  one of \v{C}ech cohomology with $\SO(2)$-coefficients. We deduce that every cryo-EM reconstruction problem corresponds to an oriented circle bundle on a simplicial complex, allowing us to classify cryo-EM problems via principal bundles.  In practice, the $2$-dimensional images are noisy and a main task in cryo-EM is to denoise them. We will see how the aforementioned insights can be used towards this end.
\end{abstract}

\begin{keywords}
Oriented circle bundle, flat oriented circle bundle, cocycle condition, impossible figure, classifying space, principal bundle, circular Radon transform
\end{keywords}

\begin{AMS}
92E10, 46M20, 94A08, 68U10, 44A12, 55R35
\end{AMS}

\section{Introduction}\label{sec:intro}

The problem of cryo-electron microscopy (cryo-EM) asks for the following: Given a collection of noisy $2$-dimensional ($2$D) projected images, reconstruct the $3$-dimensional ($3$D) structure of the molecule that gave rise to these images. Viewed from a high level, it takes the form of an inverse problem similar to those in medical imaging \cite{medical1,medical2,medical3}, remote sensing \cite{remote1,remote2}, or underwater acoustics \cite{underwater1,underwater2}, except that for cryo-EM the data comes from an electron microscope instead of a CT scanner, radar, or sonar. However, when examined at a finer level of detail, one realizes that the cryo-EM problem possesses mathematical structures that are quite different from those of other classical inverse problems. It has inspired studies from the perspectives of representation theory \cite{HS1, HS2}, differential geometry \cite{SW1, SW2}, and is related to profound problems in computational complexity \cite{BCS} and operator theory \cite{BKS}. This article examines the problem from an algebraic topological angle --- we will show that the problem of cryo-EM is a problem of cohomology, or, more specifically, the {\v C}ech cohomology of a simplicial complex with coefficients in the Lie group $\SO(2)$ and the discrete group $\SO(2)_d$, i.e., $\SO(2)$ endowed with the discrete topology.

Despite its abstract appearance, the aforementioned cohomology framework is actually concrete and natural. The fact that cohomology has an important role to play in understanding $2$D projections of $3$D objects is already evident in simple examples like the  Penrose tribar or Escher brick, as we will see in Section~\ref{sec:tribar}. Our analysis of discrete and continuous cryo-EM cocycles requires a more sophisticated type of cohomology but is essentially along the same lines. In fact, the same ideas  that we use to study the cryo-EM problem also underlies the classical field theory of electromagnetism \cite{Chern}.  The cohomology framework allows us to classify cryo-EM cocycles: Given two different collections of $2$D projected images, are they equivalent in the sense that they will give us the same $3$D reconstruction? The insights gained also shed light on the denoising techniques: What are we really trying to achieve when we minimize a certain loss function to denoise cryo-EM images?

The technique of cryo-electron microscopy has been described in great detail in \cite{Frank1, Frank2} and more than adequately summarized in \cite{HS1, HS2,  PZF, SS, SW1, SW2, SZSH, VG, Heel}. It suffices to provide a very brief review here. A more precise mathematical model, for the following high-level description will be given in Section~\ref{sec:main}. The basic idea is that one first immobilizes many identical copies of a molecule in ice and employs an electron microscope to produce $2$D images of the molecule. As each copy of the molecule is frozen in some unknown orientation, each of the $2$D images may be regarded as a projection of the molecule from an unknown viewing direction. The cryo-EM dataset is then the set of these $2$D  projected images. Such a $2$D image shows not only the shape of the molecule in the plane of the viewing direction but also contains information about the density of the molecule, captured in the intensity of each pixel of the $2$D image  \cite{Natterer}. The ultimate goal of cryo-EM is to construct the $3$D structure of the molecule from a cryo-EM dataset. In practice, these $2$D images are very noisy due to various issues ranging from the electron dosage of the microscope to the structure of the ice in which the molecule are frozen. Hence the main difficulty in  cryo-EM reconstruction is to denoise these $2$D images by determining the true viewing directions of these noisy $2$D images so that one may take averages of nearby images.  There has been much significant progress toward this goal in recent years \cite{ PZF, SS, SZSH, VG, Heel}. 

Our article attempts to understand cryo-EM datasets of $2$D images via {\v C}ech and singular cohomology groups. We will see that for a given molecule, the information extracted from its $2$D cryo-EM images determines a cohomology class of a two-dimensional simplicial complex. Furthermore, each of these cohomology classes corresponds to an oriented circle bundle on this simplicial complex. We note that there are essentially two interpretations of cohomology: obstruction and moduli. On the one hand, a cohomology group quantifies the \textit{obstruction} from local to global. For example, this is the sense in which cohomology is used when demonstrating the non-existence of an impossible figure \cite{Penrose} or in the solution of the Mittag-Leffler problem \cite[p.~34]{GH}. On the other hand, a cohomology group may also be used to describe a collection of mathematical objects, i.e., it serves as a \textit{moduli} space for these objects. For example, when we use a cohomology group to parameterize all divisors or all line bundles on an algebraic variety \cite[p.~143]{Hartshorne}, it is used in this latter sense.

The line bundles example is a special case of a more general statement: A cohomology group serves as the moduli space of principal bundles over a topological space. This forms the basis for our use of cohomology in the cryo-EM reconstruction problem --- as a moduli space for all possible cryo-EM datasets. Obviously, such a classification of cryo-EM datasets comes under the implicit assumption that the $2$D images in a dataset are noise-free.  Our classification depends on a standard mathematical \emph{model} for molecules in the context of cryo-electron microscopy under a noise-free assumption. Here the reader is reminded that a molecule is a physical notion and not a mathematical one. A mathematical answer to the question `What is a molecule?' depends on the context. In one theory, a molecule may be a solution to a Schr\"odinger \textsc{pde} (e.g., quantum chemistry) whereas in another, it may be a path in a $6N$-dimensional phase space (e.g., molecular dynamics). In our model, a molecule is a real-valued function on $\mathbb{R}^3$ representing potential. When our images are noisy, this model gives us a natural way, namely, the cocycle condition, to denoise them by fitting them to the model. Various methods for denoising cryo-EM images \cite{SS, SZSH} may be viewed as nonlinear regression for fitting the cocycle condition under additional assumptions.

\section{Cohomology and $2$D projections of $3$D objects}\label{sec:tribar}

The idea that cohomology arises whenever one attempts to analyze $2$D projections of $3$D objects was first pointed out by Roger Penrose, who proposed in \cite{Penrose}  a cohomological argument to analyze Escher-type optical illusions. In the following, we present Penrose's elegantly simple example since it illustrates some of the same principles that underly our more complicated use of cohomology in cryo-EM.

We follow the spirit of Penrose's arguments in \cite{Penrose} but we will deviate slightly to be more in-line with our discussions of cryo-EM and to obtain a proof for the nonexistence of Penrose tribar. The few unavoidable topological jargons are defined in Section~\ref{sec:three} but they are used in such a way that one could grasp the intuitive ideas involved even without knowledge of the jargons. To be clear, a $3$D object is one that can be embedded in $\mathbb{R}^3$ by an injective map $J$ such that $J(ax+by) = a J(x) + b J(y)$ whenever $x$, $y$, $ax + by $ are points in this object, and $a, b \in \mathbb{R}$.

The Penrose \textit{tribar} is defined to be a fictitious $3$D object --- fictitious as it does not exist in $\mathbb{R}^3$ ---  obtained by gluing three rectangular solid cuboids (i.e., bars) $L_1, L_2, L_3$ in $\mathbb{R}^3$ as follows: $L_i$ is glued to $L_j$ by identifying a cubical portion $L_{ij}$ at one end of $L_i$ with a cubical portion $L_{ji}$ at one end of $L_j$ as depicted in Figure~\ref{fig:tribar}(b), $i,j=1,2,3$.
\begin{figure}[h]
\centering
    \begin{tikzpicture}[scale=0.5, line join=bevel]
	
    \pgfmathsetmacro{\a}{2.5}
    \pgfmathsetmacro{\b}{0.9}

    \tikzset{%
      apply style/.code     = {\tikzset{#1}},
      triangle_edges/.style = {thick,draw=black}
    }

    \foreach \theta/\facestyle in {%
        0/{triangle_edges, fill = gray!50},
      120/{triangle_edges, fill = gray!25},
      240/{triangle_edges, fill = gray!90}%
    }{
      \begin{scope}[rotate=\theta]
        \draw[apply style/.expand once=\facestyle]
          ({-sqrt(3)/2*\a},{-0.5*\a})                     --
          ++(-\b,0)                                       --
            ({0.5*\b},{\a+3*sqrt(3)/2*\b})                -- 
            ({sqrt(3)/2*\a+2.5*\b},{-.5*\a-sqrt(3)/2*\b}) -- 
          ++({-.5*\b},-{sqrt(3)/2*\b})                    -- 
            ({0.5*\b},{\a+sqrt(3)/2*\b})                  --
          cycle;
        \end{scope}
      }	
      
	  \end{tikzpicture}
\qquad
\begin{tikzpicture}
\pgfmathsetmacro{\cubex}{3}
\pgfmathsetmacro{\cubey}{0.5}
\pgfmathsetmacro{\cubez}{0.5}
\draw[black,fill=black!20] (0,0,0) -- ++(-\cubex,0,0) -- ++(0,-\cubey,0) -- ++(\cubex,0,0) -- cycle;
\draw[black,fill=black!20] (0,0,0) -- ++(0,0,-\cubez) -- ++(0,-\cubey,0) -- ++(0,0,\cubez) -- cycle;
\draw[black,fill=black!20] (0,0,0) -- ++(-\cubex,0,0) -- ++(0,0,-\cubez) -- ++(\cubex,0,0) -- cycle;

\pgfmathsetmacro{\cubex}{0.5}
\pgfmathsetmacro{\cubey}{0.5}
\pgfmathsetmacro{\cubez}{0.5}
\draw[black,fill=red!20] (0,0,0) -- ++(-\cubex,0,0) -- ++(0,-\cubey,0) -- ++(\cubex,0,0) -- cycle;
\draw[black,fill=red!20] (0,0,0) -- ++(0,0,-\cubez) -- ++(0,-\cubey,0) -- ++(0,0,\cubez) -- cycle;
\draw[black,fill=red!20] (0,0,0) -- ++(-\cubex,0,0) -- ++(0,0,-\cubez) -- ++(\cubex,0,0) -- cycle;

\begin{scope}[shift={(-3,0)}]
\pgfmathsetmacro{\cubex}{0.5}
\pgfmathsetmacro{\cubey}{0.5}
\pgfmathsetmacro{\cubez}{0.5}
\draw[black,fill=yellow!20] (0,0,0) -- ++(-\cubex,0,0) -- ++(0,-\cubey,0) -- ++(\cubex,0,0) -- cycle;
\draw[black,fill=yellow!20] (0,0,0) -- ++(0,0,-\cubez) -- ++(0,-\cubey,0) -- ++(0,0,\cubez) -- cycle;
\draw[black,fill=yellow!20] (0,0,0) -- ++(-\cubex,0,0) -- ++(0,0,-\cubez) -- ++(\cubex,0,0) -- cycle;
\draw (-\cubex,0,0) --  (\cubex,0,0);
\end{scope}

\begin{scope}[shift={(0,1)},rotate=90]
\pgfmathsetmacro{\cubex}{0.5}
\pgfmathsetmacro{\cubey}{0.5}
\pgfmathsetmacro{\cubez}{3}
\draw[black,fill=black!20] (0,0,0) -- ++(-\cubex,0,0) -- ++(0,-\cubey,0) -- ++(\cubex,0,0) -- cycle;
\draw[black,fill=black!20] (0,0,0) -- ++(0,0,-\cubez) -- ++(0,-\cubey,0) -- ++(0,0,\cubez) -- cycle;
\draw[black,fill=black!20] (0,0,0) -- ++(-\cubex,0,0) -- ++(0,0,-\cubez) -- ++(\cubex,0,0) -- cycle;

\pgfmathsetmacro{\cubex}{0.5}
\pgfmathsetmacro{\cubey}{0.5}
\pgfmathsetmacro{\cubez}{0.5}
\draw[black,fill=red!20] (0,0,0) -- ++(-\cubex,0,0) -- ++(0,-\cubey,0) -- ++(\cubex,0,0) -- cycle;
\draw[black,fill=red!20] (0,0,0) -- ++(0,0,-\cubez) -- ++(0,-\cubey,0) -- ++(0,0,\cubez) -- cycle;
\draw[black,fill=red!20] (0,0,0) -- ++(-\cubex,0,0) -- ++(0,0,-\cubez) -- ++(\cubex,0,0) -- cycle;

\begin{scope}[shift={(1,1)}]
\pgfmathsetmacro{\cubex}{0.5}
\pgfmathsetmacro{\cubey}{0.5}
\pgfmathsetmacro{\cubez}{0.5}
\draw[black,fill=green!20] (0,0,0) -- ++(-\cubex,0,0) -- ++(0,-\cubey,0) -- ++(\cubex,0,0) -- cycle;
\draw[black,fill=green!20] (0,0,0) -- ++(0,0,-\cubez) -- ++(0,-\cubey,0) -- ++(0,0,\cubez) -- cycle;
\draw[black,fill=green!20] (0,0,0) -- ++(-\cubex,0,0) -- ++(0,0,-\cubez) -- ++(\cubex,0,0) -- cycle;
\draw (0,0,-\cubez) --  (0,0,3*\cubez);
\end{scope}
\end{scope}

\begin{scope}[shift={(-1.5,2.5)}, rotate=-45]
\pgfmathsetmacro{\cubex}{0.5}
\pgfmathsetmacro{\cubey}{3}
\pgfmathsetmacro{\cubez}{0.5}
\draw[black,fill=black!20] (0,0,0) -- ++(-\cubex,0,0) -- ++(0,-\cubey,0) -- ++(\cubex,0,0) -- cycle;
\draw[black,fill=black!20] (0,0,0) -- ++(0,0,-\cubez) -- ++(0,-\cubey,0) -- ++(0,0,\cubez) -- cycle;
\draw[black,fill=black!20] (0,0,0) -- ++(-\cubex,0,0) -- ++(0,0,-\cubez) -- ++(\cubex,0,0) -- cycle;

\pgfmathsetmacro{\cubex}{0.5}
\pgfmathsetmacro{\cubey}{0.5}
\pgfmathsetmacro{\cubez}{0.5}
\draw[black,fill=green!20] (0,0,0) -- ++(-\cubex,0,0) -- ++(0,-\cubey,0) -- ++(\cubex,0,0) -- cycle;
\draw[black,fill=green!20] (0,0,0) -- ++(0,0,-\cubez) -- ++(0,-\cubey,0) -- ++(0,0,\cubez) -- cycle;
\draw[black,fill=green!20] (0,0,0) -- ++(-\cubex,0,0) -- ++(0,0,-\cubez) -- ++(\cubex,0,0) -- cycle;

\begin{scope}[shift={(0,-3)}]
\pgfmathsetmacro{\cubex}{0.5}
\pgfmathsetmacro{\cubey}{0.5}
\pgfmathsetmacro{\cubez}{0.5}
\draw[black,fill=yellow!20] (0,0,0) -- ++(-\cubex,0,0) -- ++(0,-\cubey,0) -- ++(\cubex,0,0) -- cycle;
\draw[black,fill=yellow!20] (0,0,0) -- ++(0,0,-\cubez) -- ++(0,-\cubey,0) -- ++(0,0,\cubez) -- cycle;
\draw[black,fill=yellow!20] (0,0,0) -- ++(-\cubex,0,0) -- ++(0,0,-\cubez) -- ++(\cubex,0,0) -- cycle;
\draw (0,-\cubey,0) --  (0,\cubey,0);
\end{scope}
\end{scope}

    \node (a) at (0.5,-0.5) {$L_{12}$};
    \node (b) at (0.9,0.6) {$L_{21}$};
    
    \node (c) at (-0.7,2.4) {$L_{23}$};
    \node (d) at (-1.2,2.9) {$L_{32}$};

     \node (e) at (-4.4,0) {$L_{31}$};
    \node (f) at (-3.9,-0.5) {$L_{13}$};

    \node (g) at (-1.6,-0.8) {$L_{1}$};
     \node (h) at (0.1,1.8) {$L_{2}$};
     \node (i) at (-3.4,1.8) {$L_{3}$};
\end{tikzpicture}
\caption{(a) Projection of tribar into  $\mathbb{R}^2$. (b) Decomposition into three overlapping pieces in $\mathbb{R}^3$.} \label{fig:tribar}
\end{figure}

The tribar is more commonly shown in its $2$D projected form as in Figure~\ref{fig:tribar}(a). Let $\Delta$ be the triangular $2$D object in Figure~\ref{fig:tribar}(a), which appears to be the projection of the Penrose tribar, should it exist, onto a plane $H \cong \mathbb{R}^2$. Indeed, there are (infinitely) many $3$D objects that, when projected onto a plane $H \cong \mathbb{R}^2$, gives $\Delta$ as an image. An example is the object in Figure~\ref{fig:tribar3}, as we explain below. 
\begin{figure}[h]
\begin{center}
\begin{tikzpicture}
\pgfmathsetmacro{\cubex}{3}
\pgfmathsetmacro{\cubey}{0.5}
\pgfmathsetmacro{\cubez}{0.5}
\draw[black,fill=black!20] (0,0,0) -- ++(-\cubex,0,0) -- ++(0,-\cubey,0) -- ++(\cubex,0,0) -- cycle;
\draw[black,fill=black!20] (0,0,0) -- ++(0,0,-\cubez) -- ++(0,-\cubey,0) -- ++(0,0,\cubez) -- cycle;
\draw[black,fill=black!20] (0,0,0) -- ++(-\cubex,0,0) -- ++(0,0,-\cubez) -- ++(\cubex,0,0) -- cycle;

\pgfmathsetmacro{\cubex}{0.5}
\pgfmathsetmacro{\cubey}{0.5}
\pgfmathsetmacro{\cubez}{0.5}
\draw[black,fill=red!20] (0,0,0) -- ++(-\cubex,0,0) -- ++(0,-\cubey,0) -- ++(\cubex,0,0) -- cycle;
\draw[black,fill=red!20] (0,0,0) -- ++(0,0,-\cubez) -- ++(0,-\cubey,0) -- ++(0,0,\cubez) -- cycle;
\draw[black,fill=red!20] (0,0,0) -- ++(-\cubex,0,0) -- ++(0,0,-\cubez) -- ++(\cubex,0,0) -- cycle;

\begin{scope}[shift={(-3,0)}]
\pgfmathsetmacro{\cubex}{0.5}
\pgfmathsetmacro{\cubey}{0.5}
\pgfmathsetmacro{\cubez}{0.5}
\draw[black,fill=yellow!20] (0,0,0) -- ++(-\cubex,0,0) -- ++(0,-\cubey,0) -- ++(\cubex,0,0) -- cycle;
\draw[black,fill=yellow!20] (0,0,0) -- ++(0,0,-\cubez) -- ++(0,-\cubey,0) -- ++(0,0,\cubez) -- cycle;
\draw[black,fill=yellow!20] (0,0,0) -- ++(-\cubex,0,0) -- ++(0,0,-\cubez) -- ++(\cubex,0,0) -- cycle;
\draw (-\cubex,0,0) --  (\cubex,0,0);
\end{scope}

\begin{scope}[shift={(0,0)},rotate=0]
\pgfmathsetmacro{\cubex}{0.5}
\pgfmathsetmacro{\cubey}{0.5}
\pgfmathsetmacro{\cubez}{3}
\draw[black,fill=black!20] (0,0,0) -- ++(-\cubex,0,0) -- ++(0,-\cubey,0) -- ++(\cubex,0,0) -- cycle;
\draw[black,fill=black!20] (0,0,0) -- ++(0,0,-\cubez) -- ++(0,-\cubey,0) -- ++(0,0,\cubez) -- cycle;
\draw[black,fill=black!20] (0,0,0) -- ++(-\cubex,0,0) -- ++(0,0,-\cubez) -- ++(\cubex,0,0) -- cycle;

\pgfmathsetmacro{\cubex}{0.5}
\pgfmathsetmacro{\cubey}{0.5}
\pgfmathsetmacro{\cubez}{0.5}
\draw[black,fill=red!20] (0,0,0) -- ++(-\cubex,0,0) -- ++(0,-\cubey,0) -- ++(\cubex,0,0) -- cycle;
\draw[black,fill=red!20] (0,0,0) -- ++(0,0,-\cubez) -- ++(0,-\cubey,0) -- ++(0,0,\cubez) -- cycle;
\draw[black,fill=red!20] (0,0,0) -- ++(-\cubex,0,0) -- ++(0,0,-\cubez) -- ++(\cubex,0,0) -- cycle;

\begin{scope}[shift={(1,1)}]
\pgfmathsetmacro{\cubex}{0.5}
\pgfmathsetmacro{\cubey}{0.5}
\pgfmathsetmacro{\cubez}{0.5}
\draw[black,fill=green!20] (0,0,0) -- ++(-\cubex,0,0) -- ++(0,-\cubey,0) -- ++(\cubex,0,0) -- cycle;
\draw[black,fill=green!20] (0,0,0) -- ++(0,0,-\cubez) -- ++(0,-\cubey,0) -- ++(0,0,\cubez) -- cycle;
\draw[black,fill=green!20] (0,0,0) -- ++(-\cubex,0,0) -- ++(0,0,-\cubez) -- ++(\cubex,0,0) -- cycle;
\draw (0,0,-\cubez) --  (0,0,3*\cubez);
\end{scope}
\end{scope}

\begin{scope}[shift={(-3,3)}, rotate=0]
\pgfmathsetmacro{\cubex}{0.5}
\pgfmathsetmacro{\cubey}{3}
\pgfmathsetmacro{\cubez}{0.5}
\draw[black,fill=black!20] (0,0,0) -- ++(-\cubex,0,0) -- ++(0,-\cubey,0) -- ++(\cubex,0,0) -- cycle;
\draw[black,fill=black!20] (0,0,0) -- ++(0,0,-\cubez) -- ++(0,-\cubey,0) -- ++(0,0,\cubez) -- cycle;
\draw[black,fill=black!20] (0,0,0) -- ++(-\cubex,0,0) -- ++(0,0,-\cubez) -- ++(\cubex,0,0) -- cycle;

\pgfmathsetmacro{\cubex}{0.5}
\pgfmathsetmacro{\cubey}{0.5}
\pgfmathsetmacro{\cubez}{0.5}
\draw[black,fill=green!20] (0,0,0) -- ++(-\cubex,0,0) -- ++(0,-\cubey,0) -- ++(\cubex,0,0) -- cycle;
\draw[black,fill=green!20] (0,0,0) -- ++(0,0,-\cubez) -- ++(0,-\cubey,0) -- ++(0,0,\cubez) -- cycle;
\draw[black,fill=green!20] (0,0,0) -- ++(-\cubex,0,0) -- ++(0,0,-\cubez) -- ++(\cubex,0,0) -- cycle;

\begin{scope}[shift={(0,-3)}]
\pgfmathsetmacro{\cubex}{0.5}
\pgfmathsetmacro{\cubey}{0.5}
\pgfmathsetmacro{\cubez}{0.5}
\draw[black,fill=yellow!20] (0,0,0) -- ++(-\cubex,0,0) -- ++(0,-\cubey,0) -- ++(\cubex,0,0) -- cycle;
\draw[black,fill=yellow!20] (0,0,0) -- ++(0,0,-\cubez) -- ++(0,-\cubey,0) -- ++(0,0,\cubez) -- cycle;
\draw[black,fill=yellow!20] (0,0,0) -- ++(-\cubex,0,0) -- ++(0,0,-\cubez) -- ++(\cubex,0,0) -- cycle;
\draw (0,-\cubey,0) --  (0,\cubey,0);
\end{scope}
\end{scope}

    \node (a) at (0.4,-0.8) {$L_{12} = L_{21}$};
    
    \node (c) at (0.95,1.45) {$L_{23}$};
    \node (d) at (-3.05,3.45) {$L_{32}$};

     \node (e) at (-3.75,-0.8) {$L_{13}=L_{31}$};

    \node (g) at (-1.6,-0.8) {$L_{1}$};
     \node (h) at (0.9,0) {$L_{2}$};
     \node (i) at (-3.8,1.4) {$L_{3}$};
\end{tikzpicture}
\end{center}
\caption{A $3$D object whose projection onto $\mathbb{R}^2$ is $\Delta$.}\label{fig:tribar3}
\end{figure}
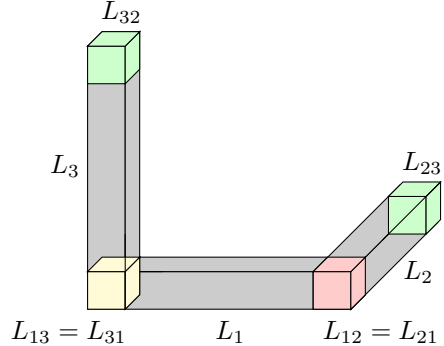

Note that the object in Figure~\ref{fig:tribar3} is an abstraction of the sculpture in Figure~\ref{fig:perth}, which depicts how it projects to give $\Delta$ when viewed from an appropriate angle. The plane $H$ in this case is either the viewer's retina or the camera's photographic film.

\begin{figure}[h]
\centering
\includegraphics[width=\textwidth]{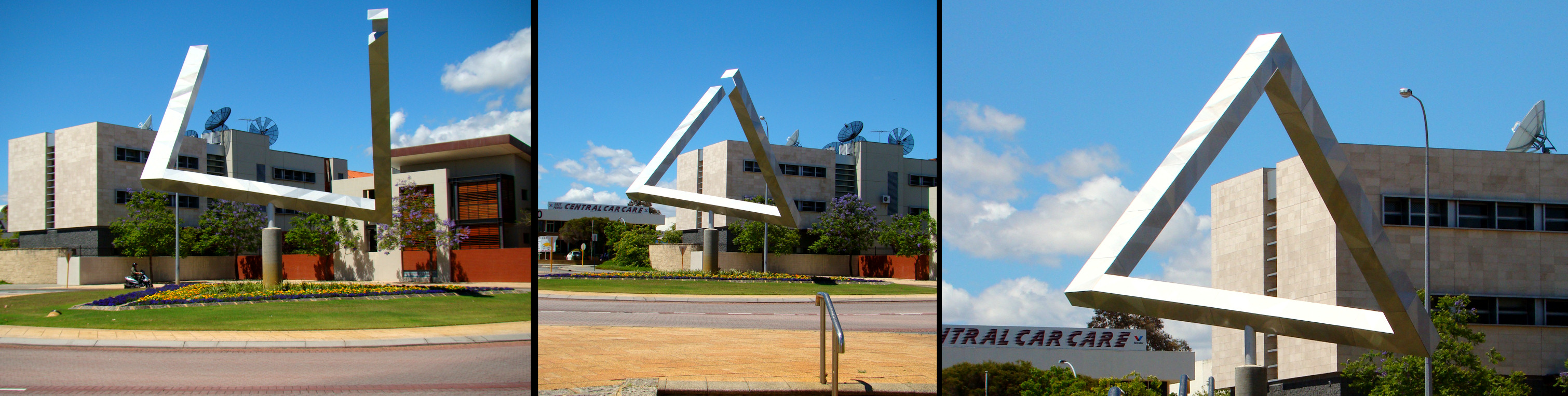}
\caption{The \emph{Impossible Triangle} sculpture by Brian MacKay and Ahmad Abas, located in the Claisebrook Roundabout, Perth, Australia. Photograph by Bj\o rn Christian T\o rrissen shared under a Creative Commons license.}
\label{fig:perth}
\end{figure}

Let $H \subseteq \mathbb{R}^3$ be a hyperplane, which partitions $\mathbb{R}^3$ into two half-spaces. Let $O \in \mathbb{R}^3$ be an arbitrary point in one half-space and the three bars  $L_1, L_2, L_3$ be in the other.  The reader should think of $O$ as the position of the viewer and the viewing direction as a normal to $H$. Now we are going to arrange $L_1, L_2, L_3$ in such a way that their projections onto $H$ give us $\Delta$. This is clearly possible; for example, the $3$D object in Figure~\ref{fig:tribar3}, upon an appropriate rotation dependent on $H$ and $O$, would give $\Delta$ as a projection.


Define $d_{ij} \in \mathbb{R}_+$ to be the distance from $O$ to the center of $L_{ij}$ and $d_{ii}=1$, $i,j=1,2,3$. Let $g = (g_{ij})_{i,j=1}^3 $ be the $3 \times 3$ matrix of cross ratios
\[
g_{ij} = \frac{d_{ij}}{d_{ji}}, \qquad i,j =1,2,3.
\]
Then $g$ is a matrix with $g_{ij}^{-1} = g_{ji}$ and $g_{ii} = 1$ for all $i,j =1,2,3$.

The matrix $g$ is a function of the positions of the bars $L_1, L_2, L_3$, or, to be precise, a function of the centroids of these rigid bodies. These bars have a certain degree of freedom: We may \emph{move} each of them independently along the viewing direction and this would keep their projections in $\mathbb{R}^2$ invariant, always forming $\Delta$. This movement is a similarity transform that preserves the direction of the bar, with no rotation. Moving $L_i$ in the viewing direction results in a rescaling of the distance $d_{ij}$ by a factor  $g_i \in \mathbb{R}_+$ for all $j\ne i$, i.e., if $d_{ij}'$ denotes the new distance upon moving  $L_i$'s along viewing directions, then $d_{ij}' = d_{ij}/g_i$, for all $i\ne j$. Let  $g' = (g'_{ij})_{i,j=1}^3$ be the new matrix of cross ratios upon moving $L_i$'s along viewing direction. Then we have
\begin{equation}\label{eq:cross}
g'_{ij} = \frac{d_{ij}'}{d_{ji}'} = \frac{d_{ij}/g_i}{d_{ji}/g_j} = g _{ij} \frac{g_j} {g_i}, \qquad i,j =1,2,3.
\end{equation}

Suppose that we could eventually move  $L_1, L_2, L_3$ to form the tribar in $\mathbb{R}^3$. Then, in this final position, the centers of $L_{ij}$ and $L_{ji}$ coincide and so $d_{ij}' = d_{ji}'$ for all $ i \ne j$, and thus $g_{ij}'=1$ for all $i,j =1,2,3$. In other words, the matrix $g$ must be a \textit{coboundary}, i.e.,
\begin{equation}\label{eq:cob}
g_{ij} =\frac{g_i}{g_j},
\end{equation}
for some $g_i, g_j \in \mathbb{R}_+$, $i,j =1,2,3$.

In summary, what we have shown is that if $L_1, L_2, L_3$ could be moved into place to form a tribar, then for  $L_1, L_2, L_3$ in \textit{any} positions that form $\Delta$ upon projection onto $\mathbb{R}^2$,  the corresponding matrix $g$ must be a coboundary, i.e., it satisfies \eqref{eq:cob}, or equivalently, $g$ is the identity element in the cohomology group $H^1(\mathbb{R}^2,\mathbb{R}_+)$. 
With this observation, we will next derive a contradiction showing that the tribar does not exist. Let $L_1, L_2, L_3$ be arranged as in Figure~\ref{fig:tribar3} and recall that their projections onto $\mathbb{R}^2$ give $\Delta$. In this case, the matrix $g$ is 
\[
g = 
\begin{bmatrix}
1 & 1 & 1 \\
1 & 1 & g_{23}\\
1 & g_{32} & 1
\end{bmatrix}.
\]
If the tribar exists, then $g$ is a coboundary, i.e., \eqref{eq:cob} has a solution for some $g_i, g_j \in \mathbb{R}_+$, $i,j =1,2,3$, and so 
\[
g_1 = g_2 = g_3, 
\]
implying $g_{23} = 1$. However, as is evident from Figure~\ref{fig:tribar3}, $L_{23}$ does not even intersect $L_{32}$ and so $g_{23} \ne 1$, a contradiction.

Although the tribar does not exist as a $3$D object, i.e., it cannot be embedded in $\mathbb{R}^3$, it clearly exists as an abstract geometrical object (a cubical complex) defined by the gluing procedure described earlier --- we will call this the \textit{intrinsic tribar} to distinguish it from the nonexistent $3$D object. In fact, the intrinsic tribar can be embedded in a three-dimensional manifold $\mathbb{R}^3/\mathbb{Z}$, a quotient space of $\mathbb{R}^3$ under a certain action of the discrete group $\mathbb{Z}$ related to Figure~\ref{fig:tribar3} (see \cite{Francis} for details).

We emphasize that a tribar is a geometrical object, not a topological one. It may be tempting to draw a parallel between the non-embeddability of the intrinsic tribar in $\mathbb{R}^3$ with the non-embeddability of the M\"obius strip in $\mathbb{R}^2$ or the Klein bottle in $\mathbb{R}^3$. But these are different phenomena. As a topological object, a M\"obius strip is only defined up to homotopy, i.e., we may freely deform a M\"obius strip continuously. However the definition of the tribar does not afford this flexibility, i.e., a tribar is not homotopy invariant. For instance, we are not allowed to twist or bend the bars. In fact, had we allowed such continuous deformation, the intrinsic tribar is homotopy equivalent to a torus and therefore trivially embeddable in $\mathbb{R}^3$. This is much like our study of cryo-EM, where the goal is to reconstruct the $3$D structure of a molecule precisely, and not just up to homotopy.

\begin{figure}[h]
\begin{center}
    \begin{tikzpicture}[scale=2.5, line join=bevel]
	
      \pgfmathsetmacro{\a}{0.3}
      \pgfmathsetmacro{\b}{1.5}

      \tikzset{%
        apply style/.code={\tikzset{#1}},
        brick_edges/.style={thick,draw=black},
        face_colourA/.style={fill=gray!50},
        face_colourB/.style={fill=gray!25},
        face_colourC/.style={fill=gray!90},
      }

      \foreach \theta/\v/\facestyleone/\facestyletwo in {%
        0/0/{brick_edges,face_colourA}/{brick_edges,face_colourC},
        180/-\a/{brick_edges,face_colourB}/{brick_edges,face_colourC}
      }{
      \begin{scope}[rotate=\theta,shift={(\v,0)}]
        \draw[apply style/.expand once=\facestyleone]  		
          ({-.5*\b},{1.5*\a}) --
          ++(\b,0)            --
          ++(-\a,-\a)         --
          ++({-\b+2*\a},0)    --
          ++(0,-{2*\a})       --
          ++(\b,0)            --
          ++(-\a,-\a)         --
          ++(-\b,0)           --
          cycle;
        \draw[apply style/.expand once=\facestyletwo] 
          ({.5*\b},{1.5*\a})  --
          ++(0,{-2*\a})       --
          ++(-\a,0)           --
          ++(0,\a)            --
          cycle;
        \end{scope}
      }
    \end{tikzpicture}
\qquad
\begin{tikzpicture}
\pgfmathsetmacro{\cubex}{3}
\pgfmathsetmacro{\cubey}{0.5}
\pgfmathsetmacro{\cubez}{0.5}
\draw[black,fill=black!20] (0,0,0) -- ++(-\cubex,0,0) -- ++(0,-\cubey,0) -- ++(\cubex,0,0) -- cycle;
\draw[black,fill=black!20] (0,0,0) -- ++(0,0,-\cubez) -- ++(0,-\cubey,0) -- ++(0,0,\cubez) -- cycle;
\draw[black,fill=black!20] (0,0,0) -- ++(-\cubex,0,0) -- ++(0,0,-\cubez) -- ++(\cubex,0,0) -- cycle;
\pgfmathsetmacro{\cubex}{0.5}
\pgfmathsetmacro{\cubey}{0.5}
\pgfmathsetmacro{\cubez}{0.5}
\draw[black,fill=red!20] (0,0,0) -- ++(-\cubex,0,0) -- ++(0,-\cubey,0) -- ++(\cubex,0,0) -- cycle;
\draw[black,fill=red!20] (0,0,0) -- ++(0,0,-\cubez) -- ++(0,-\cubey,0) -- ++(0,0,\cubez) -- cycle;
\draw[black,fill=red!20] (0,0,0) -- ++(-\cubex,0,0) -- ++(0,0,-\cubez) -- ++(\cubex,0,0) -- cycle;
\begin{scope}[shift={(-3,0)}]
\pgfmathsetmacro{\cubex}{0.5}
\pgfmathsetmacro{\cubey}{0.5}
\pgfmathsetmacro{\cubez}{0.5}
\draw[black,fill=yellow!20] (0,0,0) -- ++(-\cubex,0,0) -- ++(0,-\cubey,0) -- ++(\cubex,0,0) -- cycle;
\draw[black,fill=yellow!20] (0,0,0) -- ++(0,0,-\cubez) -- ++(0,-\cubey,0) -- ++(0,0,\cubez) -- cycle;
\draw[black,fill=yellow!20] (0,0,0) -- ++(-\cubex,0,0) -- ++(0,0,-\cubez) -- ++(\cubex,0,0) -- cycle;
\draw (-\cubex,0,0) --  (\cubex,0,0);
\end{scope}

\begin{scope}[shift={(1,3)}]
\pgfmathsetmacro{\cubex}{0.5}
\pgfmathsetmacro{\cubey}{3.5}
\pgfmathsetmacro{\cubez}{0.5}
\draw[black,fill=black!20] (0,0,0) -- ++(-\cubex,0,0) -- ++(0,-\cubey,0) -- ++(\cubex,0,0) -- cycle;
\draw[black,fill=black!20] (0,0,0) -- ++(0,0,-\cubez) -- ++(0,-\cubey,0) -- ++(0,0,\cubez) -- cycle;
\draw[black,fill=black!20] (0,0,0) -- ++(-\cubex,0,0) -- ++(0,0,-\cubez) -- ++(\cubex,0,0) -- cycle;

\pgfmathsetmacro{\cubex}{0.5}
\pgfmathsetmacro{\cubey}{0.5}
\pgfmathsetmacro{\cubez}{0.5}
\draw[black,fill=green!20] (0,0,0) -- ++(-\cubex,0,0) -- ++(0,-\cubey,0) -- ++(\cubex,0,0) -- cycle;
\draw[black,fill=green!20] (0,0,0) -- ++(0,0,-\cubez) -- ++(0,-\cubey,0) -- ++(0,0,\cubez) -- cycle;
\draw[black,fill=green!20] (0,0,0) -- ++(-\cubex,0,0) -- ++(0,0,-\cubez) -- ++(\cubex,0,0) -- cycle;

\begin{scope}[shift={(0,-3)}]
\pgfmathsetmacro{\cubex}{0.5}
\pgfmathsetmacro{\cubey}{0.5}
\pgfmathsetmacro{\cubez}{0.5}
\draw[black,fill=red!20] (0,0,0) -- ++(-\cubex,0,0) -- ++(0,-\cubey,0) -- ++(\cubex,0,0) -- cycle;
\draw[black,fill=red!20] (0,0,0) -- ++(0,0,-\cubez) -- ++(0,-\cubey,0) -- ++(0,0,\cubez) -- cycle;
\draw[black,fill=red!20] (0,0,0) -- ++(-\cubex,0,0) -- ++(0,0,-\cubez) -- ++(\cubex,0,0) -- cycle;
\draw (0,-\cubey,0) --  (0,\cubex,0);
\end{scope}
\end{scope}

\begin{scope}[shift={(-4,3)}]
\pgfmathsetmacro{\cubex}{0.5}
\pgfmathsetmacro{\cubey}{3.5}
\pgfmathsetmacro{\cubez}{0.5}
\draw[black,fill=black!20] (0,0,0) -- ++(-\cubex,0,0) -- ++(0,-\cubey,0) -- ++(\cubex,0,0) -- cycle;
\draw[black,fill=black!20] (0,0,0) -- ++(0,0,-\cubez) -- ++(0,-\cubey,0) -- ++(0,0,\cubez) -- cycle;
\draw[black,fill=black!20] (0,0,0) -- ++(-\cubex,0,0) -- ++(0,0,-\cubez) -- ++(\cubex,0,0) -- cycle;

\pgfmathsetmacro{\cubex}{0.5}
\pgfmathsetmacro{\cubey}{0.5}
\pgfmathsetmacro{\cubez}{0.5}
\draw[black,fill=purple!20] (0,0,0) -- ++(-\cubex,0,0) -- ++(0,-\cubey,0) -- ++(\cubex,0,0) -- cycle;
\draw[black,fill=purple!20] (0,0,0) -- ++(0,0,-\cubez) -- ++(0,-\cubey,0) -- ++(0,0,\cubez) -- cycle;
\draw[black,fill=purple!20] (0,0,0) -- ++(-\cubex,0,0) -- ++(0,0,-\cubez) -- ++(\cubex,0,0) -- cycle;

\begin{scope}[shift={(0,-3)}]
\pgfmathsetmacro{\cubex}{0.5}
\pgfmathsetmacro{\cubey}{0.5}
\pgfmathsetmacro{\cubez}{0.5}
\draw[black,fill=yellow!20] (0,0,0) -- ++(-\cubex,0,0) -- ++(0,-\cubey,0) -- ++(\cubex,0,0) -- cycle;
\draw[black,fill=yellow!20] (0,0,0) -- ++(0,0,-\cubez) -- ++(0,-\cubey,0) -- ++(0,0,\cubez) -- cycle;
\draw[black,fill=yellow!20] (0,0,0) -- ++(-\cubex,0,0) -- ++(0,0,-\cubez) -- ++(\cubex,0,0) -- cycle;
\draw (0,-\cubey,0) --  (0,\cubex,0);
\end{scope}
\end{scope}
\begin{scope}[shift={(0,3)}]
\pgfmathsetmacro{\cubex}{3}
\pgfmathsetmacro{\cubey}{0.5}
\pgfmathsetmacro{\cubez}{0.5}
\draw[black,fill=black!20] (0,0,0) -- ++(-\cubex,0,0) -- ++(0,-\cubey,0) -- ++(\cubex,0,0) -- cycle;
\draw[black,fill=black!20] (0,0,0) -- ++(0,0,-\cubez) -- ++(0,-\cubey,0) -- ++(0,0,\cubez) -- cycle;
\draw[black,fill=black!20] (0,0,0) -- ++(-\cubex,0,0) -- ++(0,0,-\cubez) -- ++(\cubex,0,0) -- cycle;
\pgfmathsetmacro{\cubex}{0.5}
\pgfmathsetmacro{\cubey}{0.5}
\pgfmathsetmacro{\cubez}{0.5}
\draw[black,fill=green!20] (0,0,0) -- ++(-\cubex,0,0) -- ++(0,-\cubey,0) -- ++(\cubex,0,0) -- cycle;
\draw[black,fill=green!20] (0,0,0) -- ++(0,0,-\cubez) -- ++(0,-\cubey,0) -- ++(0,0,\cubez) -- cycle;
\draw[black,fill=green!20] (0,0,0) -- ++(-\cubex,0,0) -- ++(0,0,-\cubez) -- ++(\cubex,0,0) -- cycle;
\begin{scope}[shift={(-3,0)}]
\pgfmathsetmacro{\cubex}{0.5}
\pgfmathsetmacro{\cubey}{0.5}
\pgfmathsetmacro{\cubez}{0.5}
\draw[black,fill=purple!20] (0,0,0) -- ++(-\cubex,0,0) -- ++(0,-\cubey,0) -- ++(\cubex,0,0) -- cycle;
\draw[black,fill=purple!20] (0,0,0) -- ++(0,0,-\cubez) -- ++(0,-\cubey,0) -- ++(0,0,\cubez) -- cycle;
\draw[black,fill=purple!20] (0,0,0) -- ++(-\cubex,0,0) -- ++(0,0,-\cubez) -- ++(\cubex,0,0) -- cycle;
\draw (-\cubex,0,0) --  (\cubex,0,0);
\end{scope}
\end{scope}

    \node (a) at (0,-0.8) {$L_{12}$};
    \node (b) at (1.6,-0.4) {$L_{21}$};
    
      \node (c) at (1.6,3) {$L_{23}$};
     \node (c) at (0,3.5) {$L_{32}$};
     
    \node (d) at (-3,3.5) {$L_{34}$};
     \node (d) at (-4.9,3) {$L_{43}$};
     
     \node (e) at (-4.9,-0.4) {$L_{41}$};
    \node (f) at (-3.2,-0.8) {$L_{14}$};
    
    \node (g) at (-1.6,-0.8) {$L_{1}$};
     \node (h) at (1.5,1.35) {$L_{2}$};
     \node (g) at (-1.6,3.5) {$L_{3}$};
     \node (i) at (-4.8,1.35) {$L_{4}$};
\end{tikzpicture}
\caption{(a) Projection of  Escher brick into $\mathbb{R}^2$. (b) Decomposition into overlapping pieces in $\mathbb{R}^3$.}  \label{fig:brick}
\end{center}
\end{figure}

The discussions above also apply to other impossible objects in $\mathbb{R}^3$. For example, the \textit{Escher brick}, defined as the (nonexistent) $3$D object obtained by gluing four bars $L_1, L_2, L_3, L_4$ as in Figure~\ref{fig:brick}. If the Escher brick exists in $\mathbb{R}^3$, then whenever  $L_1, L_2, L_3, L_4$ projects onto $\mathbb{R}^2$ to form Figure~\ref{fig:brick}(a), the matrix $g \in  \mathbb{R}^{4 \times 4}$ is necessarily a coboundary, i.e., satisfies $g_{ij} = g_i/g_j$ for some $g_i \in \mathbb{R}_+$, $i,j=1,2,3,4$. We may construct an analogue of Figure~\ref{fig:tribar3} whereby we glue three of the four ends in Figure~\ref{fig:brick}(b). This $3$D object projects onto $\mathbb{R}^2$ to form  Figure~\ref{fig:brick}(a) but its corresponding  matrix $g \in  \mathbb{R}^{4 \times 4}$ is not a coboundary. Hence the Escher brick does not exist in $\mathbb{R}^3$.

\section{Singular Cohomology and {\v C}ech Cohomology}\label{sec:three}

This article is primarily intended for an applied and computational mathematics readership. For readers unfamiliar with algebraic topology, this section provides in one place all the required definitions and background material, kept to a bare minimum of just what we need for this article.

We will define two types of cohomology groups associated to a topological space $X$ and a topological group $G$ that will be useful for our study of the cryo-EM problem: $H^n(X,G)$, the singular cohomology group with coefficients in $G$; and $\check{H}^n(X,G)$, the {\v C}ech cohomology group with coefficients in  $G$. For a given $X$, these cohomology groups are in general different; but they would always be isomorphic for the space $X$ that we construct from a given collection of cryo-EM images (see Section~\ref{sec:main}). The reason we need both of them is that they are good for different purposes: the cohomology of cryo-EM is most naturally formulated in terms of {\v C}ech cohomology; but singular cohomology is more readily computable and facilitates our explicit calculations.

Our descriptions in the next few subsections are highly condensed, but in principle complete and self-contained. While this material is standard, our goal here is to make them accessible to practitioners by limiting the prerequisite to a few rudimentary definitions in point set topology and group theory. We provide pointers to standard sources at the beginning of each subsection.

We use $X \simeq Y$ to denote isomorphism if $X,Y$ are groups, homotopy equivalence if $X,Y$ are topological spaces, and bundle isomorphism if $X,Y$ are bundles. We use $X \cong Y$ to denote homeomorphism of topological spaces.

\subsection{Singular cohomology}\label{sec:sing}

Standard references for this section are \cite{Hatcher, May, Spanier}.

The \textit{standard $n$-simplex} for $n=0,1,2,3$, is the set
\[
\Delta_n \coloneqq \left\{(t_0,\dots, t_n)\in \mathbb{R}^{n+1}: \sum\nolimits_{i=0}^n t_i =1,\;  t_i\ge 0 \right\}.
\]
$\Delta_n$ is the convex hull of its  $n+1$ \textit{vertices},
\[
e_{0} = (0,0,\dots, 0),\;
e_{1} = (1,0,\dots,0),
\dots,
e_{n} = (0,0,\dots, 1).   
\]
The standard $0$-simplex is a point, the standard $1$-simplex is a line, the standard $2$-simplex is a triangle, and the standard $3$-simplex is a tetrahedron. 

For $n=0,1,2$, the convex hull of any $n$ vertices $e_{i_1},\dots, e_{i_n}$ of $\Delta_n$, where $0\le i_1 < \dots < i_n\le n$, is called  a \textit{face} of $\Delta_n$ and denoted by $[i_1,\dots,i_n]$.

Let $X$ be a topological space and $n=0,1,2,3$. A continuous map $\sigma: \Delta_n \to X$ is called a \textit{singular simplicial simplex} on $X$. We denote by $C_n(X)$ the free abelian group generated by all singular simplicial simplices on $X$. The \textit{boundary maps} are homomorphisms of abelian groups
\[
\partial_1: C_1(X) \to C_0(X), \qquad
\partial_2: C_2(X) \to C_1(X), \qquad
\partial_3: C_3(X) \to C_2(X),
\]
defined respectively by the linear extensions of
\begin{align*}
\partial_1 (\sigma) &= \sigma|_{[1]} - \sigma|_{[0]},\\
\partial_2 (\sigma) &= \sigma|_{[1,2]} - \sigma|_{[0,2]} + \sigma|_{[0,1]},\\
\partial_3 (\sigma) &= \sigma|_{[1,2,3]} - \sigma|_{[0,2,3]} + \sigma|_{[0,1,3]} - \sigma|_{[0,1,2]}.
\end{align*}
Here  $\sigma|_{[i]}$ denotes the restriction of $\sigma$ to the face $[i]$ of $\Delta_1$, $\sigma|_{[i,j]}$ denotes the restriction of $\sigma$ to the face $[i,j]$ of $\Delta_2$, and $\sigma|_{[i,j,k]}$ denotes the restriction of $\sigma$ to the face $[i,j,k]$ of $\Delta_3$. We set $\partial_0: C_0(X) \to \{0\}$ to be the zero map.

The sequence of homomorphisms of abelian groups
\begin{equation}\label{chain complex}
C_{3}(X) \xrightarrow{\partial_{3}}  C_{2}(X) \xrightarrow{\partial_2} C_{1}(X)  \xrightarrow{\partial_1} C_{0}(X) \xrightarrow{\partial_0} 0
\end{equation}
forms a \textit{chain complex}, i.e., it has the property that
\begin{equation}\label{eq:bb}
\partial_0 \circ \partial_1 = 0,\qquad \partial_1 \circ \partial_2 = 0, \qquad \partial_2 \circ \partial_3 = 0,
\end{equation}
which are easy to verify.
For $n=0,1,2$, let $Z_n(X) \coloneqq \operatorname{Ker}\partial_n \subseteq C_n(X)$ be the subgroup of \textit{$n$-cycles} and $B_n(X) \coloneqq \operatorname{Im}\partial_{n+1} \subseteq C_n(X)$ be the subgroup of \textit{$n$-boundaries}. It follows from \eqref{eq:bb} that $B_n(X) \subseteq C_n(X)$. The quotient group
\[
H_n(X) \coloneqq Z_n(X)/B_n(X)
\]
is called the $n$th \textit{singular homology group} of $X$, $n = 0,1,2$.

For $n =0,1,2,3$, define $C^n(X) = \operatorname{Hom}_{\mathbb{Z} }(C_n(X),\mathbb{Z})$, the set of all group homomorphisms from $C_n(X)$ to $\mathbb{Z}$. $C^n(X)$ is clearly an abelian group itself under addition of homomorphisms. The map \emph{induced} by the boundary map $\partial_n: C_n(X) \to C_{n-1}(X)$ is defined as
\[
\partial_n^{\ast}: C^{n-1}(X) \to C^n(X), \qquad \partial_n^\ast (f)(\sigma) = f(\partial_n(\sigma)), 
\]
for any $f\in C^{n-1}(X)$ and $ \sigma\in C_n(X)$.
The sequence of homomorphisms of abelian groups
\begin{equation}\label{cochain complex}
0 \xrightarrow{\partial_0^*}  C^{0}(X) \xrightarrow{\partial_1^*} C^1(X) \xrightarrow{\partial_{2}^*} C^{2}(X) \xrightarrow{\partial_{3}^*} C^{3}(X)
\end{equation}
forms a \textit{cochain complex}, i.e., it has the property that
\begin{equation}\label{eq:cc}
\partial_1^* \circ \partial_0^* = 0, \qquad \partial_2^* \circ \partial_1^* = 0, \qquad \partial_3^* \circ \partial_2^* = 0,
\end{equation}
which follows from \eqref{eq:bb}.
For $n=0,1,2$, let $Z^n(X) \coloneqq \operatorname{Ker}\partial_{n+1}^* \subseteq C^n(X)$ be the subgroup of \textit{$n$-cocycles} and $B^n(X) \coloneqq\operatorname{Im}\partial_n^* \subseteq C^n(X)$ be the subgroup of \textit{$n$-coboundaries}. The quotient group
\[
H^n(X) \coloneqq Z^n(X)/B^n(X)
\]
is called the $n$th \textit{singular cohomology group} of $X$, $n=0,1,2$. More generally, let $G$ be a group then one can define the $n$th \textit{singular cohomology group} $H^n(X,G)$ \textit{with coefficient $G$} of $X$ to be the cohomology groups $Z^n(X,G)/B^n(X,G)$ of the cochain complex
\[
0 \xrightarrow{\partial_0^*}  C^{0}(X,G) \xrightarrow{\partial_1^*} C^1(X,G) \xrightarrow{\partial_{2}^*} C^{2}(X,G) \xrightarrow{\partial_{3}^*} C^{3}(X,G)
\]
where $C^n(X,G) = \operatorname{Hom}_{\mathbb{Z}} (C_n(X), G)$, $\partial_n^*$ is the map induced by $\partial_n : C_n(X) \to C_{n-1}(X)$, $n=0,1,2$ and 
\begin{align*}
Z^n(X,G) & \coloneqq \operatorname{Ker}\partial_{n+1}^* \subseteq C^n(X,G), \\
B^n(X,G) & \coloneqq\operatorname{Im}\partial_n^* \subseteq C^n(X,G).
\end{align*}
Note that when $G = \mathbb{Z}$, $C^n(X,\mathbb{Z}) = C^n(X)$, $Z^n(X,\mathbb{Z}) = Z^n(X)$,  $B^n(X,\mathbb{Z}) = B^n(X)$, $H^n(X,\mathbb{Z})=H^n(X)$.

For the purpose of this paper, $X$ would take the form of a \textit{finite simplicial complex}, a collection $K$ of finitely many simplices such that
\begin{enumerate}[\upshape (i)]
\item every face of a simplex in $K$ is also contained in $K$;
\item the intersection of two simplices $\Delta_1,\Delta_2$ in $K$ is a face of both $\Delta_1$ and $\Delta_2$.
\end{enumerate}
We denote the union of simplices in $K$ by $|K|$. We also say that a topological space $X$ is a \textit{finite simplicial complex} if $X$ can be realized as $|K|$ for some finite simplicial complex $K$. For example, spheres $\mathbb{S}^n$ and tori $\mathbb{S}^1 \times \dots \times \mathbb{S}^1$ are finite simplicial complexes in this more general sense.

For the purpose of this paper, readers only need to know that
\[
H_0(\mathbb{S}^2) \simeq H_2(\mathbb{S}^2) \simeq \mathbb{Z},\qquad H_1(\mathbb{S}^2) = 0,\qquad
H^0(\mathbb{S}^2) \simeq H^2(\mathbb{S}^2) \simeq \mathbb{Z},\qquad H^1(\mathbb{S}^2) = 0,
\]
and that if $X$ is a simplicial complex of dimension $p$, then $H_n (X) = 0$ for all $n> p$.

A topological space $X$ is \textit{contractible} if there is a point $x_0 \in X$ and a continuous map $H : X\times [0,1] \to X$ such that 
\[
H(x,0) = x_0 \qquad \text{and}\qquad H(x,1) = x.
\]
Roughly speaking, this means that $X$ can be continuously shrunk to a point $x_0$.
For example, an open/closed/half-open-half-closed line segment is contractible, as is an open/closed disk or a disk with an arc on the boundary. The following is the only fact about contractible spaces that we need for this article.
\begin{proposition}\label{prop:contractible}
If $X$ is contractible and $G$ is an abelian group, then $H^n (X,G) = 0$ for all $n>0$ and $H^0 (X,G) = G$.
\end{proposition}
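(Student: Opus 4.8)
The plan is to reduce the computation to that of a single point via homotopy invariance of singular cohomology. Because $X$ is contractible, the map $H(\cdot,0)$ is the constant map at $x_0$ while $H(\cdot,1)=\mathrm{id}_X$, so $H$ is a homotopy between $\mathrm{id}_X$ and the constant map $X\to\{x_0\}\hookrightarrow X$; equivalently the inclusion $\{x_0\}\hookrightarrow X$ and the unique map $X\to\{x_0\}$ are mutually inverse up to homotopy, so $X\simeq\{x_0\}$. Granting that homotopic maps induce the same homomorphism on $H^n(-,G)$, and hence that a homotopy equivalence induces isomorphisms on all $H^n(-,G)$, it suffices to show $H^0(\{x_0\},G)\simeq G$ and $H^n(\{x_0\},G)=0$ for $n>0$.

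I would first isolate the homotopy-invariance input, which is the one nontrivial ingredient; its proof is standard (e.g.\ \cite[Chap.~2]{Hatcher}). Given a homotopy $F\colon Y\times[0,1]\to Z$ between $f_0,f_1\colon Y\to Z$, one triangulates each prism $\Delta_n\times[0,1]$ into $(n+1)$-simplices and pushes forward by $F$ to build a \emph{prism operator} $P\colon C_n(Y)\to C_{n+1}(Z)$ for which $\partial P+P\partial$ equals, on chains, the difference of the maps induced by $f_1$ and $f_0$; applying $\operatorname{Hom}_{\mathbb Z}(-,G)$ converts $P$ into a cochain homotopy between the induced maps $f_0^*$ and $f_1^*$ on cochains, so $f_0^*=f_1^*$ on cohomology. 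Alternatively, and more in keeping with the self-contained spirit of this section, one can skip the general statement and feed the given contraction $H$ straight into this construction: this yields $P\colon C_n(X)\to C_{n+1}(X)$ with $\partial P+P\partial=\mathrm{id}-\varepsilon$, where $\varepsilon$ sends every singular simplex of $X$ to the constant simplex at $x_0$. Dualizing shows $\mathrm{id}$ and the map induced by $\varepsilon$ coincide on $H^*(X,G)$, and since $\varepsilon$ factors through $C_*(\{x_0\})$ this already gives $H^n(X,G)\cong H^n(\{x_0\},G)$.

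It then remains to compute the cohomology of a point, which is a direct unwinding of the definitions in Section~\ref{sec:sing}. For $Y=\{x_0\}$ there is exactly one continuous map $\Delta_n\to Y$ for each $n$, so $C_n(Y)\simeq\mathbb Z$ with a single generator $\sigma_n$, and every face of $\sigma_n$ equals $\sigma_{n-1}$; hence $\partial_n(\sigma_n)=\bigl(\sum_{i=0}^n(-1)^i\bigr)\sigma_{n-1}$, which is $\sigma_{n-1}$ when $n$ is even and positive and $0$ when $n$ is odd, with $\partial_0=0$ by convention. Applying $\operatorname{Hom}_{\mathbb Z}(-,G)$ produces the cochain complex $0\to G\xrightarrow{\;0\;}G\xrightarrow{\;\mathrm{id}\;}G\xrightarrow{\;0\;}G\to\cdots$, the coboundary $C^{n-1}\to C^n$ being $0$ for $n$ odd and $\mathrm{id}$ for $n$ even, whose cohomology is $G$ in degree $0$ and $0$ in every positive degree. (Within the paper's truncated conventions this covers $n=1,2$; the identical argument handles all $n>0$ for the full cochain complex.) Transporting back along $X\simeq\{x_0\}$ then gives $H^0(X,G)\simeq G$ and $H^n(X,G)=0$ for $n>0$.

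The main obstacle is precisely this homotopy-invariance statement, equivalently the existence of the prism/cone chain homotopy. Everything else is bookkeeping, and the only point I would expect to require genuine care is getting the signs and the degree-zero boundary term right in the explicit cone construction, so that $H^0$ emerges as $G$ rather than $0$; for an applied readership it is likely cleanest simply to cite homotopy invariance and quote the computation for a point.
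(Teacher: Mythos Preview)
Your argument is correct and is the standard route: reduce to a point via homotopy invariance (justified by the prism operator), then compute directly. Note, however, that the paper does not actually prove Proposition~\ref{prop:contractible}; it is stated as background without proof, in keeping with the section's purpose of collecting just the facts needed from algebraic topology for an applied readership. So there is no ``paper's own proof'' to compare against---your write-up would serve as a complete proof where the paper offers none, and your closing remark (that for this audience one might simply cite homotopy invariance and the computation for a point) is exactly the editorial choice the authors made.
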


\subsection{Principal bundles and classifying spaces}\label{sec:bundles}

Standard references for this section are \cite{Hatcher, Husemoller, May, MS, Spanier}.

Let $G$ be a group with multiplication map $\mu : G \times G \to G$, $(x,y)\mapsto xy$ and inversion map $\iota: G\to G$, $x \mapsto x^{-1}$. If $G$ is also  a topological space such that $\mu$ and $\iota$ are continous then $G$ together with this topology is called a \textit{topological group}.
Every group $G$ is a topological group if we put the discrete topology on $G$; we will denote such a topological group by $G_d$ (unless the natural topology is the discrete topology, in which case we will just write $G$).  For example, $\mathbb{Z}$ with its natural discrete topology is a topological group.
In this article, we are primarily interested in the case where $G$ is the group of $2 \times 2$ real orthogonal matrices. When endowed with the manifold topology, this is $\SO(2)$, the special orthogonal group in dimension two and is homeomorphic to the unit circle $\mathbb{S}^1$ as a topological space. On the other hand, $\SO(2)_d$ is just a discrete uncountable collection of $2\times 2$ orthogonal matrices. Both $\SO(2)$ and $\SO(2)_d$ will be of interest to us.

Let $X,P,F$ be topological spaces. We say that $\pi: P \to X$ is a \textit{fiber bundle} with fiber $F$ and base space $X$ if $\pi$ is a continuous surjection and every point of $X$ has a neighborhood $U$ such that $\pi^{-1}(U)$ is homoeomorphic to $U\times F$.

In particular, $\pi^{-1}(x)\cong F$ for all $x \in X$.

A \textit{principal $G$-bundle} is a tuple $(P,\pi,\varphi)$ where $\pi: P \to X$ is a fiber bundle with fiber $G$ and $\varphi : G \times P \to P$ is a group action such that 
\begin{enumerate}[\upshape (i)]
\item $\varphi$ is a continuous map;
\item $\varphi(g,f) \in \pi^{-1}(x)$ for any $f\in \pi^{-1}(x)$;
\item if $\varphi(g,f) = f$ for some $f\in P$, then $g$ is the identity element in $G$;
\item For any $x$ and $f,f'\in \pi^{-1}(x)$, there is a $g\in G$ such that $\varphi(g,f) = f'$.
\end{enumerate}
We will often say `$P$ is a principal $G$-bundle on $X$' to mean the above, without specifying $\pi$ and $\varphi$.  A principal $\SO(2)$-bundle is called an \textit{oriented circle bundle} and a principal $\SO(2)_d$-bundle is called a \textit{flat oriented circle bundle}. We will have more to say about these in Sections~\ref{sec:main} and \ref{sec:main'}.

Let $(P,\pi,\varphi)$ and $(P',\pi',\varphi')$ be two principal $G$-bundles on $X$. We say that $(P,\pi,\varphi)$ is \textit{isomorphic} to $(P',\pi',\varphi')$, denoted $P\simeq P'$, if there is a homeomorphism $\vartheta :P\to P'$ compatible with the group actions $\varphi$, $\varphi'$ and  the projection maps $\pi$, $\pi'$ in the following sense:
\[
\vartheta \circ \varphi = \varphi' \circ (\operatorname{id}_G \times \vartheta)\quad \text{and}\quad
\pi' \circ \vartheta = \pi. 
\]
Here $\operatorname{id}_G : G \to G$ is the identity map. Let $\mathcal{U} = \{U_i : i\in I\}$ be an open covering of $X$ such that 
$\pi^{-1}(U_i) \cong U_i \times G$ via some isomorphism $\tau_i$ for all $i \in I$. A \textit{transition function}  corresponding to $\mathcal{U}$ is a map $\tau_{ij} \coloneqq \tau_i\tau_j^{-1}$, defined for all $i,j\in I$ such that $U_i\cap U_j\ne \varnothing$. It may be regarded as a $G$-valued function $\tau_{ij}:U_i\cap U_j \to G$. Transition functions are important because one may  construct a principal $G$-bundle entirely from its transition functions \cite{Husemoller}.

For $G = \SO(2)$, transition functions $\tau_{ij}$ of an oriented circle bundle are continuous $\SO(2)$-valued functions on open sets $U_i\cap U_j$. For $G = \SO(2)_d$, transition functions $\tau'_{ij}$ of a flat oriented circle bundle are continuous $\SO(2)_d$-valued functions on open sets $U_i\cap U_j$ but since $\SO(2)_d$ has the discrete topology, this means that $\tau'_{ij}$ are locally constant $\SO(2)$-valued functions on $U_i\cap U_j$. In particular, if $U_i\cap U_j$ is connected, then $\tau'_{ij}$ are constant $\SO(2)$-valued functions on $U_i\cap U_j$.  In our case, the covering that we choose (see \eqref{eq:open}) will have connected $U_i\cap U_j$'s and so we may regard
\begin{multline*}
\bigl\{\text{isomorphism classes of flat oriented circle bundles}\bigr\} \\
\subseteq  \bigl\{\text{isomorphism classes of oriented circle bundles}\bigr\}.
\end{multline*}
In other words, flat oriented circle bundles are just oriented circle bundles whose transition functions are constant-valued. 

Let $X,Y$ be topological spaces. Two maps $h_0,h_1:X\to Y$ are \textit{homotopic} if there is a continuous function $H:X\times I \to Y$ such that 
\[
H(x,0) = h_0(x) \qquad \text{and} \qquad H(x,1) = h_1(x).
\]
Homotopy is an equivalence relation and the set of homotopy equivalent classes of maps from $X$ to $Y$ is denoted by $[X,Y]$. Let  $\mathbb{S}^n$ be the $n$-sphere. We say that a topological space $X$ is \textit{weakly contractible} if $[\mathbb{S}^n,X]$ contains only the equivalence class of the \emph{trivial map}, i.e., the map that sends all points in $\mathbb{S}^n$ into a fixed point of $X$. The \textit{classifying space} of  a topological group $G$ is a topological space $BG$ together with a principal $G$-bundle $EG$ on $BG$ such that $EG$ is weakly contractible.

\begin{proposition}\label{prop:classifying space}
For any topological space $X$ and topological group $G$, there is a one-to-one correspondence between the following two sets:
\[
[X,BG] \longleftrightarrow \{\text{isomorphism classes of principal $G$-bundles on $X$}\},
\]
given by $h\mapsto h^{*}(EG)$, the principal $G$-bundle on $X$ whose fiber over $x\in X$ is the fiber of $EG$ over $h(x) \in BG$.
\end{proposition}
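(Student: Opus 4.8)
The plan is to verify the three assertions packaged in the statement: that the rule $h \mapsto h^{*}(EG)$ is well defined on homotopy classes, that it is surjective, and that it is injective. All three reduce to a single principle --- since $EG$ is weakly contractible, a $G$-equivariant continuous map from any principal $G$-bundle into $EG$ can be built up one simplex at a time, and any two such maps are equivariantly homotopic. For this cellular argument I will take $X$ to be a finite simplicial complex, as it is in all our applications; for the statement as phrased in full generality one either assumes $X$ has the homotopy type of a CW complex, or replaces the inductive construction by Milnor's explicit infinite-join model $EG = G \ast G \ast \cdots$.

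First, for well-definedness, I would prove that pullbacks are homotopy invariant: if $h_{0} \simeq h_{1} : X \to BG$ via $H : X \times I \to BG$, then $h_{0}^{*}(EG) \simeq h_{1}^{*}(EG)$. The heart of this is the general fact that a principal $G$-bundle $E$ on $Y \times I$, for $Y$ a finite simplicial complex, satisfies $E|_{Y \times \{0\}} \simeq E|_{Y \times \{1\}}$; this is proved by trivializing $E$ over each simplex of $Y$ and patching the trivializations compatibly, using that $Y \times I$ deformation retracts onto $Y \times \{0\}$. Applying this to $E = H^{*}(EG)$, together with $H^{*}(EG)|_{X \times \{i\}} \simeq h_{i}^{*}(EG)$, gives the claim, so $h \mapsto h^{*}(EG)$ descends to $[X,BG]$.

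Next, for surjectivity, given a principal $G$-bundle $\pi : P \to X$ I would construct a \emph{classifying map}: a continuous $G$-equivariant map $\widetilde{h} : P \to EG$. Being equivariant, it is constant on fibers and so descends to $h : X \to BG$, and since $\widetilde{h}$ covers $h$ equivariantly it induces an isomorphism $P \simeq h^{*}(EG)$. To build $\widetilde{h}$ I induct on the skeleta of $X$: over the $0$-skeleton I choose, for each vertex, an arbitrary point of the (nonempty) space $EG$, and then I extend one simplex at a time. Over a simplex $\sigma$ the restriction $\pi^{-1}(\sigma)$ is trivial, so a $G$-equivariant map $\pi^{-1}(\sigma) \cong \sigma \times G \to EG$ is exactly the datum of an arbitrary continuous map $\sigma \to EG$; extending $\widetilde{h}$ from $\partial\sigma$ into $\sigma$ thus amounts to extending a continuous map $\partial\sigma \cong \mathbb{S}^{\dim\sigma - 1} \to EG$ over the disk $\sigma$, which is possible precisely when that boundary map is null-homotopic --- and it is, because $EG$ is weakly contractible. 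The induction terminates since $X$ is finite.

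Finally, for injectivity, suppose $\theta : h_{0}^{*}(EG) \to h_{1}^{*}(EG)$ is a bundle isomorphism. I would glue $h_{0}^{*}(EG)$ and $h_{1}^{*}(EG)$ along $\theta$ into a principal $G$-bundle $Q$ on $X \times I$ restricting to $h_{i}^{*}(EG)$ at the two ends, on which the evident classifying maps are already prescribed there. Running the surjectivity construction for $Q$, but now \emph{relative} to $X \times \{0,1\}$ where $\widetilde{h}$ is already defined, produces a map $X \times I \to BG$ restricting to $h_{0}$ and $h_{1}$, i.e.\ a homotopy $h_{0} \simeq h_{1}$. I expect the cellular construction of equivariant maps into $EG$, and especially this relative version, to be the one real obstacle: it is the step where weak contractibility of $EG$ is used essentially, via obstruction theory, and it must be arranged compatibly with the $G$-action and with the prescribed boundary values, while everything else is formal bookkeeping. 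For a self-contained treatment that avoids obstruction theory one substitutes Milnor's join model for $EG$ and writes the classifying maps and the homotopies between them down explicitly; I would cite \cite{Husemoller, May, MS} for the details.
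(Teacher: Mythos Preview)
The paper does not actually prove this proposition; it is stated as standard background material in Section~\ref{sec:bundles}, with the references \cite{Hatcher, Husemoller, May, MS, Spanier} given at the head of the section as the intended sources. So there is no ``paper's own proof'' to compare against.

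Your sketch is the standard argument and is correct in outline. The three steps (homotopy invariance of pullback, construction of a classifying map by skeletal induction, and the relative version for injectivity) are exactly how the result is proved in, e.g., \cite{Husemoller} or \cite{May}, and you correctly identify weak contractibility of $EG$ as the key input at each inductive step. You also rightly flag that the bare statement ``for any topological space $X$'' is too strong: the classification theorem genuinely requires some hypothesis such as $X$ having the homotopy type of a CW complex (or paracompactness, if one uses the Milnor join model and numerable bundles). The paper's applications only ever take $X = K_\varepsilon$, a finite simplicial complex, so your restriction is harmless in context and in fact more honest than the proposition as stated. If anything, you have supplied more than the paper does.
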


For the purpose of this paper, readers only need to know that the classifying space $B\U(n)$ of the unitary group $\U(n)$ is $\operatorname{Gr}(n,\infty)$, the Grassmannian of $n$-planes in $\mathbb{C}^{\infty}$. In particular, if $n=1$, since $\U(1) = \SO(2)$, we have 
\begin{equation}\label{eq:bu1}
B\SO(2) = \mathbb{C}P^{\infty}. 
\end{equation}

Let $G$ be an abelian group with identity $0$. We write $\operatorname{Hom}_\mathbb{Z} (G,\mathbb{Z})$ for the set of all homomorphisms from $G$ to $\mathbb{Z}$. An element $g\in G$ is a \textit{torsion} element if it has finite order, i.e., $g^n = 1$ for some $n \in \mathbb{N}$. The subgroup of all torsion elements in $G$ is called its \textit{torsion subgroup} and denoted $G_T$. For example, every element in $\mathbb{Z}/m\mathbb{Z}$ is a torsion element whereas $0$ is the only torsion element in $\mathbb{Z}$. For an abelian group $G$, we also denote its torsion subgroup as
\[
G_T = \operatorname{Ext}^1_\mathbb{Z}(G,\mathbb{Z}).
\]
The reason for including this alternative notation is that it is very standard ---  a special case of $\operatorname{Ext}$ groups for $G$ defined more generally \cite{Hatcher, Hungerford}.  We now state some routine relations \cite{Hungerford} that we will need for our calculations. Let $G$ and $G'$ be  abelian groups. Then
\begin{gather*}
\operatorname{Hom}_\mathbb{Z} (G_T,\mathbb{Z}) = 0,  \qquad
\operatorname{Hom}_\mathbb{Z} (G/G_T,\mathbb{Z}) \simeq G/G_T, \\
\operatorname{Hom}_\mathbb{Z} (G\oplus G',\mathbb{Z}) \simeq \operatorname{Hom}_\mathbb{Z} (G,\mathbb{Z}) \oplus \operatorname{Hom}_\mathbb{Z} (G',\mathbb{Z}),
\end{gather*}
and
\begin{gather*}
\operatorname{Ext}^1_{\mathbb{Z}}(G_T,\mathbb{Z}) = G_T,  \qquad
\operatorname{Ext}^1_{\mathbb{Z}}(G/G_T,\mathbb{Z}) = 0,  \\
\operatorname{Ext}^1_{\mathbb{Z}}(G\oplus G',\mathbb{Z})\simeq \operatorname{Ext}^1_{\mathbb{Z}}(G,\mathbb{Z})\oplus \operatorname{Ext}^1_{\mathbb{Z}}(G',\mathbb{Z}).
\end{gather*}
Singular homology and singular cohomology  are related via $\operatorname{Ext}^1_{\mathbb{Z}}$ and $\operatorname{Hom}_\mathbb{Z}$ in the following well-known theorem.
\begin{theorem}[Universal coefficient theorem]\label{thm:universal coefficient}
Let $X$ be a topological space. Then we have a natural short exact sequence
\[
0 \to \operatorname{Ext}^1_\mathbb{Z}(H_1(X),\mathbb{Z}) \to H^2(X)\to \operatorname{Hom}_\mathbb{Z}(H_2(X),\mathbb{Z})\to 0.
\]
In particular we have an isomorphism,
\[
H^2(X) \simeq \mathbb{Z}^{b} \oplus T_{1},
\] 
where $b \coloneqq \operatorname{rank} (H_2(X))= b_2(X)$ is the second Betti number of $X$ and $T_1$ is the torsion subgroup of $H_1(X)$.
\end{theorem}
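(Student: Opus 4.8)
The plan is to carry out the standard homological-algebra derivation of the universal coefficient theorem, with coefficients in $\mathbb{Z}$ and specialized to degree $2$ (which suffices, and fits inside the truncated complex $C_3(X)\to C_2(X)\to C_1(X)\to C_0(X)$ of Section~\ref{sec:sing}). The starting observation is that each $C_n(X)$ is free abelian, being freely generated by the singular simplices on $X$, so every subgroup is again free abelian; in particular the cycle groups $Z_n(X)=\operatorname{Ker}\partial_n$ and boundary groups $B_n(X)=\operatorname{Im}\partial_{n+1}$ are free. Hence each short exact sequence $0\to Z_n(X)\to C_n(X)\xrightarrow{\partial_n}B_{n-1}(X)\to 0$ splits, since its quotient $B_{n-1}(X)$ is free. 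Applying $\operatorname{Hom}_\mathbb{Z}(-,\mathbb{Z})$ --- a priori only left exact --- therefore preserves exactness, and the resulting split short exact sequences $0\to \operatorname{Hom}_\mathbb{Z}(B_{n-1}(X),\mathbb{Z})\to C^n(X)\to \operatorname{Hom}_\mathbb{Z}(Z_n(X),\mathbb{Z})\to 0$ fit together into a short exact sequence of cochain complexes in which $C^\bullet(X)$ (with its differentials $\partial_\bullet^*$) is the middle term and the sub- and quotient complexes carry the zero differential.

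I would then pass to the associated long exact sequence in cohomology. Since the outer complexes have zero differential it collapses to $\cdots \to \operatorname{Hom}_\mathbb{Z}(B_{n-1}(X),\mathbb{Z})\to H^n(X)\to \operatorname{Hom}_\mathbb{Z}(Z_n(X),\mathbb{Z})\xrightarrow{\iota_n^*}\operatorname{Hom}_\mathbb{Z}(B_n(X),\mathbb{Z})\to\cdots$, where $\iota_n^*=\operatorname{Hom}_\mathbb{Z}(\iota_n,\mathbb{Z})$ is induced by the inclusion $\iota_n:B_n(X)\hookrightarrow Z_n(X)$; identifying the connecting homomorphism with $\iota_{n-1}^*$ up to sign is a short diagram chase through the splitting and the coboundary, and is the one genuinely fiddly point --- everything else is formal. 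This yields a short exact sequence $0\to \operatorname{coker}\iota_{n-1}^*\to H^n(X)\to \operatorname{Ker}\iota_n^*\to 0$. To identify the two ends, observe that $0\to B_n(X)\xrightarrow{\iota_n}Z_n(X)\to H_n(X)\to 0$ is a length-one free resolution of $H_n(X)$; applying $\operatorname{Hom}_\mathbb{Z}(-,\mathbb{Z})$ to it produces the exact sequence $0\to \operatorname{Hom}_\mathbb{Z}(H_n(X),\mathbb{Z})\to \operatorname{Hom}_\mathbb{Z}(Z_n(X),\mathbb{Z})\xrightarrow{\iota_n^*}\operatorname{Hom}_\mathbb{Z}(B_n(X),\mathbb{Z})\to \operatorname{Ext}^1_\mathbb{Z}(H_n(X),\mathbb{Z})\to 0$ (with $\operatorname{Ext}^1_\mathbb{Z}$ the derived functor, which on finitely generated groups is the torsion subgroup, as noted in the text), so $\operatorname{Ker}\iota_n^*\simeq \operatorname{Hom}_\mathbb{Z}(H_n(X),\mathbb{Z})$ and $\operatorname{coker}\iota_{n-1}^*\simeq \operatorname{Ext}^1_\mathbb{Z}(H_{n-1}(X),\mathbb{Z})$.

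Setting $n=2$ gives precisely the short exact sequence of the theorem, and naturality in $X$ follows from the naturality of the chain-level constructions and of the connecting homomorphism. For the concluding isomorphism, $X$ is a finite simplicial complex, so $H_1(X)$ and $H_2(X)$ are finitely generated; by the relations recalled just before the theorem, $\operatorname{Hom}_\mathbb{Z}(H_2(X),\mathbb{Z})\simeq \mathbb{Z}^{b}$ with $b=b_2(X)$, and being free it splits the short exact sequence, so $H^2(X)\simeq \operatorname{Ext}^1_\mathbb{Z}(H_1(X),\mathbb{Z})\oplus \mathbb{Z}^{b}$. Writing $H_1(X)\simeq \mathbb{Z}^{b_1}\oplus T_1$ and using $\operatorname{Ext}^1_\mathbb{Z}(\mathbb{Z}^{b_1},\mathbb{Z})=0$, $\operatorname{Ext}^1_\mathbb{Z}(T_1,\mathbb{Z})=T_1$, and additivity of $\operatorname{Ext}^1_\mathbb{Z}$, gives $\operatorname{Ext}^1_\mathbb{Z}(H_1(X),\mathbb{Z})\simeq T_1$, hence $H^2(X)\simeq \mathbb{Z}^{b}\oplus T_1$. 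I expect the diagram chase identifying the connecting homomorphism to be the only step requiring real care.
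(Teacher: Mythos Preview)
Your proof is correct and is the standard homological-algebra derivation of the universal coefficient theorem (essentially the argument in Hatcher, \S3.1). However, the paper does not actually prove this statement: it is presented as a ``well-known theorem'' from the background section, cited without proof as part of the toolkit assembled for the reader. So there is no paper proof to compare against --- you have supplied a complete argument where the paper simply invokes the result. Your one caveat is apt: the ``In particular'' clause tacitly assumes $H_1(X)$ and $H_2(X)$ are finitely generated (so that $b$ is finite and $\operatorname{Hom}_\mathbb{Z}(H_2(X),\mathbb{Z})\simeq\mathbb{Z}^b$ is free, forcing the sequence to split), which you correctly make explicit by restricting to the finite simplicial complexes that are the paper's only use case.
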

The second Betti number of $X$ counts the number of $2$-dimensional `voids' in $X$. In the case of interest to us, where $X$ is a finite two-dimensional simplicial complex, the second Betti number counts the number of $2$-spheres (by which we meant the boundary of a $3$-simplex, which is homeomorphic to $\mathbb{S}^2$) contained in $X$.

We will also need the following alternative characterization  \cite[Chapter 22]{May} of $H^2(X)$.
\begin{theorem}\label{thm:K(Z,2) space}
Let $X$ be a topological space. Then we have
\[
[X, \mathbb{C}P^{\infty}] \simeq H^2(X).
\]
\end{theorem}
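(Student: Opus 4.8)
The plan is to read $\mathbb{C}P^{\infty}$ as a classifying space and then classify the bundles it classifies by their first Chern class. First, \eqref{eq:bu1} identifies $\mathbb{C}P^{\infty}$ with $B\SO(2)$, so Proposition~\ref{prop:classifying space} gives a bijection, natural in $X$, between $[X,\mathbb{C}P^{\infty}] = [X,B\SO(2)]$ and the set of isomorphism classes of oriented circle bundles (principal $\SO(2)$-bundles) on $X$. Since $\SO(2)\cong\U(1)$, the associated-bundle construction $P\mapsto P\times_{\U(1)}\mathbb{C}$ identifies these, in turn, with isomorphism classes of complex line bundles on $X$; all of these identifications respect the natural group structures --- tensor product of line bundles, contracted product of principal bundles, and the $H$-space multiplication on $\mathbb{C}P^{\infty}$ induced by $\U(1)\times\U(1)\to\U(1)$. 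It therefore suffices to produce a natural group isomorphism $c_1\colon\{\text{complex line bundles on }X\}\xrightarrow{\ \sim\ }H^2(X,\mathbb{Z})$.

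For that I would invoke the exponential sheaf sequence. A finite simplicial complex is paracompact Hausdorff, so on $X$ there is a short exact sequence of sheaves $0\to\underline{\mathbb{Z}}\xrightarrow{2\pi i\,(\cdot)}\mathcal{C}\xrightarrow{\exp}\mathcal{C}^{\times}\to0$, where $\underline{\mathbb{Z}}$ is the constant sheaf, $\mathcal{C}$ the sheaf of germs of continuous $\mathbb{C}$-valued functions, and $\mathcal{C}^{\times}$ its subsheaf of nowhere-vanishing germs. The sheaf $\mathcal{C}$ is fine, hence soft, by a partition-of-unity argument, so $H^k(X,\mathcal{C})=0$ for all $k\ge1$; the associated long exact cohomology sequence then yields a connecting isomorphism $\delta\colon H^1(X,\mathcal{C}^{\times})\xrightarrow{\ \sim\ }H^2(X,\underline{\mathbb{Z}})$. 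On the left, an isomorphism class of complex line bundles on $X$ is precisely a class of transition cocycles $\tau_{ij}\colon U_i\cap U_j\to\mathbb{C}^{\times}$ modulo coboundaries, i.e.\ an element of the {\v C}ech group $H^1(X,\mathcal{C}^{\times})$, with tensor product corresponding to the group operation; on the right, $H^2(X,\underline{\mathbb{Z}})\simeq H^2(X,\mathbb{Z})$ since {\v C}ech and sheaf cohomology with constant coefficients agree with singular cohomology on simplicial complexes (this can also be read off from the machinery of Section~\ref{sec:three} together with the isomorphism $\check{H}^{\bullet}\simeq H^{\bullet}$ noted there for the spaces in play). Composing gives the desired $c_1$.

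I expect this last step to be the crux: everything before it is a formal unwinding of the definitions in Section~\ref{sec:bundles}, whereas here one must feed in a genuine external input --- the softness of fine sheaves on paracompact spaces, hence the vanishing $H^k(X,\mathcal{C})=0$ for $k\ge1$. A sheaf-free route to the same endpoint is to check directly that $\mathbb{C}P^{\infty}$ is an Eilenberg--MacLane space $K(\mathbb{Z},2)$: the universal $\SO(2)$-bundle $\mathbb{S}^1\hookrightarrow\mathbb{S}^{\infty}\to\mathbb{C}P^{\infty}$ has contractible total space $\mathbb{S}^{\infty}$, so its long exact sequence of homotopy groups forces $\pi_2(\mathbb{C}P^{\infty})\simeq\pi_1(\mathbb{S}^1)=\mathbb{Z}$ and $\pi_n(\mathbb{C}P^{\infty})=0$ for $n\ne2$, after which one invokes the representability of singular cohomology by Eilenberg--MacLane spaces, $[X,K(\mathbb{Z},2)]\simeq H^2(X,\mathbb{Z})$ (see \cite[Chapter~22]{May}). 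Either way, naturality of the resulting isomorphism in $X$ comes from the naturality of each construction used: pullback of bundles, pullback of classifying maps, and pullback of $c_1$.
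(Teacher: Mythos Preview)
The paper does not supply its own proof of this theorem: it is quoted as background with the citation \cite[Chapter~22]{May}, which is exactly the Eilenberg--MacLane representability route you sketch as your second argument. So your ``sheaf-free route'' is not an alternative but \emph{is} the paper's proof, and there is nothing to compare on that front.

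Your first argument (classify line bundles via $[X,B\U(1)]$, then identify line bundles with $H^2(X,\mathbb{Z})$ through the exponential sequence) is correct and, pleasantly, is essentially the same mechanism the paper deploys later in Theorem~\ref{thm:cohomology of K}, where the short exact sequence $1\to\mathbb{Z}\to\mathbb{R}\to\mathbb{S}^1\to1$ and the vanishing of $\check{H}^k(K_\varepsilon,\mathbb{R})$ by partitions of unity give $\check{H}^1(K_\varepsilon,\SO(2))\simeq H^2(K_\varepsilon)$ directly. So your two routes correspond, respectively, to the paper's citation and to the paper's own hands-on computation one section later; each buys the same isomorphism, the first being conceptually cleanest and the second avoiding the black box of Brown representability.

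One small caveat worth flagging: the theorem is stated for an arbitrary topological space $X$, but both of your arguments silently import hypotheses --- paracompactness for softness of $\mathcal{C}$, and a CW structure (or at least CW homotopy type) for $[X,K(\mathbb{Z},2)]\simeq H^2(X,\mathbb{Z})$. This is an overstatement already present in the paper rather than a defect in your reasoning, and it is harmless here since every $X$ in sight is a finite simplicial complex.
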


\subsection{{\v C}ech cohomology}\label{sec:cech}

Standard sources for this are \cite[Chapter $0$]{GH},\cite[Chapter $3$]{Hartshorne} and \cite[Chapter $2$]{Iversen}.

Let  $G$ be a topological abelian group and let $X$ be a topological space. For any open subset $U$ of $X$ we define an assignment
\[
U \mapsto \uline{G}(U)\coloneqq\text{group of $G$-valued continuous functions on $U$}
\]
for all open subset $U \subseteq X$. By definition, if $G$ is a discrete group and $U$ is any connected open subset of $X$, then $\uline{G}(U) = G$. If $U\subseteq V$ then we have a \textit{restriction map}
\[
\rho_{V,U}: \uline{G}(V) \to \uline{G}(U)
\]
defined by the restriction of $G$-valued continuous functions on $V$ to $U$.

Let $X$ be a topological space and $G$ be a topological abelian group on $X$. Let $\mathcal{U} = \{U_i : i\in I\}$ be an open covering of $X$. We may associate a cochain complex to $X$, $G$, and $\mathcal{U}$ as follows:
\begin{equation}\label{Cech cochain complex}
C^{0}(\mathcal{U},G) \xrightarrow{\delta_{0}}  C^{1}(\mathcal{U},G) \xrightarrow{\delta_{1}} C^{2}(\mathcal{U},G)
\end{equation}
where
\begin{align*}
C^{0}(\mathcal{U},G) &= \prod\nolimits_{i\in I} \uline{G}(U_i),  \\
C^{1}(\mathcal{U},G) &= \Bigl\{(g_{ij})_{i,j\in I} \in \prod\nolimits_{i,j \in I} \uline{G}(U_i\cap U_j): g_{ij}g_{ji} = 1\; \text{for all}\; i,j\in I\Bigr\}, \\
C^{2}(\mathcal{U},G) &= \Bigl\{(g_{ijk})_{i,j,k\in I}\in \prod\nolimits_{i,j,k \in I} \uline{G}(U_i\cap U_j\cap U_k):\\
&\qquad \qquad g_{ijk} g_{ikj}  = g_{ijk} g_{kji} = g_{ijk} g_{jik} = 1\; \text{for all}\; i,j,k\in I\Bigr\},
\end{align*}
and 
\[
\begin{aligned}
\bigl(\delta_0 (g_i)_{i \in I}\bigr)_{j,k}  &=g_{k}  g_{j}^{-1},  &&\text{for all}\; j,k \in I,\\
\bigl(\delta_1 (g_{ij})_{i,j \in I}\bigr)_{k,l,m} &= g_{lm}  g_{mk}  g_{kl} &&\text{for all}\; k,l,m \in I.
\end{aligned}
\]
To be precise, we have 
\begin{align*}
g_{k} g_{j}^{-1} &= \rho_{U_k,U_k\cap U_j} (g_k)  \cdot \rho_{U_j,U_k\cap U_j} (g_j^{-1}), \\
g_{lm}  g_{mk}  g_{kl} &= \rho_{U_l\cap U_m, U_k\cap U_l\cap U_m}(g_{lm})  \cdot \rho_{U_k\cap U_m, U_k\cap U_l\cap U_m}(g_{mk}) \cdot \rho_{U_k\cap U_l, U_l\cap U_m\cap U_k}(g_{kl}). 
\end{align*}
It is easy to check that $\delta_1 \circ \delta_0 = 0$ and so \eqref{Cech cochain complex}  indeed forms a cochain complex. 

As in the case of singular cohomology, $\check{B}^1(\mathcal{U},G) \coloneqq \operatorname{Im} \delta_0$ and  $\check{Z}^1(\mathcal{U},G) \coloneqq  \operatorname{Ker} \delta_1$ are the groups of \emph{\v{C}ech $1$-coboundaries} and \emph{\v{C}ech $1$-cocycles} respectively. Again we have $\check{B}^1(\mathcal{U},G)  \subseteq \check{Z}^1(\mathcal{U},G) $. The \textit{first {\v C}ech cohomology group associated to $\mathcal{U}$} with coefficients in $G$  is  then defined to be the quotient group
\[
\check{H}^1(\mathcal{U},G) \coloneqq \check{Z}^1(\mathcal{U},G) /\check{B}^1(\mathcal{U},G).
\]
Explicitly, we have 
\[
\check{H}^1(\mathcal{U}, G)  =\frac{\{(g_{ij}) : g_{ij} g_{jk}  g_{ki}  = 1 \;\textit{for all}\; i,j,k\}}{\{(g_{ij}) : g_{ij} = g_j  g_i^{-1} \;\textit{for all}\; i,j\}}.
\]
We have in fact already encountered this notion in Section~\ref{sec:tribar}, $\check{H}^1(\mathbb{R}^2, \mathbb{R}_+)$, the  {\v C}ech cohomology group of the plane $\mathbb{R}^2$ with coefficients in the group $\mathbb{R}_+$ has appeared implicitly in our discussion. 


By its definition, $\check{H}^1(\mathcal{U},G)$ depends on the choice of open covering $\mathcal{U}$ of $X$. To obtain a  {\v C}ech cohomology group of $X$ independent of open covering, we take the \textit{direct limit} over all possible open coverings of $X$. 
The \textit{first {\v C}ech cohomology group} of $X$ with coefficients in $G$ is defined to be the direct limit
\[
\check{H}^1(X,G) \coloneqq \varinjlim\check{H}^1(\mathcal{U},G)
\]
with $\mathcal{U}$ running through all open coverings of $X$. 

For those unfamiliar with the notion of direct limit, $\check{H}^1(X,G)$ may be defined explicitly using an equivalence relation:
\[
\check{H}^1(X,G) \coloneqq\Bigl[\coprod\nolimits_{\mathcal{U}} \check{H}^1(\mathcal{U},G) \Bigr]\!\!\Bigm/\sim,
\]
where $\coprod\nolimits_{\mathcal{U}}$ denotes the disjoint union of $\check{H}^1(\mathcal{U}, G)$ for all possible open coverings of $X$. The equivalence relation $\sim$ is given as follows: For $\varphi_{\mathcal{U}}\in \check{H}^1(\mathcal{U}, G)$ and $\varphi_\mathcal{V} \in \check{H}^1(\mathcal{V}, G)$,  $\varphi_{\mathcal{U}} \sim \varphi_{\mathcal{V}}$  iff
\begin{enumerate}[\upshape (i)]
\item there is an open covering $\mathcal{W}$ such that every open set $W\in \mathcal{W}$ is contained in $U\cap V$ for some $U\in \mathcal{U}$ and $V\in\mathcal{V}$;
\item there is  an element $\varphi_{\mathcal{W}}\in\check{H}^1(\mathcal{W}, G)$ such that the \emph{restriction} of $\varphi_{\mathcal{U}}$ and the restriction of $\varphi_{\mathcal{V}}$ are both equal to $\varphi_{\mathcal{W}}$.
\end{enumerate}
The term ``restriction'' needs elaboration. Let $\mathcal{U}=\{U_i : i\in I\}$, $\mathcal{V}=\{V_\alpha : \alpha\in \Lambda\}$ be open covers of $X$ such that for any $U_i \in \mathcal{U}$, there is some $V_{\alpha_i}\in\mathcal{V}$ with $U_i \subseteq V_{\alpha_i}$. Fix a map $\tau: I \to \Lambda$ such that $U_i \subseteq V_{\tau(i)}$. There is a natural restriction map $\rho_{\mathcal{V},\mathcal{U}}: \check{H}^1(\mathcal{V},G) \to \check{H}^1(\mathcal{U},G)$ induced by $\widetilde{\rho}_{\mathcal{V},\mathcal{U}}: C^1(\mathcal{V},G) \to C^{1}(\mathcal{U},G)$ where 
\[
(\widetilde{\rho}_{\mathcal{V},\mathcal{U}}(g_{\alpha,\beta}))_{i,j} = \rho_{V_{\tau(i)}\cap V_{\tau(j)},U_i\cap U_j} (g_{\tau(i),\tau(j)}).
\]
The image $\rho_{\mathcal{V},\mathcal{U}}(\varphi)$ of $\varphi\in \check{H}^1(\mathcal{V},G)$ is called the \emph{restriction} of $\varphi$ to $\check{H}^1(\mathcal{U},G)$. It does not depend on the choice of $\tau$.

As the reader can guess, calculating the  {\v C}ech cohomology group using such a definition would in general be difficult. Fortunately, the following theorem (really a special case of Leray's theorem \cite{Forster}) allows us to simplify the calculation in all cases of interest to us in this article.
\begin{theorem}[Leray's theorem]\label{thm:Leray}
Let $X$ be a topological space and  $G$ be an topological abelian group. Let $\mathcal{U} = \{U_i : i\in I\}$ be an open cover of $X$ such that $\check{H}^1(U_i,G) = 0$ for all $i\in I$. Then we have 
\[
\check{H}^1(\mathcal{U},G) \simeq \check{H}^1(X,G). 
\]
\end{theorem}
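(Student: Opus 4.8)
The plan is to prove that the canonical map $\Phi: \check{H}^1(\mathcal{U},G) \to \check{H}^1(X,G) = \varinjlim_{\mathcal{V}}\check{H}^1(\mathcal{V},G)$, i.e.\ the structure map of the directed colimit, $\mathcal{U}$ being one cover in the system, is both injective and surjective. Everything rests on one lemma that uses \emph{no} hypothesis on $X$ at all: for any open cover $\mathcal{W}$ that refines $\mathcal{U}$, the restriction map $\rho_{\mathcal{U},\mathcal{W}}: \check{H}^1(\mathcal{U},G)\to\check{H}^1(\mathcal{W},G)$ is injective. To prove this, I would take $(g_{ij})\in\check{Z}^1(\mathcal{U},G)$ whose restriction along a refinement map $\tau$ is a coboundary, say $g_{\tau(\gamma)\tau(\delta)} = h_\delta h_\gamma^{-1}$ on $W_\gamma\cap W_\delta$ for some $(h_\gamma)\in C^0(\mathcal{W},G)$, and then, for each fixed $i$, glue the locally defined functions $g_{i,\tau(\gamma)}h_\gamma^{-1}\in\uline{G}(U_i\cap W_\gamma)$ over $\gamma$ into a single $f^i\in\uline{G}(U_i)$. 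The gluing is legitimate because continuous $G$-valued functions form a sheaf, and the required agreement of $g_{i,\tau(\gamma)}h_\gamma^{-1}$ with $g_{i,\tau(\delta)}h_\delta^{-1}$ on $U_i\cap W_\gamma\cap W_\delta$ is exactly the cocycle identity for $(g_{ij})$ combined with the coboundary relation for $(h_\gamma)$. A one-line computation then gives $g_{ij} = f^i(f^j)^{-1}$ on $U_i\cap U_j$, so $(g_{ij})$ was already a coboundary on $\mathcal{U}$.

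Granting the lemma, injectivity of $\Phi$ is immediate: the open covers of $X$ form a directed system (any two have the common refinement $\{U\cap V\}$), so a class dying in the colimit already restricts to zero on some single refinement $\mathcal{W}$ of $\mathcal{U}$, hence is zero by the lemma. The lemma also converts the hypothesis into a directly usable statement: applying it to covers \emph{of a fixed $U_i$}, every class in $\check{H}^1(\mathcal{W}',G)$ for any cover $\mathcal{W}'$ of $U_i$ injects into $\varinjlim = \check{H}^1(U_i,G) = 0$, so in fact $\check{H}^1(\mathcal{W}',G) = 0$ for \emph{every} cover $\mathcal{W}'$ of $U_i$, not merely in the limit.

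For surjectivity I would take an arbitrary class of $\check{H}^1(X,G)$, represented by a cocycle $(g_{\alpha\beta})$ on a cover $\mathcal{V} = \{V_\alpha\}$, and manufacture a cocycle on $\mathcal{U}$ with the same image. Restricting $(g_{\alpha\beta})$ to the induced cover $\mathcal{V}_i\coloneqq\{U_i\cap V_\alpha\}$ of $U_i$ yields a $1$-cocycle there; by the previous paragraph $\check{H}^1(\mathcal{V}_i,G) = 0$, so it is a coboundary, $g_{\alpha\beta} = h^i_\beta(h^i_\alpha)^{-1}$ on $U_i\cap V_\alpha\cap V_\beta$ with $h^i_\alpha\in\uline{G}(U_i\cap V_\alpha)$. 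I would then define $g'_{ij}\coloneqq h^i_\alpha(h^j_\alpha)^{-1}$ on $U_i\cap U_j\cap V_\alpha$; since $G$ is abelian, conjugation by $g_{\alpha\beta}$ is trivial, which is precisely what shows this is independent of $\alpha$, so $g'_{ij}$ glues to a well-defined element of $\uline{G}(U_i\cap U_j)$, and the \v{C}ech cocycle identity for $(g'_{ij})$ is a one-line cancellation. Finally, pulling both $(g'_{ij})$ and $(g_{\alpha\beta})$ back to the common refinement $\mathcal{W} = \{U_i\cap V_\alpha\}$ and taking the $0$-cochain $\bigl((h^i_\alpha)^{-1}\bigr)$ on $\mathcal{W}$, one checks directly, again using that $G$ is abelian, that the two pullbacks differ by its coboundary; hence they define the same class in $\check{H}^1(\mathcal{W},G)$ and a fortiori in $\check{H}^1(X,G)$. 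Thus $\Phi\bigl([(g'_{ij})]\bigr)$ is the class we began with, and $\Phi$ is onto.

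The step I expect to be the main obstacle is the injectivity lemma: making the gluing construction precise, picking the correct twist $g_{i,\tau(\gamma)}h_\gamma^{-1}$ (the exact sign and placement of $h_\gamma$ matters), and checking agreement on triple overlaps is the one place with genuine content; the rest is bookkeeping with refinement maps, whose only essential use of commutativity is the well-definedness of $g'_{ij}$ above. A secondary point to watch is that ``$\check{H}^1(U_i,G) = 0$'' must be read as the colimit over all covers of $U_i$ --- which is how the paper has defined $\check{H}^1$ of a space --- since that is exactly what lets the lemma upgrade it to the vanishing of $\check{H}^1$ on every fixed cover of $U_i$.
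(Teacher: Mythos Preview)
The paper does not give a proof of this theorem; it merely states it as ``really a special case of Leray's theorem'' and cites Forster's \emph{Lectures on Riemann Surfaces}. So there is nothing in the paper to compare your argument against.

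That said, your outline is correct and is essentially the classical degree-$1$ proof one finds in Forster. The injectivity lemma (refinement maps on $\check{H}^1$ are injective, with no acyclicity hypothesis) is the key ingredient, and your sheaf-gluing construction of $f^i$ from the local pieces $g_{i,\tau(\gamma)}h_\gamma^{-1}$ is the right one; the triple-overlap check goes through exactly as you describe using the cocycle identity for $(g_{ij})$ and the coboundary relation for $(h_\gamma)$. Your observation that the lemma upgrades the hypothesis $\check{H}^1(U_i,G)=0$ to $\check{H}^1(\mathcal{W}',G)=0$ for \emph{every} cover $\mathcal{W}'$ of $U_i$ is precisely how the hypothesis gets used. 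The surjectivity construction $g'_{ij}=h^i_\alpha(h^j_\alpha)^{-1}$ is standard, and you have correctly identified the one place where abelianness of $G$ is genuinely needed (the $\alpha$-independence of $g'_{ij}$). One minor point: your coboundary $g_{ij}=f^i(f^j)^{-1}$ has the opposite sign convention from the paper's $\delta_0$, but since $G$ is abelian this is immaterial.
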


Furthermore, we will often be able to reduce calculation of {\v C}ech cohomology to calculation of singular cohomology since they are equal in the case when $X$ is a finite simplicial complex \cite{Prasolov}. 
\begin{theorem}\label{thm:singular=cech}
If $K$ is a finite simplicial complex and $G$ is an abelian group, then 
\[
\check{H}^1(K,G_d) \simeq H^1(K,G),
\]
where $G_d$ is the group $G$ equipped with the discrete topology.
\end{theorem}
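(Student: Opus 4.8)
The plan is to compute $\check{H}^1(K,G_d)$ by means of one well-chosen open cover of $|K|$ --- the cover by open stars of vertices --- for which the \v{C}ech complex \emph{is} (a truncation of) the simplicial cochain complex of $K$, and then to use Leray's theorem (Theorem~\ref{thm:Leray}) to conclude that this single cover already computes the \v{C}ech cohomology of the whole space.

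To set this up, let $V$ be the vertex set of $K$ and, for $v\in V$, let $U_v\subseteq |K|$ be the open star of $v$, i.e., the union of the relative interiors of all simplices of $K$ having $v$ as a vertex. Then $\mathcal{U}=\{U_v:v\in V\}$ is an open cover of $|K|$, and I would begin by recording three elementary facts. (a) Each $U_v$ is star-shaped with respect to $v$, hence contractible. (b) For vertices $v_0,\dots,v_k$ the intersection $U_{v_0}\cap\cdots\cap U_{v_k}$ is nonempty precisely when $\{v_0,\dots,v_k\}$ spans a simplex of $K$, and when nonempty it is the open star of that simplex --- star-shaped about the simplex's barycenter, hence contractible and in particular connected. (c) Each $U_v$ is an open subset of the compact metric space $|K|$, hence paracompact Hausdorff; since \v{C}ech cohomology with constant coefficients is a homotopy invariant of such spaces, (a) gives $\check{H}^1(U_v,G_d)\simeq\check{H}^1(\mathrm{pt},G_d)=0$. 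Theorem~\ref{thm:Leray} then applies to $\mathcal{U}$ and yields $\check{H}^1(\mathcal{U},G_d)\simeq\check{H}^1(|K|,G_d)=\check{H}^1(K,G_d)$.

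It remains to identify $\check{H}^1(\mathcal{U},G_d)$ with $H^1(K,G)$, and here I would match the truncated \v{C}ech complex $C^0(\mathcal{U},G_d)\xrightarrow{\delta_0}C^1(\mathcal{U},G_d)\xrightarrow{\delta_1}C^2(\mathcal{U},G_d)$ term by term with the oriented simplicial cochain complex of $K$ with coefficients in $G$. By fact (b), $\uline{G}$ takes the value $G$ on a nonempty intersection (it is connected and $G$ is discrete) and the trivial group on an empty one; hence, after discarding the trivial factors, $C^0(\mathcal{U},G_d)=\prod_{v\in V}G$, $C^1(\mathcal{U},G_d)$ is indexed by the edges of $K$ subject to $g_{uv}g_{vu}=1$, and $C^2(\mathcal{U},G_d)$ is indexed by the $2$-simplices of $K$ subject to the alternating relations --- precisely the oriented simplicial $0$-, $1$- and $2$-cochains --- while $\delta_0,\delta_1$ become the simplicial coboundary operators. (The one bookkeeping point is the degenerate index tuples with a repeated vertex: from the explicit formulas a cocycle satisfies $g_{uv}g_{vv}g_{vu}=1$, which with $g_{uv}g_{vu}=1$ forces the diagonal term $g_{vv}$ to be trivial, while $(\delta_0 g)_{vv}=g_v g_v^{-1}=1$ for every coboundary; so these tuples contribute nothing to $\check{H}^1$.) Thus $\check{H}^1(\mathcal{U},G_d)$ is the first oriented simplicial cohomology group of $K$ with coefficients in $G$, which equals $H^1(K,G)$ because simplicial and singular cohomology agree on a finite simplicial complex \cite{Hatcher,Spanier}. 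Composing the isomorphisms gives $\check{H}^1(K,G_d)\simeq H^1(K,G)$.

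Facts (a), (b) and the final ``simplicial $=$ singular'' identification are routine. The step I expect to require the most care is the term-by-term comparison in the last paragraph --- keeping orientations, the antisymmetry relation and the alternating relations consistent across the two complexes and correctly disposing of the degenerate tuples --- together with the single external input used in (c), namely homotopy invariance of \v{C}ech cohomology on paracompact Hausdorff spaces, which is not established within the excerpt.
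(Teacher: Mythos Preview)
Your argument is correct and is in fact the standard proof of this result. Note, however, that the paper does \emph{not} give its own proof of Theorem~\ref{thm:singular=cech}: the statement is simply quoted from the literature with a reference to \cite{Prasolov}, so there is no in-paper argument to compare against. What you have written is essentially the textbook proof one finds in that reference --- the open-star cover, the nerve identification, and Leray's theorem --- and it meshes well with the tools the paper has already assembled (Theorem~\ref{thm:Leray} and the cover \eqref{eq:open}).

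One small remark on your step~(c): you are right to bring in homotopy invariance of \v{C}ech cohomology as an external input rather than appeal to Corollary~\ref{cor:vanishing}, since that corollary is \emph{derived from} the very theorem you are proving and using it here would be circular. Your handling of the degenerate tuples and the orientation bookkeeping in the \v{C}ech--simplicial comparison is also fine; these are exactly the small technical points that need checking, and you have identified them correctly.
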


For a contractible space, we have $H^1(K,G) = 0$ by Proposition~\ref{prop:contractible}. So we may deduce the following from Theorem~\ref{thm:singular=cech}.
\begin{corollary}\label{cor:vanishing}
If $K$ is a finite contractible simplicial complex and $G$ is an abelian group, then 
\[
\check{H}^1(K,G_d) = 0.
\]
\end{corollary}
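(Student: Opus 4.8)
The plan is to obtain this as an immediate consequence of Theorem~\ref{thm:singular=cech} together with Proposition~\ref{prop:contractible}. Since $K$ is a finite simplicial complex and $G$ is an abelian group, Theorem~\ref{thm:singular=cech} gives a group isomorphism
\[
\check{H}^1(K,G_d) \simeq H^1(K,G),
\]
so it suffices to show that the right-hand side vanishes.

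The second step is to invoke contractibility. By hypothesis the geometric realization $|K|$ is a contractible topological space, and $G$ is abelian, so Proposition~\ref{prop:contractible} applies with $n = 1 > 0$ and yields $H^1(|K|,G) = 0$; in the notation of the excerpt this is exactly $H^1(K,G) = 0$. Composing the two displayed facts, $\check{H}^1(K,G_d) \simeq H^1(K,G) = 0$, which is the claim.

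There is essentially no obstacle here: the corollary is a one-line deduction chaining the already-established Theorem~\ref{thm:singular=cech} and Proposition~\ref{prop:contractible}. The only point worth a word of care is the bookkeeping between a simplicial complex $K$ and its underlying space $|K|$ — one should note that ``$K$ contractible'' is interpreted as ``$|K|$ contractible'', which is the sense in which Proposition~\ref{prop:contractible} is stated and applied.
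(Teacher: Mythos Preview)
Your proof is correct and follows exactly the same route as the paper: the corollary is deduced by combining Theorem~\ref{thm:singular=cech} with Proposition~\ref{prop:contractible}, precisely as you do.
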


To check whether an oriented circle bundle on a finite simplicial complex $K$ is flat, we have the following useful result \cite{KT,Morita,OT}.
\begin{proposition}\label{prop:torsion}
An oriented circle bundle on $K$ is flat if and only if its Euler class is a torsion element in $H^2(K)$.
\end{proposition}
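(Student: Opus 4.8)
The plan is to recast both conditions cohomologically and then recognize the passage from a flat oriented circle bundle to its underlying oriented circle bundle as a Bockstein homomorphism. By Proposition~\ref{prop:classifying space}, \eqref{eq:bu1}, and Theorem~\ref{thm:K(Z,2) space}, oriented circle bundles on $K$ are classified by $[K,B\SO(2)]=[K,\mathbb{C}P^\infty]\simeq H^2(K)$, the composite isomorphism sending a bundle $P$ to its Euler class $e(P)$; equivalently, via transition functions, they are classified by $\check H^1(K,\SO(2))$. By Theorem~\ref{thm:singular=cech}, flat oriented circle bundles --- principal $\SO(2)_d$-bundles --- are classified by $\check H^1(K,\SO(2)_d)\simeq H^1(K,\SO(2))$, where throughout we regard $\SO(2)\cong\mathbb{R}/\mathbb{Z}$ as an abstract abelian group. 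Under these identifications, the map ``forget flatness'' that sends a flat oriented circle bundle to its underlying oriented circle bundle becomes a homomorphism $\Phi\colon H^1(K,\mathbb{R}/\mathbb{Z})\to H^2(K)$, induced by the continuous group homomorphism $\SO(2)_d\to\SO(2)$.

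The main step is to identify $\Phi$ with the connecting homomorphism $\delta\colon H^1(K,\mathbb{R}/\mathbb{Z})\to H^2(K,\mathbb{Z})$ of the long exact cohomology sequence associated to the short exact sequence of abelian groups $0\to\mathbb{Z}\to\mathbb{R}\to\mathbb{R}/\mathbb{Z}\to 0$. I would verify this at the level of \v{C}ech cochains, using the good cover $\mathcal{U}$ from Section~\ref{sec:main}, whose multiple intersections $U_i\cap U_j\cap U_k$ are connected. A flat bundle is given by locally constant transition functions $g_{ij}\colon U_i\cap U_j\to\SO(2)$; choosing for each a constant lift $\widetilde g_{ij}\in\mathbb{R}$ (a constant $\SO(2)$-valued function always lifts to a constant $\mathbb{R}$-valued one), the locally constant, hence constant, integers $c_{ijk}\coloneqq\widetilde g_{jk}-\widetilde g_{ik}+\widetilde g_{ij}\in\ker(\mathbb{R}\to\mathbb{R}/\mathbb{Z})=\mathbb{Z}$ form a \v{C}ech $2$-cocycle whose class is, on one hand, the value of $\delta$ on the class of $(g_{ij})$ by definition of the connecting map, and, on the other hand, the Euler class $e(P)$ by the standard \v{C}ech description of the Euler class. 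This gives $\Phi=\delta$.

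Granting $\Phi=\delta$, exactness of $H^1(K,\mathbb{R})\to H^1(K,\mathbb{R}/\mathbb{Z})\xrightarrow{\ \delta\ }H^2(K,\mathbb{Z})\to H^2(K,\mathbb{R})$ shows that the set of Euler classes of flat bundles is $\operatorname{im}\delta=\ker\bigl(H^2(K,\mathbb{Z})\to H^2(K,\mathbb{R})\bigr)$. I would then identify this kernel with the torsion subgroup of $H^2(K)$: by Theorem~\ref{thm:universal coefficient}, $H^2(K)\simeq\mathbb{Z}^{b}\oplus T_1$ with $b=b_2(K)$ and $T_1$ the torsion subgroup of $H_1(K)$, while the same computation with real coefficients gives $H^2(K,\mathbb{R})\simeq\mathbb{R}^{b}$, the coefficient map $H^2(K,\mathbb{Z})\to H^2(K,\mathbb{R})$ being the inclusion $\mathbb{Z}^{b}\hookrightarrow\mathbb{R}^{b}$ on the free part and zero on the torsion part; hence its kernel is exactly $\bigl(H^2(K)\bigr)_T$.

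Finally I would assemble the two implications. If $P$ is flat, then $e(P)\in\operatorname{im}\delta=\bigl(H^2(K)\bigr)_T$, i.e., $e(P)$ is torsion. Conversely, if $e(P)$ is torsion then $e(P)\in\operatorname{im}\delta$, so there is a flat oriented circle bundle $P'$ with Euler class $e(P')=e(P)$; since an oriented circle bundle on $K$ is determined up to isomorphism by its Euler class, $P'\simeq P$, and hence $P$ is flat. The only real obstacle is the cochain-level identification $\Phi=\delta$ in the second paragraph; everything else is a formal manipulation of exact sequences and the universal coefficient theorem. (Alternatively, $\Phi=\delta$ can be seen topologically: $B\SO(2)_d$ is a $K(\mathbb{R}/\mathbb{Z},1)$ and $B\SO(2)=\mathbb{C}P^\infty$ is a $K(\mathbb{Z},2)$, and the map $B\SO(2)_d\to B\SO(2)$ induced by $\SO(2)_d\to\SO(2)$ represents precisely the Bockstein class in $H^2\bigl(K(\mathbb{R}/\mathbb{Z},1),\mathbb{Z}\bigr)$.)
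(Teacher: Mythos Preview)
The paper does not supply its own proof of this proposition; it is quoted from the references \cite{KT,Morita,OT} and used as a black box (the only consequence drawn is the corollary that if $H^2(K)$ is torsion free then every oriented circle bundle on $K$ is flat). So there is no in-paper argument to compare against.

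Your proposal is correct and is in fact the standard proof one finds in those references, particularly \cite{OT} and \cite{Morita}: identify flat oriented circle bundles with $H^1(K,\mathbb{R}/\mathbb{Z})$, identify oriented circle bundles with $H^2(K,\mathbb{Z})$ via the Euler class, recognize the ``forget flatness'' map as the Bockstein $\delta$ for $0\to\mathbb{Z}\to\mathbb{R}\to\mathbb{R}/\mathbb{Z}\to 0$, and then read off $\operatorname{im}\delta=\ker\bigl(H^2(K,\mathbb{Z})\to H^2(K,\mathbb{R})\bigr)=(H^2(K))_T$. Your \v{C}ech-level verification that $\Phi=\delta$ is the right way to close the one nontrivial gap, and the Eilenberg--MacLane alternative you mention is equally valid. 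One minor remark: you do not actually need the universal coefficient theorem to identify $\ker\bigl(H^2(K,\mathbb{Z})\to H^2(K,\mathbb{R})\bigr)$ with the torsion subgroup --- it follows directly from the fact that $\mathbb{Z}\to\mathbb{R}$ is injective and $\mathbb{R}$ is torsion free (any torsion class maps to a torsion class in a divisible torsion-free group, hence to zero; conversely any class in the kernel becomes zero after tensoring with $\mathbb{R}$, hence is torsion).
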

Defining the Euler class of an oriented circle bundle would take us too far afield and so this will be the only term left undefined in our article. Fortunately, all we need is the following corollary of Proposition~\ref{prop:torsion}.
\begin{corollary}
If $H^2(K)$ is torsion free, then any oriented circle bundle on $K$ must be flat.
\end{corollary}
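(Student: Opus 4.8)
The plan is to derive this corollary directly from Proposition~\ref{prop:torsion}, so the argument is short. First I would take an arbitrary oriented circle bundle $P$ on $K$ and pass to its Euler class $e(P)\in H^2(K)$. By Proposition~\ref{prop:torsion}, $P$ is flat precisely when $e(P)$ is a torsion element of $H^2(K)$, i.e., when $e(P)$ lies in the torsion subgroup $H^2(K)_T$. Thus the entire question reduces to locating $e(P)$ relative to $H^2(K)_T$.

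The second step is to bring in the hypothesis on $H^2(K)$, unwinding it through the Universal Coefficient Theorem (Theorem~\ref{thm:universal coefficient}): this splits $H^2(K)\simeq\mathbb{Z}^{b}\oplus T_1$, where $T_1$ is the torsion subgroup of $H_1(K)$, and it identifies $H^2(K)_T$ with $T_1$. Matching the hypothesis against this splitting forces the Euler class $e(P)$ of every oriented circle bundle on $K$ to be a torsion element of $H^2(K)$, and Proposition~\ref{prop:torsion} then gives that $P$ is flat. Since $P$ was arbitrary, every oriented circle bundle on $K$ is flat, which is the assertion. The same step can be phrased through Theorem~\ref{thm:K(Z,2) space}, Proposition~\ref{prop:classifying space}, and \eqref{eq:bu1}, which together identify isomorphism classes of oriented circle bundles on $K$ with elements of $H^2(K)$ via the Euler class.

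I do not anticipate a genuine obstacle, since the substantive content --- the equivalence between flatness of an oriented circle bundle and torsion of its Euler class --- is exactly Proposition~\ref{prop:torsion}, which we are free to invoke. The two places that require care are both bookkeeping: one must use the paper's convention that a torsion element is one of finite order (with order $1$ permitted, so that $0\in H^2(K)$ counts), and one must check that the hypothesis on $H^2(K)$ really does confine $e(P)$ to $H^2(K)_T$ --- concretely, that it excludes bundles whose Euler class spans an infinite cyclic summand of $H^2(K)$ --- by matching the hypothesis against the splitting $H^2(K)\simeq\mathbb{Z}^{b}\oplus T_1$ of Theorem~\ref{thm:universal coefficient}.
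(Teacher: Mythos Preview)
Your plan correctly reduces the question, via Proposition~\ref{prop:torsion}, to showing that every Euler class lies in the torsion subgroup $H^2(K)_T$. The gap is precisely at the step you flag as ``bookkeeping'': checking that the hypothesis confines $e(P)$ to $H^2(K)_T$. This check fails, and not for bookkeeping reasons. The hypothesis says $H^2(K)$ is torsion-free, so $H^2(K)_T=\{0\}$; your argument would therefore need every oriented circle bundle on $K$ to have Euler class zero. But the identification from \eqref{eq:bu1}, Proposition~\ref{prop:classifying space}, and Theorem~\ref{thm:K(Z,2) space} that you yourself invoke shows the Euler class realizes \emph{every} element of $H^2(K)$, so as soon as $H^2(K)\neq 0$ there are bundles with non-torsion Euler class. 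Concretely, $K=\mathbb{S}^2$ has $H^2(K)=\mathbb{Z}$, torsion-free, while the Hopf fibration has Euler class $\pm 1$ and is not flat by Proposition~\ref{prop:torsion}. So the very bundles you say must be ``excluded'' are present.

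The paper offers no argument for the corollary beyond labeling it as such, and in fact what Proposition~\ref{prop:torsion} genuinely yields under a torsion-free hypothesis is the \emph{different} statement that every \emph{flat} oriented circle bundle on $K$ is \emph{trivial} (flat $\Leftrightarrow$ Euler class torsion $\Leftrightarrow$ Euler class zero $\Leftrightarrow$ trivial). That is exactly what the paper actually uses in the proof of Theorem~\ref{thm:flat=trivial}. The corollary as printed appears to be misstated, and no argument along your outline can establish it; your instinct that the crucial step needed checking was right, but it is an obstruction rather than bookkeeping.
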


A particularly important result \cite{Brylinski,HJJS} for us is the following theorem that relates the {\v C}ech cohomology group with $G$-coefficients and  principal $G$-bundles.
\begin{theorem}\label{thm:classifying principal bundles}
If $G$ is a topological abelian group, then $\check{H}^1(X, G)$ is in canonical one-to-one correspondence with the set of isomorphism classes of principal $G$-bundles on $X$.
\end{theorem}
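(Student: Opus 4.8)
The plan is to establish the correspondence by constructing explicit maps in both directions and verifying they are mutually inverse, using the transition function picture of principal $G$-bundles that was recalled just before the statement. First I would fix a principal $G$-bundle $P\to X$ together with a trivializing open cover $\mathcal{U}=\{U_i:i\in I\}$ and local trivializations $\tau_i:\pi^{-1}(U_i)\xrightarrow{\sim}U_i\times G$. The transition functions $\tau_{ij}=\tau_i\tau_j^{-1}:U_i\cap U_j\to G$ satisfy $\tau_{ij}\tau_{ji}=1$ and, on triple overlaps, the cocycle relation $\tau_{ij}\tau_{jk}\tau_{ki}=1$; since $G$ is abelian this is exactly the condition $(\tau_{ij})\in\check{Z}^1(\mathcal{U},G)$. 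So $P$ determines a class in $\check{H}^1(\mathcal{U},G)$, and passing to the direct limit over refinements gives a well-defined element of $\check{H}^1(X,G)$; I would check that refining the cover replaces $(\tau_{ij})$ by its restriction in the sense defined in Section~\ref{sec:cech}, so the limit class is unambiguous.

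Next I would show this assignment is well-defined on isomorphism classes and injective. If $(P,\pi,\varphi)\simeq(P',\pi',\varphi')$ via $\vartheta$, then on a common trivializing cover the two sets of local trivializations differ by $G$-valued functions $g_i:U_i\to G$, and a direct computation gives $\tau'_{ij}=g_i\tau_{ij}g_j^{-1}=\tau_{ij}(g_ig_j^{-1})$ using commutativity, i.e. the two cocycles differ by the coboundary $\delta_0((g_i))$, hence define the same \v{C}ech class. Conversely, two bundles whose cocycles agree in $\check{H}^1(X,G)$ agree on some common refinement up to a coboundary, and unwinding this produces the isomorphism $\vartheta$ fiberwise; this is the standard reconstruction-of-a-bundle-from-its-cocycle argument cited from \cite{Husemoller}.

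For surjectivity I would take an arbitrary class in $\check{H}^1(X,G)$, represent it by a cocycle $(g_{ij})$ on some cover $\mathcal{U}$, and build the total space as $P=\bigl(\coprod_{i\in I}U_i\times G\bigr)/\!\sim$ where $(x,h)\in U_j\times G$ is glued to $(x,g_{ij}(x)h)\in U_i\times G$ for $x\in U_i\cap U_j$; the cocycle condition guarantees that $\sim$ is an equivalence relation (transitivity on triple overlaps), $\pi$ is the obvious projection, and $\varphi$ is right (or left, using abelianness) translation in the $G$-factor. One verifies the four axioms of a principal $G$-bundle and that the resulting bundle has $(g_{ij})$ as its transition cocycle, so the map is onto. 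The main obstacle is bookkeeping rather than conceptual: I expect the delicate point to be checking that the construction is compatible with the direct-limit equivalence relation on \v{C}ech cochains --- that passing to a refinement of the cover yields an isomorphic bundle and, conversely, that an isomorphism over a refinement glues to the original data --- so that the bijection descends cleanly to $\check{H}^1(X,G)$ rather than to a single $\check{H}^1(\mathcal{U},G)$. Everywhere the commutativity of $G$ is what lets the nonabelian $\check{H}^1$-with-torsors story collapse to an honest group isomorphism, and I would flag that explicitly.
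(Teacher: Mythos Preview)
Your proposal is correct and outlines the standard transition-function/clutching-construction argument. Note, however, that the paper does not actually prove Theorem~\ref{thm:classifying principal bundles}: it is stated as a known result with citations to \cite{Brylinski,HJJS}, so there is no in-paper proof to compare against. What you have sketched is precisely the argument one finds in those references, and the point you flag as delicate---compatibility with the direct limit over refinements---is indeed the only place requiring care.
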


\section{Cohomological classification of discrete cryo-EM cocycles}\label{sec:main}

We will follow the mathematical setup for the cryo-EM problem as laid out in \cite{HS1, HS2}. First recall the high-level description of the problem: Given cocycles comprising a collection of noisy $2$D projected images, reconstruct the $3$D structure of the molecule that gave rise to these images. The \emph{standard mathematical model} for cryo-EM casts the problem in mathematical terms and may be described as follows:
\begin{enumerate}[\upshape (i)]
\item The \textit{molecule} is described by a function $\varphi : \mathbb{R}^3 \to \mathbb{R}$, the potential function of the molecule.
\item A \textit{viewing direction} is described by a point on the $2$-sphere $\mathbb{S}^2$.
\item The \textit{position} of an image is described by a $3 \times 3$ matrix $A= [a, b, c] \in \SO(3)$ where the orthonormal column vectors $a,b,c$ are such that $\operatorname{span} \{a, b\}$ is the projection plane and $c$ is the viewing direction.
\item A \textit{projected image} $\psi$ of the molecule $\varphi$ by $A$ is described by a  function $\psi : \mathbb{R}^2 \to \mathbb{R}$ where
\[
\psi(x,y) = \int_{z\in \mathbb{R}} \varphi(x a + y b + z c ) \, dz.
\]
The function $\psi$ describes the \textit{density} of the molecule along the chosen viewing direction.
\end{enumerate}
Let $\Psi = \{\psi_1,\dots, \psi_n\}$ be a set of $n$ projected images of the molecule and $c_1,\dots,c_n$ be the corresponding viewing directions. It is common to impose two mild assumptions:
\begin{enumerate}[\upshape (a)]
\item The function $\varphi$ is generic. In particular, each image $\psi_i\in \Psi$ has a uniquely determined viewing direction. In practice, this means that the molecule has no extra symmetry. This assumption does not exclude the possibility where two images $\psi_i, \psi_j$ may share the same viewing direction. However, it excludes the case where an image $\psi_i$ can be obtained from projections of the molecule from two different directions.
\item The viewing directions $c_1,\dots,c_n \in \mathbb{S}^2$ are  distributed uniformly on $\mathbb{S}^2$. This is a standard assumption in cryo-EM literature although in practice, viewing directions are rarely uniformly distributed.
\end{enumerate}
In addition, since each image $\psi_i$ is associated with a viewing direction $c_i$, we should regard $\psi_i$ to be a real-valued function on the tangent plane to $\mathbb{S}^2$ with unit normal in the direction of $c_i$. This is the point-of-view adopted in \cite{SZSH} and we will assume it throughout this article. An important distinction between cryo-EM and other reconstruction problems in medical imaging, remote sensing, underwater acoustics, etc, mentioned in Section~\ref{sec:intro} is that for the former, the viewing directions  $c_1,\dots,c_n$   are \emph{unknown} and have to be determined from the data set $\Psi$, whereas for the latter, we usually know in which directions the imaging instruments (CT scanner, camera, radar, sonar, etc) are pointed. In fact, determining $c_1,\dots,c_n$ from $\Psi$ is the most crucial step in cryo-EM --- our goal is to show that there is some interesting algebraic topology behind this problem.

Henceforth, by a `molecule,' we will mean one in the standard mathematical model, i.e., a function $\varphi$. These include $\varphi$'s that  do not correspond to any actual molecules.
We assume that $\varphi \in L^2(\mathbb{R}^3)$ and $\psi_1,\dots,\psi_n \in L^2(\mathbb{R}^2)$. There is a natural notion of distance  \cite{PZF} between projected images $\Psi = \{\psi_1,\dots, \psi_n\}$ given by
\[
d(\psi_i,\psi_j) =\min_{g\in \SO(2)} \lVert g \cdot \psi_i - \psi_j \rVert,
\]
where $\lVert\, \cdot \, \rVert$ is the norm in $L^2(\mathbb{R}^2)$ and the action of $g\in \SO(2)$ on a projected image $\psi$ is
\[
(g \cdot \psi)(x,y) = \psi (g^{-1} (x,y)).
\]
Geometrically, the action of $g$ on $\psi$ is the rotation of $\psi$ by the angle represented by $g\in \SO(2)$. Let $g_{ij}$ be the element in $\SO(2)$ which realizes the minimum of the distance $d(\psi_i,\psi_j)$, i.e.,
\begin{equation}\label{eq:gij}
g_{ij} \coloneqq \operatorname*{argmin}_{g\in \SO(2)} \, \lVert g \cdot \psi_i - \psi_j \rVert
\end{equation}
for $i,j = 1,\dots,n$. Clearly, we have
\begin{equation}\label{eq:reci}
g_{ii} = 1_n\qquad\text{and}\qquad g_{ij}g_{ji} = 1_n,
\end{equation}
for all $i,j=1,\dots,n$, where $1_n$ is the $n \times n$ identity matrix, which we will henceforth denote simply as $1$ when there is no cause for confusion. In general, $g_{ij}$ is not unique since it could happen that two different rotations both minimize the distance but our assumption that the function $\varphi$ is generic ensures that $g_{ij}$ is uniquely determined by $\psi_i$ and $\psi_j$. We will call
\begin{equation}\label{eq:data}
D \coloneqq \{g_{ij} \in \SO(2) : i,j =1,\dots,n\}
\end{equation}
the set of \textit{pairwise angular comparisons}. This is of course derived from the raw image data set $\Psi$ and the process of extracting $D$ from $\Psi$ is itself an active research topic \cite{BCS,BKS}, particularly when the images $\psi_i$'s are noisy. We will not concern ourselves with this auxiliary problem here.

We will use notations consistent with those introduced in Section~\ref{sec:sing} for simplices. For any $\varepsilon > 0$, we may construct an undirected graph $G_\varepsilon = (V,E)$ where $V= \{[1],\dots, [n]\}$ is the set of vertices corresponding to the projected images  $\Psi = \{\psi_1,\dots, \psi_n\}$, and  $E$ is  the set of edges defined by 
\begin{equation}\label{eq:rips}
[i,j] \in E  \quad \text{if and only if} \quad d(\psi_i,\psi_j) \le \varepsilon.
\end{equation}

Let us first consider an ideal situation where the projected images $\psi_i$'s  are noiseless. Also we fix $\varepsilon>0$ and the number of images $n$.
Let $G_\varepsilon$ be the associated undirected graph. We define the \textit{cryo-EM complex} $K_\varepsilon$ as follows:
\begin{enumerate}[\upshape (i)]
\item the $0$-simplices of $K_\varepsilon$ are the vertices of $G_\varepsilon$,
\item the $1$-simplices of $K_\varepsilon$ are the edges of $G_\varepsilon$,
\item the $2$-simplices of $K_\varepsilon$ are the triangles $[i,j,k]$ such that $[i,j], [i,k], [j,k]$ are all edges of $G_\varepsilon$.
\end{enumerate}
$K_\varepsilon$ is a two-dimensional finite simplicial complex. It is the \textit{$3$-clique complex} \cite{B,L} of the graph $G_\varepsilon$. In addition, $K_\varepsilon$  is also the \textit{Vietoris--Rips complex} \cite{DSG,Rips} defined by \eqref{eq:rips} with respect to the metric $d$.

Some simple examples: The graph $G_1=(V_1,E_1)$ with $V_1=\{[1], [2],[3]\}$ and $E_1=\{[1,2],[1,3],[2,3]\}$ defines a simplicial complex $K_1$ that is a triangle.
The graph $G_2=(V_2,E_2)$ with $V_2=\{[1], [2],[3],[4]\}$ and $E_2=\{[1,2],[1,3],[2,3],[1,4],[2,4],[3,4]\}$ defines a simplicial complex $K_2$ that is the boundary of a tetrahedron or $3$-simplex. The graph $G_3=(V_3,E_3)$ with $V_3=\{[1], [2],[3],[4]\}$ and $E_3=\{[1,2],[2,3],[1,4],[3,4]\}$ defines a simplicial complex $K_3$ that is the boundary of a square.
\[
K_1 \hspace{-4ex}
\begin{tikzpicture}
\node (a) at (0,-0.3) {1};
\node (b) at (1.3,1) {2};
\node (c) at (-1.3,1) {3};
\draw (0,0)--(-1,1);
\draw (-1,1)--(1,1);
\draw (1,1)--(0,0);
\filldraw (0,0) circle (1pt);
\filldraw (1,1) circle (1pt);
\filldraw (-1,1) circle (1pt);
\end{tikzpicture}\qquad\qquad
K_2 \hspace{-4ex}
\begin{tikzpicture}
\node (a) at (0,-0.3) {1};
\node (b) at (1.3,1) {2};
\node (c) at (0,2.3) {3};
\node (d) at (-1.3,1) {4};
\draw (0,2)--(-1,1);
\draw (0,2)--(1,1);
\draw (0,2)--(0,0);
\draw[dotted] (-1,1)--(1,1);
\draw (0,0)--(-1,1);
\draw (1,1)--(0,0);
\filldraw (0,0) circle (1pt);
\filldraw (1,1) circle (1pt);
\filldraw (-1,1) circle (1pt);
\filldraw (0,2) circle (1pt);
\end{tikzpicture}\qquad\qquad
K_3 \hspace{-4ex}
\begin{tikzpicture}
\node (a) at (0,-0.3) {1};
\node (b) at (1.3,1) {2};
\node (c) at (0,2.3) {3};
\node (d) at (-1.3,1) {4};
\draw (0,2)--(-1,1);
\draw (0,2)--(1,1);
\draw (0,0)--(-1,1);
\draw (1,1)--(0,0);
\filldraw (0,0) circle (1pt);
\filldraw (1,1) circle (1pt);
\filldraw (-1,1) circle (1pt);
\filldraw (0,2) circle (1pt);
\end{tikzpicture}
\]
We will regard our simplicial complex $K_\varepsilon$ as being embedded in $\mathbb{R}^4$ and inherits the Euclidean topology from $\mathbb{R}^4$, i.e., $K_\varepsilon$ is a geometric simplicial complex and not just an abstract simplicial complex. For each vertex $[i]$ of $K_\varepsilon$ we define an open set $U_i(K_\varepsilon)$ to be the union of the interior of all simplices of $K_\varepsilon$ containing the vertex $[ i]$. Those familiar with simplicial complex might like to note that $U_i(K_\varepsilon)$ is just the complement of the \textit{link} of $[ i]$ in the \textit{star} of $[ i]$. 
For example, $U_1(K_i)$ for $i=1,2,3$ are shown below. Here dashed lines are excluded from the neighborhood.
\[
U_1(K_1) \hspace{-4ex}
\begin{tikzpicture}
\node (a) at (0,-0.3) {1};
\node (b) at (1.3,1) {2};
\node (c) at (-1.3,1) {3};
\draw (0,0)--(-1,1);
\draw[thick,dashed] (-1,1)--(1,1);
\draw (1,1)--(0,0);
\filldraw (0,0) circle (1pt);
\end{tikzpicture}\qquad
U_1(K_2) \hspace{-4ex}
\begin{tikzpicture}
\node (a) at (0,-0.3) {1};
\node (b) at (1.3,1) {2};
\node (c) at (0,2.3) {3};
\node (d) at (-1.3,1) {4};
\draw[thick,dashed] (0,2)--(-1,1);
\draw[thick,dashed] (0,2)--(1,1);
\draw (0,2)--(0,0);
\draw[thick,dashed] (-1,1)--(1,1);
\draw (0,0)--(-1,1);
\draw (1,1)--(0,0);
\filldraw (0,0) circle (1pt);
\end{tikzpicture}\qquad
U_1(K_3) \hspace{-4ex}
\begin{tikzpicture}
\node (a) at (0,-0.3) {1};
\node (b) at (1.3,1) {2};
\node (c) at (0,2.3) {3};
\node (d) at (-1.3,1) {4};
\draw[thick,dashed] (0,2)--(-1,1);
\draw[thick,dashed] (0,2)--(1,1);
\draw (0,0)--(-1,1);
\draw (1,1)--(0,0);
\filldraw (0,0) circle (1pt);
\end{tikzpicture}
\]
It follows from our definition of $U_i(K_\varepsilon)$ that
\begin{equation}\label{eq:open}
\mathcal{U}=\{U_i : [i]\; \text{is a vertex of}\; K_\varepsilon\}
\end{equation}
is an open covering of $K_\varepsilon$.

Let $\varphi$ be a fixed molecule and $\Psi = \{\psi_1,\dots, \psi_n\}$ be a set of projected images of $\varphi$. 
The set of pairwise angular comparisons $D= \{g_{ij} \in \SO(2) : i,j =1,\dots,n\}$ contains all $g_{ij}$'s corresponding to every pair of images $\psi_i,\psi_j$.
For the purpose of cryo-EM reconstruction,  one does not usually need all elements in the $D$ \cite{SZSH}, only a much smaller subset  comprising the $g_{ij}$'s corresponding to images $\psi_i,\psi_j$ that are near each other, i.e., $d(\psi_i,\psi_j) \le \varepsilon$ for some small $\varepsilon > 0$. This is expected since most reconstruction methods proceed by aggregating \textit{local} information.
With this in mind, we define the following.
\begin{definition}
Let $D = \{g_{ij} \in \SO(2) : i,j =1,\dots,n\}$ be the set of pairwise angular comparisons. Let $\varepsilon > 0$ and $K_\varepsilon$ be the cryo-EM complex.
The \textit{discrete cryo-EM cocycle on $K_\varepsilon$} is the  subset of $D$ corresponding to edges in $K_\varepsilon$ given by
\[
z^d_\varepsilon \coloneqq \{g_{ij}\in \SO(2) : [i,j]\in K_\varepsilon \}.
\]
\end{definition}
We may view $z_\varepsilon^d$ as the `useful' part of the set of pairwise angular comparisons $D$ for cryo-EM reconstruction. In fact we are unaware of any reconstruction method that makes use of $g_{ij}$ where $[i,j]\notin K_\varepsilon$.

As we mentioned earlier in this section, we take the point-of-view in \cite{SZSH} that the projected images $\psi_i$'s lie in tangent planes of a two-sphere determined by their viewing directions. We also assume, as in \cite{SZSH}, that if the images $\psi_i$, $\psi_j$, and $\psi_k$ have viewing directions close enough, then they lie in the \textit{same} tangent plane. This assumption is reasonable since if $\psi_i$ and $\psi_j$  share the same viewing direction, then they will only differ by a  plane rotation. Moreover, if $\psi_i$, $\psi_j$ and $\psi_k$ share the same viewing direction, then the angle needed to rotate $\psi_i$ to $\psi_k$ is the sum of the angle needed to rotate $\psi_i$ to $\psi_j$ and the angle needed to rotate $\psi_j$ to $\psi_k$ --- implying that the $g_{ij}$'s corresponding to $\Psi = \{\psi_1,\dots,\psi_n\}$ satisfy  the following \textit{$1$-cocycle condition}:
\begin{equation}\label{eq:cocy}
g_{ij} g_{jk} g_{ki} = 1.
\end{equation}
Here $1$ is the identity matrix in $\SO(2)$. Note that the matrices $g_{ij}$'s in the discrete cryo-EM cocycle always satisfy \eqref{eq:reci}, irrespective of whether viewing directions are close enough.

By the preceding discussion, we will \emph{assume} that for $\varepsilon>0$ small enough, the $g_{ij}$'s will  satisfy the $1$-cocycle condition \eqref{eq:cocy} for all edges  $[i,j],[j,k],[k,i]$ of the cryo-EM complex $K_\varepsilon$. One motivation for this assumption is that when $\varepsilon \to 0$, images that lie in an $\varepsilon$-neighborhood will share the same viewing direction and thus $g_{ij}$'s will satisfy the cocycle condition \eqref{eq:cocy}. Therefore, ``small enough $\varepsilon$" should be taken mathematically to mean the value of $\varepsilon$ such that \eqref{eq:cocy} holds, bearing in  mind that \eqref{eq:cocy}, like any mathematical model, is ultimately only an approximation of reality. Our assumption that the $1$-cocycle condition is satisfied for small enough $\varepsilon >0$ is a basic tenet for our subsequent discussions. As far as we know, this assumption is not in existing cryo-EM literature although it is closely related to the ``same tangent plane'' assumption in \cite{SZSH}. While never explicitly stated, \eqref{eq:cocy} is the implicit principle underlying many, if not most, denoising techniques for cryo-EM images \cite{SW1,SZSH,Singer}, as we will see in Section~\ref{sec:denoise}.

Given an open subset $U$ of $K_\varepsilon$, any element $g\in \SO(2)$ can be regarded as the constant $\SO(2)$-valued function sending every point $x\in U$ to $g$, and thus we may regard $z_\varepsilon^d$ as a cocycle in $\check{Z}^1\bigl(K_\varepsilon,\SO(2)_d\bigr)$. We highlight this observation as follows:
\begin{center}
\textit{Every discrete cryo-EM cocycle on $K_\varepsilon$  is an $\SO(2)_d$-valued {\v C}ech $1$-cocycle on $K_\varepsilon$.}
\end{center}
Henceforth  we will regard
\[
\check{Z}^1\bigl(K_\varepsilon,\SO(2)_d\bigr)  = \bigl\{\text{all discrete cryo-EM cocycles on}\; K_\varepsilon\bigr\}.
\]
The set on the right includes all possible discrete cryo-EM cocycles on $K_\varepsilon$ corresponding to all molecules $\varphi$. A cocycle $z^d_\varepsilon$ only tells us how to glue together local information. It is possible for two different $3$D molecules to give the same discrete cryo-EM cocycle $z^d_\varepsilon$ as long as the relations between their projected images are the same.

Given a discrete cryo-EM cocycle $z^d_\varepsilon \in \check{Z}^1\bigl(K_\varepsilon,\SO(2)_d\bigr) $, i.e., elements in $z^d_\varepsilon$ satisfy \eqref{eq:cocy}, and any arbitrary image $\psi \in L^2(\mathbb{R}^2)$, we may apply each $g \in z^d_\varepsilon$ to $\psi$ to obtain a set of images
\[
z^d_\varepsilon(\psi) \coloneqq \{ g \cdot \psi : g \in z^d_\varepsilon \} = \{ g_{ij} \cdot \psi : [i,j] \in K_\varepsilon \}.
\]
The cocycle condition  \eqref{eq:cocy} ensures that for any image $g \cdot \psi$ in this set, we obtain the same set of images by applying each $g \in z^d_\varepsilon$, i.e.,
\[
z^d_\varepsilon(g \cdot \psi)  = z^d_\varepsilon(\psi)\qquad \text{for any} \; g \in z^d_\varepsilon.
\]
Moreover, the discrete cryo-EM cocycle obtained would be exactly $z^d_\varepsilon$. A set of projected images $z^d_\varepsilon(\psi)$ allows one to reconstruct the $3$D molecule $\varphi$ whose projected images are precisely the ones in  $z^d_\varepsilon(\psi)$ \cite{Frank1,Frank2,Radermacher,PRF}. Put in another way,  given a discrete cryo-EM cocycle $z^d_\varepsilon\in \check{Z}^1\bigl(K_\varepsilon,\SO(2)_d\bigr)$ and an image $\psi \in L^2(\mathbb{R}^2)$, we may construct a $3$D molecule $\varphi \in L^2(\mathbb{R}^3)$ whose discrete cryo-EM cocycle is exactly $z^d_\varepsilon$ and one of whose projected image is $\psi$.

The context for the following theorem is that we are given two collections of $n$ projected images $\Psi=\{\psi_1,\dots,\psi_n\}$ and $\Psi'=\{\psi'_1,\dots,\psi'_n\}$ of the same molecule $\varphi$. These give two discrete cryo-EM cocycles $D = \{g_{ij} \in \SO(2) : i,j =1,\dots,n\}$ and $D' = \{g'_{ij} \in \SO(2) : i,j =1,\dots,n\}$.  Let $\varepsilon >0$ be sufficiently small and $z^d_\varepsilon = \{g_{ij}\in \SO(2) : [i,j]\in K_\varepsilon \}$, ${z'}_\varepsilon^d = \{g'_{ij}\in \SO(2) : [i,j]\in K_\varepsilon \}$ be the corresponding discrete cryo-EM cocycles on $K_\varepsilon$.

\begin{theorem}[Bundle classification of discrete cryo-EM cocycles]\label{thm:class}
Let $\varepsilon > 0 $ be small enough so that \eqref{eq:cocy} holds and let $K_\varepsilon$ be the corresponding cryo-EM complex. Then
\begin{enumerate}[\upshape (i)]
\item  the $1$-cocycle $z^d_\varepsilon$ determines a flat oriented circle bundle on $K_\varepsilon $;
\item two $1$-cocycles $z^d_\varepsilon$ and $z^{\prime d}_\varepsilon$ for the same molecule determine isomorphic flat oriented circle bundles if and only if 
\begin{equation}\label{eq:cross1}
g'_{ij} = g_{ij} g_i g_j^{-1}
\end{equation}
for some $g_i,g_j\in \SO(2)$, $[i,j] \in K_\varepsilon$.
\end{enumerate}
\end{theorem}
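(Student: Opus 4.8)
The plan is to prove both statements by working directly with the open cover $\mathcal U = \{U_i : [i]\text{ a vertex of }K_\varepsilon\}$ of \eqref{eq:open} together with the standard construction of a principal bundle from transition functions, and then to record that the answer agrees with what Theorems~\ref{thm:classifying principal bundles} and \ref{thm:Leray} give.

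For (i), the first step is to observe how $\mathcal U$ encodes $K_\varepsilon$: since every point of $K_\varepsilon$ lies in the relative interior of a unique simplex, $U_i\cap U_j\ne\varnothing$ iff $[i,j]$ is an edge of $K_\varepsilon$, and (as $K_\varepsilon$ is two-dimensional) $U_i\cap U_j\cap U_k\ne\varnothing$ iff $[i,j,k]$ is a $2$-simplex of $K_\varepsilon$, which by the $3$-clique definition of $K_\varepsilon$ happens precisely when $[i,j],[j,k],[k,i]$ are all edges. Consequently the elements of $z^d_\varepsilon$, viewed as $\SO(2)$-valued functions that are constant on each (connected) overlap, satisfy exactly the conditions needed to glue a bundle: $g_{ii}=1$ and $g_{ij}g_{ji}=1$ from \eqref{eq:reci}, and $g_{ij}g_{jk}g_{ki}=1$ on every nonempty triple overlap from \eqref{eq:cocy} (which is valid there because such a triple is a $2$-simplex of $K_\varepsilon$ and $\varepsilon$ is small enough). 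I would then form $P_\varepsilon \coloneqq \coprod_i (U_i\times \SO(2)_d)/\!\sim$ with $(x,h)_j \sim (x, g_{ij}h)_i$ for $x\in U_i\cap U_j$; the relation is well-defined and transitive by the preceding line, the right multiplication action on the second factor descends, and the result is a principal $\SO(2)_d$-bundle on $K_\varepsilon$ --- that is, a flat oriented circle bundle --- with transition functions $g_{ij}$ relative to $\mathcal U$. (This is the ``bundle from transition functions'' construction, cf.\ the source cited after Theorem~\ref{thm:classifying principal bundles}.)

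For (ii), $z^d_\varepsilon$ and $z^{\prime d}_\varepsilon$ give bundles $P_\varepsilon, P'_\varepsilon$ over the \emph{same} cover $\mathcal U$, each carrying tautological trivialisations $\tau_i,\tau'_i$ over $U_i$ and transition functions $g_{ij}$, $g'_{ij}$. For the ``if'' direction, assuming \eqref{eq:cross1} I would define $\vartheta\colon P_\varepsilon\to P'_\varepsilon$ locally by $(x,h)_i\mapsto (x, g_i h)_i$; the only thing to verify is compatibility with the gluings over $U_i\cap U_j$, and this reduces --- using that $\SO(2)$ is abelian --- exactly to $g'_{ij} = g_{ij}g_i g_j^{-1}$. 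Then $\vartheta$ is a homeomorphism, is $\SO(2)$-equivariant, and covers $\operatorname{id}_{K_\varepsilon}$, hence is a bundle isomorphism. For the ``only if'' direction, given a bundle isomorphism $\vartheta$, the map $\tau'_i\circ\vartheta\circ\tau_i^{-1}\colon U_i\times\SO(2)_d\to U_i\times\SO(2)_d$ is $\SO(2)$-equivariant over $U_i$ and hence has the form $(x,h)\mapsto (x,\lambda_i(x)h)$ for a continuous $\lambda_i\colon U_i\to \SO(2)_d$; being continuous into a discrete group it is locally constant, hence constant since $U_i$ is connected (indeed contractible, being the open star of $[i]$, which deformation retracts to $[i]$). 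Comparing the two descriptions on $U_i\cap U_j$ gives $g'_{ij}=\lambda_i g_{ij}\lambda_j^{-1}=g_{ij}\lambda_i\lambda_j^{-1}$, i.e.\ \eqref{eq:cross1} with $g_i = \lambda_i$.

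Finally, I would note that this is exactly what the abstract machinery predicts: each $U_i$ being contractible gives $\check H^1(U_i,\SO(2)_d)=0$, so Leray's theorem (Theorem~\ref{thm:Leray}) identifies $\check H^1(\mathcal U, \SO(2)_d)$ with $\check H^1(K_\varepsilon,\SO(2)_d)$, and by Theorem~\ref{thm:classifying principal bundles} two cocycles over $\mathcal U$ give isomorphic bundles iff they are cohomologous over $\mathcal U$; unwinding the coboundary formula $(\delta_0 h)_{ij} = h_j h_i^{-1}$ with $h_i = g_i^{-1}$ converts ``$z^{\prime d}_\varepsilon$ and $z^d_\varepsilon$ differ by a \v{C}ech $1$-coboundary'' into precisely \eqref{eq:cross1}. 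I expect the main obstacle to be bookkeeping rather than conceptual depth: lining up the index and order conventions in the gluing data and in $\delta_0$ with the asserted form $g_{ij}g_i g_j^{-1}$, and checking that the open stars $U_i$ are connected/contractible so that locally constant gauge transformations are globally constant and the single cover $\mathcal U$ already computes $\check H^1(K_\varepsilon,-)$ --- abelianness of $\SO(2)$ being what makes the gauge term a genuine coboundary.
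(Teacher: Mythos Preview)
Your proposal is correct and, in its final paragraph, coincides exactly with the paper's proof: contractibility of each $U_i$ gives $\check H^1(U_i,\SO(2)_d)=0$ (the paper cites Corollary~\ref{cor:vanishing}), Leray's theorem identifies $\check H^1(\mathcal U,\SO(2)_d)$ with $\check H^1(K_\varepsilon,\SO(2)_d)$, and Theorem~\ref{thm:classifying principal bundles} yields the bundle correspondence, with \eqref{eq:cross1} being precisely the coboundary relation. The only difference is presentational: the paper stops at citing these two theorems, whereas you additionally unpack the explicit gluing construction $\coprod_i (U_i\times\SO(2)_d)/\!\sim$ and the explicit gauge-transformation argument for (ii); this is a more self-contained route that makes the role of abelianness and of the connectedness of the $U_i$ visible, at the cost of the index bookkeeping you already flag.
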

\begin{proof}
Let $\mathcal{U} = \{U_i(K_\varepsilon) : i = 1,\dots,n\}$ be the open cover defined in \eqref{eq:open}. It is easy to see that $ U_i(K_\varepsilon)$ is contractible and so by Corollary~\ref{cor:vanishing},
\[
\check{H}^1\bigl(U_i(K_\varepsilon),\SO(2)_d)\bigr) = \{1\}
\]
for all $i =1,\dots, n$. We may then apply Theorem~\ref{thm:Leray} to get 
\[
\check{H}^1\bigl(\mathcal{U},\SO(2)_d)\bigr) \simeq \check{H}^1\bigl(K_\varepsilon,\SO(2)_d\bigr).
\]
Therefore it follows from Theorem~\ref{thm:classifying principal bundles} that $\check{H}^1\bigl(\mathcal{U}, \SO(2)_d\bigr)$ is canonically in one-to-one correspondence with the set of isomorphism classes of $\SO(2)_d$-principal bundles, i.e., flat oriented circle bundles. Since the subset $z^d_\varepsilon =\{ g_{ij} \in \SO(2) : [i,j] \in K_\varepsilon\}$ is a $1$-cocycle in $\check{H}^1(\mathcal{U}, \SO(2)_d)$, it determines an oriented circle bundle over $K_\varepsilon$. Part (ii) follows from the fact that the $1$-cocycle $b_\varepsilon = \{g_ig_j^{-1} \in \SO(2) : [i,j]\in K_\varepsilon \}$ is a $1$-coboundary and thus represents the trivial cohomology class.
\end{proof}

If the reader finds \eqref{eq:cross1} familiar, that is because we have seen a similar version \eqref{eq:cross} in our discussion of the Penrose tribar. The difference here is that the quantities in \eqref{eq:cross} are from the group $\mathbb{R}_+$ whereas the quantities in \eqref{eq:cross1} are from the group $\SO(2)$. Two cocycles $z^d_\varepsilon =\{ g_{ij} \in \SO(2) : [i,j] \in K_\varepsilon\} $ and $z^{\prime d}_\varepsilon =\{ g'_{ij} \in \SO(2) : [i,j] \in K_\varepsilon\} $ are said to be \textit{cohomologically equivalent} if and only if they differ by a coboundary $b_\varepsilon = \{g_ig_j^{-1} \in \SO(2)  : [i,j]\in K_\varepsilon \}$ in the sense of  \eqref{eq:cross1}.  Cohomologically equivalent $z^d_\varepsilon$ and ${z'}^d_\varepsilon$ define the same \textit{cohomology class} in the quotient group and we have
\begin{align*}
\check{H}^1\bigl(K_\varepsilon,\SO(2)_d\bigr) &\coloneqq \check{Z}^1\bigl(K_\varepsilon,\SO(2)_d\bigr) / \check{B}^1\bigl(K_\varepsilon,\SO(2)_d\bigr)\\
&= \bigl\{\text{cohomologically equivalent discrete cryo-EM cocycles on}\; K_\varepsilon \bigr\}.
\end{align*}

By Proposition~\ref{prop:classifying space}, the cohomology group $\check{H}^1\bigl(K_\varepsilon,\SO(2)_d\bigr)$ can be identified as sets with the classifying space
$[K_\varepsilon, B\SO(2)_d]$,
which classifies the isomorphism classes of flat oriented circle bundles on $K_\varepsilon$. We obtain  a canonical one-to-one correspondence 
\begin{multline}\label{eq:1-1 flat}
\bigl\{\text{cohomologically equivalent discrete cryo-EM cocycles on}\; K_\varepsilon \bigr\} \\ \longleftrightarrow \bigl\{\text{isomorphism classes of flat oriented circle bundles on}\; K_\varepsilon \bigr\}.
\end{multline}

Finally we arrive at the following result.
\begin{theorem}\label{thm:flat=trivial}
Let $\varepsilon > 0 $ be small enough so that \eqref{eq:cocy} holds and let $K_\varepsilon$ be the corresponding cryo-EM complex. Then
\begin{enumerate}[\upshape (i)]
\item every flat oriented circle bundle on $K_\varepsilon$ is the trivial circle bundle;
\item all discrete cryo-EM cocycles on $K_\varepsilon$ are coboundaries $b_\varepsilon = \{g_ig_j^{-1}\in \SO(2): [i,j]\in K_\varepsilon\}$.
\end{enumerate}
\end{theorem}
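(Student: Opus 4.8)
The plan is to reduce both parts of Theorem~\ref{thm:flat=trivial} to a single topological input --- torsion-freeness of $H^2(K_\varepsilon)$ --- and then let the machinery already assembled (Theorems~\ref{thm:class}, \ref{thm:universal coefficient}, \ref{thm:K(Z,2) space}, \ref{thm:classifying principal bundles}, Propositions~\ref{prop:classifying space}, \ref{prop:torsion}) do the rest. Concretely, by Theorem~\ref{thm:class}(i) a discrete cryo-EM cocycle determines a flat oriented circle bundle, and, as in the discussion preceding \eqref{eq:1-1 flat}, every flat oriented circle bundle on $K_\varepsilon$ arises this way; so for (i) it suffices to show that an \emph{arbitrary} flat oriented circle bundle $P$ on $K_\varepsilon$ is trivial. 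By Proposition~\ref{prop:torsion} its Euler class $e(P)$ is a torsion element of $H^2(K_\varepsilon)$. On the other hand, combining \eqref{eq:bu1}, Proposition~\ref{prop:classifying space} and Theorem~\ref{thm:K(Z,2) space}, isomorphism classes of oriented circle bundles on $K_\varepsilon$ are in canonical bijection with $[K_\varepsilon,\mathbb{C}P^\infty]\simeq H^2(K_\varepsilon)$, and under this bijection $P$ corresponds to $e(P)$ (this is one of the standard definitions of the Euler class). Hence $P$ is trivial as soon as $e(P)=0$, and for that it is enough that $H^2(K_\varepsilon)$ be torsion-free.

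The next step is to verify that $H^2(K_\varepsilon)$ is torsion-free. By the universal coefficient theorem (Theorem~\ref{thm:universal coefficient}), $H^2(K_\varepsilon)\simeq\mathbb{Z}^{b}\oplus T_1$ with $T_1$ the torsion subgroup of $H_1(K_\varepsilon)$, so it suffices to show $H_1(K_\varepsilon)$ is torsion-free --- in fact that it vanishes. This is the one genuinely geometric point. When $\varepsilon$ is small enough that the cocycle condition \eqref{eq:cocy} holds, two images are joined by an edge of $K_\varepsilon$ essentially only when their viewing directions coincide (images with a common viewing direction differ by an in-plane rotation, hence are at distance $0$), so each connected component of $K_\varepsilon$ is the clique complex of a complete graph, i.e.\ a single simplex, which is contractible; thus $H_1(K_\varepsilon)=0$. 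Alternatively, in the regime where $\varepsilon$ is chosen so that the viewing directions sample $\mathbb{S}^2$ finely, $K_\varepsilon$ is homotopy equivalent to $\mathbb{S}^2$ by a Hausmann--Latschev type comparison for Vietoris--Rips complexes, and again $H_1(K_\varepsilon)=0$. Combined with the previous step, this proves (i).

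Part (ii) is then (i) read through the correspondence \eqref{eq:1-1 flat}: isomorphism classes of flat oriented circle bundles on $K_\varepsilon$ biject with $\check{H}^1(K_\varepsilon,\SO(2)_d)=\check{Z}^1(K_\varepsilon,\SO(2)_d)/\check{B}^1(K_\varepsilon,\SO(2)_d)$, and (i) says this set is a singleton, so the quotient is trivial; hence every discrete cryo-EM cocycle lies in $\check{B}^1(K_\varepsilon,\SO(2)_d)$, i.e.\ equals some coboundary $b_\varepsilon=\{g_ig_j^{-1}\in\SO(2):[i,j]\in K_\varepsilon\}$, as asserted. (One can also obtain (ii) directly from Theorem~\ref{thm:singular=cech}, $\check{H}^1(K_\varepsilon,\SO(2)_d)\simeq H^1(K_\varepsilon,\SO(2))$, which vanishes once $H_1(K_\varepsilon)=0$.) The only nonformal ingredient in all of this is the vanishing $H_1(K_\varepsilon)=0$, and that is exactly where I expect the difficulty to sit: it is false for a general two-dimensional simplicial complex --- a flag triangulation of $\mathbb{R}P^2$ has $H^2\simeq\mathbb{Z}/2\mathbb{Z}$ and supports a nontrivial flat oriented circle bundle --- so one must genuinely use the structure of $K_\varepsilon$, and making precise the sense in which ``$\varepsilon$ small enough'' forces each component of $K_\varepsilon$ to be a simplex (or, more ambitiously, forces $K_\varepsilon\simeq\mathbb{S}^2$) is the crux of the argument.
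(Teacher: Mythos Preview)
Your overall strategy coincides with the paper's: reduce via Proposition~\ref{prop:torsion} to showing $H^2(K_\varepsilon)$ is torsion-free, and then via the universal coefficient theorem to showing $H_1(K_\varepsilon)$ is torsion-free; part~(ii) then follows from~(i) through the correspondence~\eqref{eq:1-1 flat}, exactly as you say.

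The one substantive difference is in how the geometric input is supplied. The paper does not argue that $H_1(K_\varepsilon)=0$; it asserts (without detailed justification) that by construction $K_\varepsilon$ is homotopy equivalent to a one-point union of copies of $\mathbb{S}^2$ or a one-point union of circles, so that $H_1(K_\varepsilon)$ is either $0$ or $\mathbb{Z}^r$, hence torsion-free, which is all that is needed. Your first proposed justification --- that for $\varepsilon$ small each component of $K_\varepsilon$ is the clique complex of a complete graph, hence a single simplex --- is strictly stronger than this and in fact conflicts with the paper's subsequent use of the theorem in Section~\ref{sec:main'}, where $b=b_2(K_\varepsilon)$ is allowed to be positive (so $K_\varepsilon$ contains $2$-spheres and is certainly not a disjoint union of simplices). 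Your second alternative, a Hausmann--Latschev style comparison giving $K_\varepsilon\simeq\mathbb{S}^2$, is much closer in spirit to the paper's asserted homotopy type. You are right that this step is the only nonformal one; the paper treats it as a consequence of ``the construction of $K_\varepsilon$'' and does not elaborate further.
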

\begin{proof}
By  Proposition~\ref{prop:torsion}, it suffices to show that $H^2(K_\varepsilon)$ is torsion free.  But this follows from Theorem~\ref{thm:universal coefficient}: By our construction of $K_\epsilon$,  the simplicial complex is actually homotopic to a one-point union of several spheres or a one-point union of several circles. This implies that either $H_1(K_\epsilon) = 0$ or $H_1(K_\epsilon) \simeq \mathbb{Z}^r$ for some integer $r\ge 1$. In particular, $H_1(K_\epsilon)$ is torsion free, i.e., $T_1 =0$.
\end{proof}
In other words, the set on the right of \eqref{eq:1-1 flat} is a singleton comprising only the trivial bundle.
Consequently, discrete cryo-EM cocycles on $K_\varepsilon$ are all cohomologically equivalent and all correspond to the trivial circle bundle. So Theorem~\ref{thm:class} does not provide an interesting classification. The reason is that a  discrete cryo-EM cocycle as defined by \eqref{eq:gij}, i.e., an element of $\check{H}^1\bigl(K_\varepsilon,\SO(2)_d\bigr)$, is too coarse. In the next section, we will see how the classification becomes more interesting mathematically when we look at \textit{continuous} cryo-EM cocycles.

\section{Cohomological classification of continuous cryo-EM cocycles}\label{sec:main'}

In the standard mathematical model for cryo-EM, a projected image is a function $\psi:\mathbb{R}^2 \to \mathbb{R}$ defined by 
\[
\psi(x,y) = \int_{z\in \mathbb{R}} \varphi(xa + yb +zc) \, dz,
\]
where $A = [a,b,c] \in \SO(3)$ describes the orientation of the molecule in $\mathbb{R}^3$ and $\varphi$ is the potential function of the molecule. For every pair of images $\psi_i,\psi_j$ we define an $\SO(2)$-valued function 
\begin{equation}\label{eqn:hij}
h_{ij}(r) \coloneqq \operatorname{argmin}_{g\in \SO(2)} \int_{0}^{2\pi} | (g\cdot \psi_i)(r\cos\theta,r\sin\theta) - \psi_j(r\cos\theta,r\sin\theta) |^2 \, d\theta,
\end{equation}
where $r = \sqrt{x^2+ y^2}$. These $h_{ij}$'s should be interpreted as follows: We regard a 2D image $\psi_i$ as comprising circular `slices' of different radii as in Figure~\ref{fig:definition of h_ij}, i.e., each slice is the intersection of the  image $\psi_i$ with a circle of radius $r$. For each pair $i,j$,  $h_{ij}(r) \in \SO(2)$ is the rotation that minimizes the difference between the slice of $\psi_i$ of radius $r$ and the slice of $\psi_j$ of radius $r$.
\begin{figure}[h]
\begin{tikzpicture}
    \color{gray!50!white}
        \pgftransformrotate{-90}
    \physicalpendulum
    
\draw[blue!] (0,-4) circle (3cm);
\draw[blue!] (0,-4) circle (2cm);
\draw[blue!] (0,-4) circle (1cm);
\draw[blue!] (0,-4) -- (0,-1);
\filldraw[blue] (0,-4) circle (0.05cm);
\filldraw[blue] (0,-3) circle (0.05cm);
\filldraw[blue] (0,-2) circle (0.05cm);
\filldraw[blue] (0,-1) circle (0.05cm);

\node[mark size=5pt,black!]  at (0,0) {$\psi_i$};
\node[mark size=5pt,blue!]  at (0.35,-0.75) {$r_3$};
\node[mark size=5pt,blue!]  at (0.35,-1.75) {$r_2$};
\node[mark size=5pt,blue!]  at (0.35,-2.75) {$r_1$};

\end{tikzpicture}
\label{fig:definition of h_ij}
\caption{Circular slices of an image $\psi_{i}$.}
\end{figure}
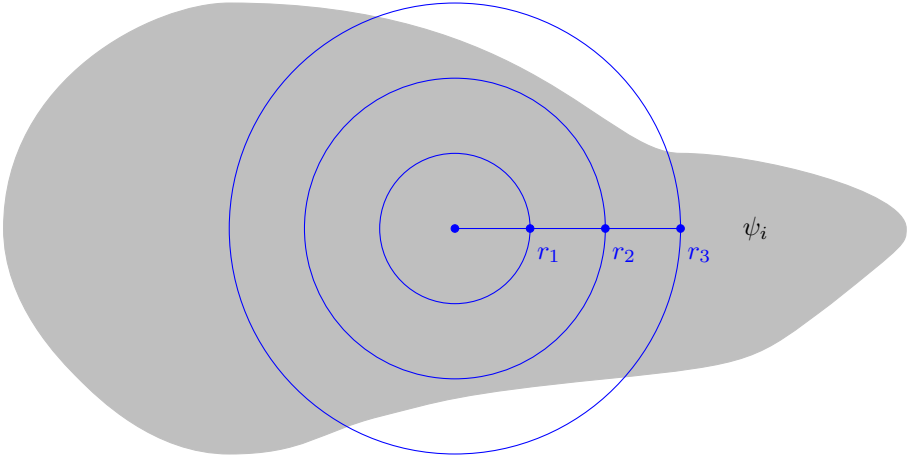

The integral in \eqref{eqn:hij} is in fact a restriction of the \emph{circular Radon transform} \cite{John}, defined for a compactly supported $f : \mathbb{R}^2 \to \mathbb{R}$ by
\[
Sf: \mathbb{R}^2 \times (0,\infty) \to \mathbb{R},\qquad
Sf(\eta,\xi,r) = \int_{ (x-\eta)^2 + (y - \xi)^2 = r} f(x,y) \, d\sigma(x,y),
\]
where $\sigma$ denotes the surface measure on the circle of radius $r$ centered at $(\eta,\xi)$.
Although it has, as far as we know, not been used in cryo-EM applications, the circular Radon transform is common in a variety of other applications, e.g., thermoacoustic tomography and optoacoustic tomography \cite{AK,FPR,KLFA,XW,AK1}.  If we set $(\eta,\xi) = (0,0)$, $x= r\cos \theta$, $y =r \sin \theta$, then
\[
Sf(0,0,r) = \int_{0}^{2\pi} f(r\cos\theta,r\sin\theta) \, d\theta,
\]
and so \eqref{eqn:hij} is the circular Radon transform of $|g \cdot \psi_i - \psi_j|^2$ at $(0,0,r)$.

Let $\varepsilon>0$ and the potential function $\varphi:\mathbb{R}^3 \to \mathbb{R}$ be chosen so that $h_{ij}(r)$ satisfies the $1$-cocycle condition
\begin{equation}\label{eqn:cocycle h_ij}
h_{ij}(r) h_{jk}(r) h_{ki}(r) = 1
\end{equation}
for all $r > 0$ whenever the images $\psi_i$, $\psi_j$, and $\psi_k$ are such that 
\[
d(\psi_i,\psi_j) \le \varepsilon, \qquad d(\psi_j,\psi_k) \le \varepsilon,\qquad d(\psi_k,\psi_i) \le \varepsilon.
\]
We remind readers that the existence of such an $\varepsilon$ that guarantees \eqref{eqn:cocycle h_ij} is an underlying basic tenet of our model.
Since  $\varphi$ is compactly supported, so must its projections $\psi_i$'s, implying that $h_{ij}$ is eventually constant, i.e., there exists some $R > 0$ and some $g\in \SO(2)$ such that $h_{ij}(r) = g$ whenever $r \ge R$. In fact there is no loss of generality in assuming that $g = 1$:  Since $\SO(2)$ is connected, we may pick a continuous curve $\gamma:[R,R'] \to \SO(2)$ such that $\gamma(R) = g$ and $\gamma(R') = 1$; replacing $h|_{[R,R']}$ by $\gamma$ then gives an $h$ where $h_{ij}(r) = 1$ for sufficiently large $r$.  In particular, $\lim_{r\to \infty} h_{ij}(r) = 1$, the identity element in $\SO(2)$.

Recall that we write $\uline{G}(U)$ for the set of $G$-valued functions on an open set $U$. So  $h_{ij}\in \uline{\SO(2)}(\mathbb{R}^2)$. 
Let $\mathcal{U}$ be the open covering  of $K_\varepsilon$ in \eqref{eq:open}. We will now define a \textit{continuous cryo-EM cocycle}, a {\v C}ech $1$-cocycle
\[
z^c_\varepsilon\coloneqq \{\tau_{ij}\in \uline{\SO(2)}(U_i\cap U_j): [i,j]\in K_\varepsilon\}.
\]
on $K_\varepsilon$ determined by the $h_{ij}$'s. The process is analogous to how we obtained  $z^d_\varepsilon$, the discrete cryo-EM cocycle on $K_\varepsilon$,  from the set of pairwise angular comparisons $D$ in Section~\ref{sec:main} but is a little more involved.

We first define the restriction of $\tau_{ij}$ to $U_i\cap U_j \cap U_k$ for all $k=1,\dots,n$ and show that we can glue them together to obtain a globally defined $\SO(2)$-valued function on $U_i\cap U_j$. By construction, the open covering $\mathcal{U}$ has the property that for any $U_i$, $U_j$, $U_k$, either
\[
U_i\cap U_j\cap U_k = \varnothing \qquad \text{or} \qquad U_i \cap U_j \cap U_k \cong \mathbb{R}^2.
\]
In the first case there is nothing to define. If $U_i \cap U_j \cap U_k \cong \mathbb{R}^2$, we fix a homeomorphism and regard $U_i \cap U_j \cap U_k$ as $\mathbb{R}^2$, then define the restriction of $\tau_{ij}$ to be 
\[
\tau_{ij}(x,y) = h_{ij}(r),
\]
for $(x,y)\in U_i \cap U_j \cap U_k$ and $h_{ij}\in \uline{\SO(2)}(U_i\cap U_j)$. Although the definition of $\tau_{ij}|_{U_i \cap U_j \cap U_k}$ depends on a homeomorphism $U_i \cap U_j \cap U_k \cong \mathbb{R}^2$, two such homeomorphisms induce a homeomorphism from $U_i \cap U_j \cap U_k $ to itself. So we obtain a one-to-one correspondence between the set of $\tau_{ij}$'s constructed from one homeomorphism and the set of $\tau_{ij}$'s constructed from the other. This in turn induces a one-to-one correspondence between cohomology classes represented by the two sets of $\tau_{ij}$'s. So while different homeomorphisms $U_i \cap U_j \cap U_k \cong \mathbb{R}^2$ give different $\tau_{ij}$'s, their cohomology classes are in one-to-one correspondence.

Since $U_i\cap U_j\cap U_k$ is disjoint from $U_i\cap U_j \cap U_l$ whenever $k$ and $l$ are distinct, to define $\tau_{ij}$ on $U_i\cap U_j$ we only need to define it on the set
\[
V_{ij} \coloneqq U_i\cap U_j - \bigcup\nolimits_{k\ne i,j} U_i\cap U_j \cap U_k.
\]
If $V_{ij} \ne \varnothing$,
then it must be the interior of the $1$-simplex connecting $[i]$ and $[j]$. In this case we define $\tau_{ij}$ to be the constant $\lim_{r\to \infty} \tau_{ij}(x,y) = 1\in \SO(2)$ where $(x,y)\in U_i\cap U_j \cap U_k$ and $r = \sqrt{x^2+y^2}$. Note that the limit exists as $\tau_{ij}(x,y)$ depends only on $r =\sqrt{x^2+y^2}$ and $\varphi$ and $\psi_i$'s are compactly supported. Lastly, it is obvious from its definition that $\tau_{ij}$ satisfies the $1$-cocyle condition
\begin{equation}\label{eqn:cocycle tau_ij}
\tau_{ij}(x,y) \tau_{jk}(x,y) \tau_{ki}(x,y) = 1.
\end{equation}

To illustrate our construction of $\tau_{ij}$,  we consider an example where the two-dimensional simplicial complex $K$ is obtained by glueing two triangles as follows:
\[
\begin{tikzpicture}
\node (a) at (0,-0.3) {$j$};
\node (b) at (1.3,1) {$l$};
\node (c) at (0,2.3) {$i$};
\node (d) at (-1.3,1) {$k$};
\node at (-0.5,1) {$h_{ij}$};
\node at (0.5,1) {$h_{ij}$};
\draw (0,2)--(-1,1);
\draw (0,2)--(1,1);
\draw (0,0)--(-1,1);
\draw (1,1)--(0,0);
\draw (0,0)--(0,2);
\filldraw (0,0) circle (1pt);
\filldraw (1,1) circle (1pt);
\filldraw (-1,1) circle (1pt);
\filldraw (0,2) circle (1pt);
\end{tikzpicture}
\]
Here $U_i\cap U_j\cap U_k$ is the interior of the triangle with vertices $i,j,k$. We define the values of $\tau_{ij}$, $\tau_{ki}$, and $\tau_{jk}$ on $U_i\cap U_j\cap U_k$ to be $h_{ij}$,  $h_{ki}$, and $h_{jk}$ respectively. One should think of $U_i\cap U_j\cap U_k$ as a copy of $\mathbb{R}^2$ and the boundary of the triangle with vertices $i,j,k$ as the ``points at infinity'' of $\mathbb{R}^2$.

Since $z^c_\varepsilon$ satisfies \eqref{eqn:cocycle tau_ij}, we see that $z^c_\varepsilon\in \check{Z}^1\bigl(K_\varepsilon,\SO(2)\bigr)$. By an argument similar to the proof of Theorem \ref{thm:class}, we obtain the following classification result.
\begin{theorem}[Bundle classification of continuous cryo-EM cocycles I]\label{thm:class'}
Let $\varepsilon > 0 $ be small enough so that \eqref{eqn:cocycle tau_ij} holds and let $K_\varepsilon$ be the corresponding cryo-EM complex. Then
\begin{enumerate}[\upshape (i)]
\item  the $1$-cocycle $z^c_\varepsilon$ determines an oriented circle bundle on $K_\varepsilon $;
\item two $1$-cocycles $z^c_\varepsilon$ and $z^{\prime c}_\varepsilon$ for the same molecule determine isomorphic oriented circle bundles if and only if 
\begin{equation}\label{eq:cross1'}
\tau'_{ij} = \tau_{ij} \tau_i \tau_j^{-1}
\end{equation}
for some $\tau_i \in \uline{\SO(2)}(U_i),\tau_j \in \uline{\SO(2)}(U_j)$, $[i,j] \in K_\varepsilon$.
\end{enumerate}
\end{theorem}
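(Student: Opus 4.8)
The plan is to follow the proof of Theorem~\ref{thm:class} almost line for line, replacing the discrete group $\SO(2)_d$ everywhere by the Lie group $\SO(2)$ and replacing Corollary~\ref{cor:vanishing} by the corresponding vanishing statement for continuous coefficients. First I would take the open cover $\mathcal{U} = \{U_i(K_\varepsilon) : i = 1,\dots,n\}$ of \eqref{eq:open} and establish the local vanishing $\check{H}^1(U_i(K_\varepsilon),\SO(2)) = \{1\}$ for every $i$. Here Corollary~\ref{cor:vanishing} does not apply, since it is stated only for discrete coefficient groups, so instead I would run the following chain: $U_i(K_\varepsilon)$ is contractible, as already observed in the proof of Theorem~\ref{thm:class}, hence $H^2(U_i(K_\varepsilon)) = 0$ by Proposition~\ref{prop:contractible}; then Theorem~\ref{thm:K(Z,2) space} together with \eqref{eq:bu1} gives $[U_i(K_\varepsilon), B\SO(2)] = 0$; and finally Proposition~\ref{prop:classifying space} combined with Theorem~\ref{thm:classifying principal bundles} gives $\check{H}^1(U_i(K_\varepsilon),\SO(2)) = \{1\}$.

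With this local vanishing in hand, Leray's theorem (Theorem~\ref{thm:Leray}) yields $\check{H}^1(\mathcal{U},\SO(2)) \simeq \check{H}^1(K_\varepsilon,\SO(2))$, and Theorem~\ref{thm:classifying principal bundles} then identifies $\check{H}^1(K_\varepsilon,\SO(2))$ canonically with the set of isomorphism classes of principal $\SO(2)$-bundles on $K_\varepsilon$, i.e.\ oriented circle bundles. Because the functions $\tau_{ij}$ satisfy the $1$-cocycle condition \eqref{eqn:cocycle tau_ij}, the collection $z^c_\varepsilon = \{\tau_{ij} \in \uline{\SO(2)}(U_i\cap U_j) : [i,j]\in K_\varepsilon\}$ is a genuine element of $\check{Z}^1(\mathcal{U},\SO(2))$, so it represents a class in $\check{H}^1(\mathcal{U},\SO(2))$ and therefore determines an oriented circle bundle on $K_\varepsilon$; this is part~(i). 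For part~(ii), the \emph{same molecule} hypothesis guarantees that $z^c_\varepsilon$ and $z^{\prime c}_\varepsilon$ are $1$-cocycles on the same complex $K_\varepsilon$ relative to the same cover $\mathcal{U}$, so their {\v C}ech classes are comparable; under the correspondence of Theorem~\ref{thm:classifying principal bundles} they give isomorphic bundles exactly when they represent the same element of $\check{H}^1(\mathcal{U},\SO(2))$, i.e.\ when they differ by an element of $\check{B}^1(\mathcal{U},\SO(2)) = \operatorname{Im}\delta_0$. Unwinding $\delta_0$ from the {\v C}ech cochain complex \eqref{Cech cochain complex}, and using that $\SO(2)$ is abelian so that the order and the inversions among the $\tau_i$'s can be normalized freely, this is precisely the condition $\tau'_{ij} = \tau_{ij}\tau_i\tau_j^{-1}$ of \eqref{eq:cross1'} for some $\tau_i \in \uline{\SO(2)}(U_i)$; equivalently, the coboundary $b_\varepsilon = \{\tau_i\tau_j^{-1} : [i,j]\in K_\varepsilon\}$ represents the trivial cohomology class, exactly as in the proof of Theorem~\ref{thm:class}.

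I expect the only genuinely new obstacle, relative to the discrete case, to be the local vanishing $\check{H}^1(U_i(K_\varepsilon),\SO(2)) = \{1\}$: with continuous $\SO(2)$-coefficients one cannot appeal to Corollary~\ref{cor:vanishing}, and must instead route through the classifying-space description of $\check{H}^1(\,\cdot\,,\SO(2))$ via Proposition~\ref{prop:contractible}, Theorem~\ref{thm:K(Z,2) space}, \eqref{eq:bu1}, Proposition~\ref{prop:classifying space}, and Theorem~\ref{thm:classifying principal bundles}. A second, milder point --- already dealt with in the construction of $z^c_\varepsilon$ preceding the statement --- is that each $\tau_{ij}$ depends on a choice of homeomorphism $U_i\cap U_j\cap U_k \cong \mathbb{R}^2$; since different choices induce a bijection of the resulting cocycle sets that descends to a bijection of cohomology classes, the isomorphism class of the bundle produced in part~(i) is independent of those choices and nothing further is needed there.
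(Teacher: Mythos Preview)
Your proposal is correct and follows exactly the route the paper intends: the paper's entire ``proof'' of Theorem~\ref{thm:class'} is the single sentence ``By an argument similar to the proof of Theorem~\ref{thm:class},'' and you have faithfully carried out that similar argument. Your write-up is in fact more careful than the paper's sketch on one point: you correctly flag that Corollary~\ref{cor:vanishing} is stated only for discrete coefficients and supply the classifying-space chain (Proposition~\ref{prop:contractible}, Theorem~\ref{thm:K(Z,2) space}, \eqref{eq:bu1}, Proposition~\ref{prop:classifying space}, Theorem~\ref{thm:classifying principal bundles}) to obtain $\check{H}^1(U_i(K_\varepsilon),\SO(2))=\{1\}$ in the continuous case, a detail the paper leaves implicit.
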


For small enough $\varepsilon>0$, Theorem~\ref{thm:class'} gives us a classification of all possible continuous cryo-EM cocycles on $K_\varepsilon$, a canonical correspondence
\begin{multline}\label{eq:bunclass}
\bigl\{\text{cohomologically equivalent cryo-EM cocycles on}\; K_\varepsilon \bigr\}\\
 \longrightarrow \bigl\{\text{isomorphism classes of oriented circle bundles on}\; K_\varepsilon \bigr\}.
\end{multline}
By Proposition~\ref{prop:classifying space}, the  isomorphism classes of principal $G$-bundles may be identified with $[K_\varepsilon, BG]$, the homotopy classes of continuous maps from $K_\varepsilon$ to the classifying space of $G$. In our case, $G=\SO(2) \simeq \mathbb{S}^1$,  the circle group. By \eqref{eq:bu1}, $BG=B\SO(2)\simeq \mathbb{C}P^{\infty}$ and so
\begin{equation}\label{eq:isom}
\check{H}^1\bigl(K_\varepsilon,\SO(2)\bigr)\simeq [K_\varepsilon, B\SO(2)] \simeq [K_\varepsilon, \mathbb{C}P^{\infty}] \simeq H^2(K_\varepsilon),
\end{equation}
where the last isomorphism is by Theorem~\ref{thm:K(Z,2) space}. We will discuss the two main implications of \eqref{eq:isom} separately:  $H^2(K_\varepsilon)$ gives us a homological classification of continuous cryo-EM cocycles; whereas $[K_\varepsilon, B\SO(2)]$ tells us about the \textit{moduli} space of continuous cryo-EM cocycles.

\subsection{Cohomology as obstruction}

The cohomology group $H^2(K_\varepsilon)$ may be viewed as the obstruction to $K_\varepsilon$ degenerating into a one-dimensional simplicial complex. If $H^2(K_\varepsilon) = 0$, then $K_\varepsilon$ contains no $2$-sphere --- by which we mean the boundary of a $3$-simplex, which is homeomorphic to $\mathbb{S}^2$. Thus  $K_\varepsilon$ is  a two-dimensional simplicial complex whose $2$-simplices are all contractible, and thus it is homotopic to a one-dimensional simplicial complex. Let $H^2(K_\varepsilon) = 0$. If $\psi_j$, $\psi_k$, $\psi_l$ are  three images that lie in the $\varepsilon$-neighborhood of an image $\psi_i$, then at least one of $\psi_j$, $\psi_k$, $\psi_l$ cannot lie in the intersection of $\varepsilon$-neighborhoods of the other two. In terms of the graph $G_\varepsilon$, $H^2(K_\varepsilon) = 0$ implies that $G_\varepsilon$ does not contain a $4$-clique, i.e., a complete subgraph with four vertices.

The isomorphism wtih $H^2(K_\varepsilon)$ also allows us to calculate $\check{H}^1\bigl(K_\varepsilon,\SO(2)\bigr)$ explicitly. 
\begin{theorem}\label{thm:cohomology of K}
$\check{H}^1\bigl(K_\varepsilon,\SO(2)\bigr) \simeq H^2(K_\varepsilon) = \mathbb{Z}^{b}$ where
$b = b_2(K_\varepsilon)$, the second Betti number of $K_\varepsilon$.
\end{theorem}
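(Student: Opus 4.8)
The first isomorphism in the statement, $\check H^1\bigl(K_\varepsilon,\SO(2)\bigr)\simeq H^2(K_\varepsilon)$, requires nothing new: it is exactly the chain of canonical identifications recorded in \eqref{eq:isom}, namely $\check H^1\bigl(K_\varepsilon,\SO(2)\bigr)\simeq[K_\varepsilon,B\SO(2)]$ by Theorem~\ref{thm:classifying principal bundles} together with Proposition~\ref{prop:classifying space}, then $B\SO(2)\simeq\mathbb{C}P^\infty$ by \eqref{eq:bu1}, then $[K_\varepsilon,\mathbb{C}P^\infty]\simeq H^2(K_\varepsilon)$ by Theorem~\ref{thm:K(Z,2) space}. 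So the only thing left to prove is that $H^2(K_\varepsilon)\simeq\mathbb{Z}^{b}$ with $b=b_2(K_\varepsilon)$.

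For this I would apply the universal coefficient theorem, Theorem~\ref{thm:universal coefficient}, which immediately gives $H^2(K_\varepsilon)\simeq\mathbb{Z}^{b}\oplus T_1$, where $b=\operatorname{rank}H_2(K_\varepsilon)=b_2(K_\varepsilon)$ and $T_1$ is the torsion subgroup of $H_1(K_\varepsilon)$. The free summand is present unconditionally: $K_\varepsilon$ is two-dimensional, so it has no $3$-simplices, hence $C_3(K_\varepsilon)=0$, hence $B_2(K_\varepsilon)=0$ and $H_2(K_\varepsilon)=\ker\bigl(\partial_2\colon C_2(K_\varepsilon)\to C_1(K_\varepsilon)\bigr)$ is a subgroup of the finitely generated free abelian group $C_2(K_\varepsilon)$, hence free abelian of some finite rank $b$. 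Consequently the whole statement reduces to the single claim $T_1=0$, i.e., that $H_1(K_\varepsilon)$ is torsion free.

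To establish that $H_1(K_\varepsilon)$ is torsion free I would invoke the structural description of $K_\varepsilon$ already used in the proof of Theorem~\ref{thm:flat=trivial}: by the way $K_\varepsilon$ is assembled from the $\varepsilon$-proximity graph $G_\varepsilon$ on points of $\mathbb{S}^2$, it is homotopy equivalent to a one-point union of finitely many $2$-spheres (the only $2$-dimensional voids being boundaries of $3$-simplices, each homeomorphic to $\mathbb{S}^2$), or, when no such sphere occurs, to a one-point union of finitely many circles; in either case $H_1(K_\varepsilon)$ is $0$ or $\mathbb{Z}^{r}$, so it is torsion free and $T_1=0$. Substituting this back yields $H^2(K_\varepsilon)\simeq\mathbb{Z}^{b}$, and composing with \eqref{eq:isom} gives the theorem. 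The step I expect to be the real obstacle is precisely this homotopy-type claim for $K_\varepsilon$; to make it rigorous I would exhibit an explicit deformation retraction, or a sequence of elementary collapses, of $K_\varepsilon$ onto such a wedge, organized by which triples of images share a common $\varepsilon$-neighborhood, rather than leaving it at the level of a plausibility argument.
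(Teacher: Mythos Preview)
Your proposal is correct and follows essentially the same route as the paper: invoke the chain \eqref{eq:isom} for the first isomorphism, then apply the universal coefficient theorem and kill the torsion term $T_1$ using the structural fact (already used in the proof of Theorem~\ref{thm:flat=trivial}) that $K_\varepsilon$ is homotopy equivalent to a wedge of spheres or circles, so $H_1(K_\varepsilon)$ is torsion free. The paper additionally supplies a second, independent derivation of $\check H^1(K_\varepsilon,\SO(2))\simeq H^2(K_\varepsilon)$ by applying the Snake Lemma to the exponential short exact sequence $1\to\mathbb{Z}\to\mathbb{R}\to\mathbb{S}^1\to 1$ and using that $\check H^i(K_\varepsilon,\mathbb{R})=0$ via partitions of unity; you may wish to include this as well, since it bypasses the classifying-space machinery entirely.
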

\begin{proof}
The isomorphism is \eqref{eq:isom}. The equality follows from Theorem~\ref{thm:universal coefficient}, observing that $H_1(K_\varepsilon)=0$ by our construction of $K_\varepsilon$ and so $T_1 = 0$. We may also derive the isomorphism  directly without going through the chain of isomorphisms in \eqref{eq:isom}. Snake Lemma \cite{Hatcher, May, Spanier} applied to the exact sequence of groups 
\[
1 \to \mathbb{Z} \xrightarrow{2\pi } \mathbb{R} \xrightarrow{\operatorname{expi}} \mathbb{S}^{1}\to 1,
\]
where the first map is multiplication by $2\pi$ and $\operatorname{expi}(x) \coloneqq \exp(ix)$, yields a long exact sequence of cohomology groups
\[
\dots \to \check{H}^1(K_\varepsilon,\mathbb{R}) \to \check{H}^{1}(K_\varepsilon,\mathbb{S}^{1}) \to \check{H}^2(K_\varepsilon,\mathbb{Z})\to \check{H}^2(K_\varepsilon,\mathbb{R})\to \cdots 
\]
Both $\check{H}^1(K_\varepsilon,\mathbb{R})$ and $\check{H}^2(K_\varepsilon,\mathbb{R})$ are zero by the existence of partition of unity on $K_\varepsilon$. So $\check{H}^1(K_\varepsilon,\mathbb{S}^{1}) = \check{H}^2(K_\varepsilon,\mathbb{Z})$. Since $\mathbb{S}^{1} =\SO(2)$, $\check{H}^1(K_\varepsilon,\mathbb{S}^{1}) =\check{H}^1\bigl(K_\varepsilon,\SO(2)\bigr)$. Finally, by Theorem~\ref{thm:singular=cech}, we get $\check{H}^2(K_\varepsilon,\mathbb{Z}) \simeq H^2(K_\varepsilon,\mathbb{Z})=H^2(K_\varepsilon)$.
\end{proof}

\subsection{Cohomology as moduli}

A benefit of classifying continuous cryo-EM cocycles in terms of oriented circle bundles is that these are very well understood classical objects \cite{Chern, Steenrod}. In what follows, we will refine Theorem~\ref{thm:class} with explicit descriptions of the oriented circle bundles that arise in the classification of continuous cryo-EM cocycles.

Let $b_2(K_\varepsilon) = b$. Since $K_\varepsilon$ is a finite two-dimensional simplicial complex, $b_2(K_\varepsilon) = b$ implies that $K_\varepsilon$ contains $b$ copies of $2$-spheres.  By \eqref{eq:bunclass} and Theorem~\ref{thm:cohomology of K}, we expect to obtain an oriented circle bundle over $K_\varepsilon$ for each $(m_1,\dots,m_b)\in \mathbb{Z}^{b}$. An oriented circle bundle over any one-dimensional simplicial complex $K$ must be trivial since $H^2(K) = 0$. Hence any oriented circle bundle over $K_\varepsilon$ is uniquely determined by its restriction to the $2$-spheres contained in $K_\varepsilon$ and understanding oriented circle bundles on $K_\varepsilon$ reduces to understanding oriented circle bundles on $\mathbb{S}^2$, which we will describe explicitly in the following.

We start by identifying the $3$-sphere with the group of unit quaternions, i.e.,
\[
\mathbb{S}^3 = \{a + bi + cj +dk\in \mathbb{H} : a,b,c,d\in \mathbb{R}, \; a^2 + b^2 + c^2 + d^2 =1\},
\]
and identify the circle  with the group of unit complex numbers, i.e.,
\[
\mathbb{S}^1 = \{a + bi \in \mathbb{C} : a, b \in \mathbb{R},\; a^2 + b^2 = 1 \}.
\]
Elements of $\mathbb{S}^1$ may be regarded as unit quaternions with $c = d = 0$ and so $\mathbb{S}^1$ a subgroup of $\mathbb{S}^3$. In particular, $\mathbb{S}^1$ acts on $\mathbb{S}^3$ by quaternion multiplication and we have a group action 
\begin{equation}\label{eq:group}
\varphi : \mathbb{S}^1 \times \mathbb{S}^3 \to \mathbb{S}^3, \quad (x + yi, a + bi +cj +dk) \mapsto xa -yb + (xb + ya)i + (xc - yd)j + (xd + yc)k.
\end{equation}
As topological spaces we have 
\[
\mathbb{S}^3/\mathbb{S}^1 \simeq \mathbb{S}^2
\]
but note that $\mathbb{S}^1$ is not a normal subgroup of $\mathbb{S}^3$ and so $\mathbb{S}^2$ does not inherit a group structure. Let
\[
\pi : \mathbb{S}^3 \to \mathbb{S}^3/\mathbb{S}^1 \simeq \mathbb{S}^2
\]
be the natural quotient map.

For $m \in \mathbb{N}$, let $C_m$ be the subgroup of $\mathbb{S}^1$ generated by $\exp (2\pi i/m)$, a cyclic group of order $m$. Each $C_m$  is also a subgroup of $\mathbb{S}^3$ and acts on $\mathbb{S}^3$ by quaternion multiplication.
Since $C_m$ is a subgroup of $\mathbb{S}^1$, we obtain an induced projection map
\begin{equation}\label{eq:proj}
\pi_m: \mathbb{S}^3/C_m \to \mathbb{S}^3/\mathbb{S}^1\simeq \mathbb{S}^2
\end{equation}
for each $m \in \mathbb{N}$.
The following classic result \cite{Steenrod} describes all \textit{circle bundles} on $\mathbb{S}^2$ --- there are infinitely many of them, one for each nonnegative integer. 
\begin{proposition}
For each $m=0,1,2,\dots$, there is a circle bundle $(A_m, \pi_m, \varphi_m)$ with base space $\mathbb{S}^2$ where
\[
A_0 = \mathbb{S}^1 \times \mathbb{S}^2, \qquad A_m = \mathbb{S}^3/C_m \quad \text{for} \; m \in \mathbb{N}.
\]
The projection to $\mathbb{S}^2$,
\[
\pi_0 : A_0 \to \mathbb{S}^2, \qquad \pi_m  : A_m \to \mathbb{S}^3/\mathbb{S}^1\simeq \mathbb{S}^2,
\]
is the projection onto the second factor for $m = 0$ and the quotient map \eqref{eq:proj} for $m\in \mathbb{N}$. The group action $\varphi_m : \mathbb{S}^1 \times A_m \to A_m$ is  the trivial action (any element in $\mathbb{S}^1$ acts as identity on $A_0$) for $m = 0$ and the action induced by quaternion multiplication $\varphi$ in \eqref{eq:group}  for $m\in \mathbb{N}$. Every circle bundle on $\mathbb{S}^2$ is isomorphic to an $A_m$ for some $m = 0, 1,2,\dots.$
\end{proposition}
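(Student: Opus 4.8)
The plan is to prove the two halves of the statement in turn: (1) each triple $(A_m,\pi_m,\varphi_m)$ is genuinely a principal $\SO(2)$-bundle over $\mathbb{S}^2$, and (2) every circle bundle on $\mathbb{S}^2$ is isomorphic to one of them. For (2) I would not argue from scratch but lean on the classification already in place: by Proposition~\ref{prop:classifying space} together with \eqref{eq:bu1} and Theorem~\ref{thm:K(Z,2) space} (or equivalently Theorem~\ref{thm:classifying principal bundles} and the snake-lemma computation $\check{H}^1(\mathbb{S}^2,\SO(2))\simeq H^2(\mathbb{S}^2)$ carried out in the proof of Theorem~\ref{thm:cohomology of K}), isomorphism classes of oriented circle bundles on $\mathbb{S}^2$ correspond bijectively to $H^2(\mathbb{S}^2)\simeq\mathbb{Z}$, the correspondence being the Euler class. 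So the whole problem reduces to showing that each $A_m$ is a bundle and that its Euler class is $\pm m$; as $m$ runs over $0,1,2,\dots$ and one absorbs the sign into the choice of identification $\mathbb{S}^1/C_m\simeq\SO(2)$ (equivalently, the orientation of the structure group, which is why ``circle bundle'' rather than ``oriented circle bundle'' appears in the statement), this yields every class.

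Checking that $A_m$ is a bundle: the case $m=0$ is the trivial bundle and is immediate. For $m\ge 1$, the finite subgroup $C_m\subseteq\mathbb{S}^1$ acts freely on $\mathbb{S}^3$ because it sits inside the free $\mathbb{S}^1$-action of \eqref{eq:group}; hence $\mathbb{S}^3\to A_m=\mathbb{S}^3/C_m$ is a covering and $A_m$ is a closed orientable $3$-manifold (a lens space). Since $\mathbb{S}^1$ is abelian, $C_m$ is normal in it and the $\mathbb{S}^1$-action on $\mathbb{S}^3$ descends to $A_m$; the subgroup $C_m$ then acts trivially on $A_m$, the induced $\mathbb{S}^1/C_m$-action is free, and its orbit space is $(\mathbb{S}^3/C_m)/(\mathbb{S}^1/C_m)=\mathbb{S}^3/\mathbb{S}^1\simeq\mathbb{S}^2$, realizing $\pi_m$ as in \eqref{eq:proj}. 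Local triviality follows either from the slice theorem for a free action of a compact Lie group on a manifold, or by pushing the local trivializations of the Hopf bundle over the two hemispheres down to $A_m$ after quotienting the $\mathbb{S}^1$-fibre by $C_m$. To exhibit $A_m$ as a principal $\SO(2)$-bundle in the literal sense of the axioms in Section~\ref{sec:bundles} (so that the $\SO(2)$-action is \emph{free}), one must reparametrize: identify $\SO(2)=\mathbb{S}^1$ with $\mathbb{S}^1/C_m$ via the $m$-th power isomorphism and set $\varphi_m(\alpha,[q])=[\beta q]$ for any $\beta$ with $\beta^m=\alpha$ --- well defined since the $m$ choices of $\beta$ differ by $C_m$. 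This is the precise meaning of ``the action induced by $\varphi$'', after which the four bundle axioms are routine.

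The one unavoidable computation --- and the step I expect to be the crux --- is the Euler class of $A_m$. The clean route is the Gysin sequence of the circle bundle $\pi_m\colon A_m\to\mathbb{S}^2$: writing $e\in H^2(\mathbb{S}^2)\simeq\mathbb{Z}$ for the Euler number, exactness of $H^0(\mathbb{S}^2)\xrightarrow{\,\cup e\,}H^2(\mathbb{S}^2)\xrightarrow{\,\pi_m^{*}\,}H^2(A_m)\to H^1(\mathbb{S}^2)=0$ forces $H^2(A_m;\mathbb{Z})\simeq\mathbb{Z}/e\mathbb{Z}$. On the other hand $A_m=\mathbb{S}^3/C_m$ is a closed orientable $3$-manifold with $\pi_1(A_m)\simeq C_m\simeq\mathbb{Z}/m$, so Poincar\'e duality and the universal coefficient theorem give $H^2(A_m;\mathbb{Z})\simeq H_1(A_m;\mathbb{Z})\simeq\mathbb{Z}/m$. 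Comparing the two computations forces $|e|=m$. (As a sanity check: $m=0$ gives $A_0=\mathbb{S}^1\times\mathbb{S}^2$ with $H^2\simeq\mathbb{Z}$ and $e=0$, while $m=1$ gives $A_1=\mathbb{S}^3$ with $H^2=0$ and $|e|=1$, the Hopf bundle. Alternatively, $e$ can be read off directly by verifying that the clutching function of $A_m$ over the equator of $\mathbb{S}^2$ is $z\mapsto z^{\pm m}$.) Assembling the pieces: the Euler number classifies oriented circle bundles on $\mathbb{S}^2$, the bundle $A_m$ has Euler number $\pm m$, and as $m$ ranges over the nonnegative integers --- the $\pm$ being exactly the residual freedom in identifying $\mathbb{S}^1/C_m$ with $\SO(2)$ --- these exhaust all circle bundles on $\mathbb{S}^2$, which is the assertion.
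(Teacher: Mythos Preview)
Your argument is correct, and it is substantially more than the paper provides. The paper does not prove this proposition at all: it is stated as a ``classic result'' with a citation to Steenrod \cite{Steenrod} and no further argument. So there is no ``paper's own proof'' to compare against; you have supplied a self-contained proof where the authors simply invoked the literature.

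Your strategy --- classify oriented circle bundles on $\mathbb{S}^2$ by the Euler class in $H^2(\mathbb{S}^2)\simeq\mathbb{Z}$, then compute the Euler number of each $A_m$ via the Gysin sequence and the known cohomology of lens spaces --- is sound and efficient. The reparametrization remark (identifying $\SO(2)$ with $\mathbb{S}^1/C_m$ via the $m$th power map so that the action is genuinely free) is a point the paper glosses over in its statement of $\varphi_m$, and you are right to flag it. Your handling of the sign ambiguity also anticipates exactly what the paper does immediately after the proposition, where it introduces $A_m^+$ and $A_m^-$ to pass from ``circle bundles'' indexed by $m\ge 0$ to ``oriented circle bundles'' indexed by $\mathbb{Z}$.

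One minor stylistic point: since the paper explicitly avoids defining the Euler class (``Defining the Euler class of an oriented circle bundle would take us too far afield''), your proof, while correct, sits outside the toolkit the paper has committed to. If you wanted to stay within the paper's declared framework you could instead use the clutching-function description you mention parenthetically, which needs only the material in Section~\ref{sec:bundles}; but as a proof of a result the paper merely quotes, what you have written is entirely adequate.
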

Note that these are $\SO(2)$-bundles since we regard $\SO(2)  = \mathbb{S}^1$. $A_0$ is the trivial circle bundle on $\mathbb{S}^2$ and $A_1$ is the well-known \textit{Hopf fibration}. As a manifold, $A_m = \mathbb{S}^3/C_m$ is orientable for all  $m \in \mathbb{N}$ and so each $A_m$ comes in two different orientations, which we denote by $A_m^+$ and $A_m^-$.  For $m =0,1,2,\dots,$ we write
\[
B_0 \coloneqq A_0, \qquad B_m \coloneqq A_m^+, \qquad B_{-m} \coloneqq A_m^-.
\]
These  are  the \textit{oriented circle bundles} on $\mathbb{S}^2$. 

We will next construct a \textit{cryo-EM bundle} by gluing oriented circle bundles along the cryo-EM complex $K_\varepsilon$, attaching a copy of $B_m$ for some $m \in \mathbb{Z}$ to each $2$-sphere in $K_\varepsilon$. We then show that these bundles are in one-to-one correspondence with continuous cryo-EM cocycles on $K_\varepsilon$.

Let $K_\varepsilon$ be a cryo-EM complex with $b_2(K_\varepsilon) = b$, i.e., $K_\varepsilon$ contains $b$ copies of $2$-spheres; in fact, by its definition, $K_\varepsilon$ is homotopic to the one-point union of $b$ copies of $\mathbb{S}^2$, as we discussed in the proof of Theorem~\ref{thm:flat=trivial}. Label these arbitrarily from $i=1,\dots,b$ and denote them $\mathbb{S}^2_1,\dots,\mathbb{S}^2_b$. For any $(m_1,\dots, m_b) \in \mathbb{Z}^b$, we may define a principal $\SO(2)$-bundle $B_{m_1,\dots,m_b}$ on $K_\varepsilon$ as one whose restriction on the $i$th $2$-sphere in $K_\varepsilon$ is $B_{m_i}$, $i=1,\dots,b$, and is trivial elsewhere. 
We remove all the $2$-spheres contained in $K_\varepsilon$ and let the remaining simplicial complex be
\[
L_\varepsilon\coloneqq \Bigl( \overline{K_\varepsilon - \bigcup\nolimits_{i=1}^b\mathbb{S}^2_i}\Bigr).
\]
As a topological space, $B_{m_1,\dots,m_b}$  is the union of $B_{m_i}$'s corresponding to each of the $2$-spheres and  the trivial circle bundle on $L_\varepsilon$,
\[
B_{m_1,\dots,m_b} \coloneqq \Bigl[\bigcup\nolimits_{i=1}^b B_{m_i}\Bigr]\cup \Bigl[L_\varepsilon \times \mathbb{S}^1\Bigr].
\]
To see that $B_{m_1,\dots,m_b}$ is a fiber bundle on $K_\varepsilon$, take the open covering
\[
\mathcal{U} = \{U_1(K_\varepsilon),\dots, U_{n}(K_\varepsilon)\}
\]
of $K_\varepsilon$  in Section~\ref{sec:main}. By the construction of $B_{m_1,\dots, m_b}$, its restriction  to $U_i(K_\varepsilon)$ is a trivial fiber bundle since  $U_i(K_\varepsilon)$ is contractible. So $B_{m_1,\dots,m_b}$ is locally trivial and thus a fiber bundle on $K_\varepsilon$. Moreover, the bundle $(B_{m_1,\dots,m_b},\pi,\varphi)$ is an oriented circle bundle on $K_\varepsilon$ with $\pi$ and $\varphi$ defined as follows. The projection map $\pi : B_{m_1,\dots, m_b} \to K_\varepsilon$ is defined by
\[
\pi(f) = \begin{cases}
\pi_{m_{i}} (f), &\text{if}\; f\in B_{m_i},\quad i=1,\dots,b,\\
\operatorname{pr}_1(f), &\text{if}\; f\in L_\varepsilon \times \mathbb{S}^1.
\end{cases}
\] 
Here $\operatorname{pr}_1: L_\varepsilon\times \mathbb{S}^1 \to  L_\varepsilon$ is the projection onto the first factor. The group action $\varphi: \SO(2) \times B_{m_1,\dots,m_b} \to B_{m_1,\dots,m_b}$ is defined by
\[
\varphi(g,f) =
\begin{cases}
\varphi_{m_i}(g,f), &\text{if}\; f\in B_{m_i},\quad i=1,\dots,b,\\
f, &\text{if}\; f\in  L_\varepsilon\times \mathbb{S}^1,
\end{cases}
\]
for any $g\in G$ and $f\in B_{m_1,\dots,m_b}$. Furthermore, the intersection of any two simplices in $K_\varepsilon$ is by our construction either empty or a contractible space and so any bundle is trivial on the intersection.

Since every oriented circle bundle on $K_\varepsilon$ is isomorphic to $B_{m_1,\dots,m_b}$ for some  $(m_1,\dots, m_b) \in \mathbb{Z}^b$, we have the following classification theorem for continuous cryo-EM cocycles in terms of $B_{m_1,\dots,m_b}$.
\begin{theorem}[Bundle classification of continuous cryo-EM cocycles II]\label{thm:explicit construction}
Let $\varepsilon > 0 $ be small enough so that \eqref{eqn:cocycle tau_ij} holds and let $K_\varepsilon$ be the corresponding cryo-EM complex. Let $b = b_2(K_\varepsilon)$. Then each cohomologically equivalent continuous cryo-EM cocycles $z^c_\varepsilon$ on $K_\varepsilon$ corresponds to an isomorphism class of an oriented circle bundle $B_{m_1,\dots,m_b}$ on $K_\varepsilon$ for $(m_1,\dots, m_b) \in \mathbb{Z}^b$.
\end{theorem}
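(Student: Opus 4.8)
The plan is to combine the classification already obtained in Theorem~\ref{thm:class'} with the explicit description of oriented circle bundles on a wedge of $2$-spheres. By Theorem~\ref{thm:class'}, together with Theorem~\ref{thm:classifying principal bundles} (applied after Leray's Theorem~\ref{thm:Leray} to the contractible cover $\mathcal{U}$ of \eqref{eq:open}), the map \eqref{eq:bunclass} is in fact a bijection: cohomology classes of continuous cryo-EM cocycles on $K_\varepsilon$ are in canonical one-to-one correspondence with isomorphism classes of oriented circle bundles on $K_\varepsilon$. Hence it suffices to show that the assignment $(m_1,\dots,m_b)\mapsto[B_{m_1,\dots,m_b}]$ is a bijection from $\mathbb{Z}^b$ onto the set of isomorphism classes of oriented circle bundles on $K_\varepsilon$; that each $B_{m_1,\dots,m_b}$ is such a bundle was verified in the construction preceding the theorem.

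To prove bijectivity I would pass through cohomology. As in the proof of Theorem~\ref{thm:flat=trivial}, $K_\varepsilon$ is homotopy equivalent to the one-point union $\bigvee_{i=1}^{b}\mathbb{S}^2_i$ of its $b$ embedded $2$-spheres, so the inclusions $\mathbb{S}^2_i\hookrightarrow K_\varepsilon$ induce an isomorphism $H^2(K_\varepsilon)\xrightarrow{\ \sim\ }\bigoplus_{i=1}^{b}H^2(\mathbb{S}^2_i)\simeq\mathbb{Z}^b$, compatible with the identifications \eqref{eq:isom} and Theorem~\ref{thm:cohomology of K}. Since isomorphism classes of principal bundles are homotopy invariants (Proposition~\ref{prop:classifying space}) and every oriented circle bundle over the one-dimensional complex $L_\varepsilon$ is trivial, an oriented circle bundle $P$ on $K_\varepsilon$ is determined up to isomorphism by the tuple of Euler classes $e_i\in H^2(\mathbb{S}^2_i)\simeq\mathbb{Z}$ of its restrictions $P|_{\mathbb{S}^2_i}$, and this tuple is precisely the image of $[P]$ under the isomorphism above. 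By the classical classification of circle bundles on $\mathbb{S}^2$ recalled before the theorem, the oriented bundles $B_k$, $k\in\mathbb{Z}$, are pairwise non-isomorphic and $B_k$ has Euler class $k$, normalized so that the Hopf bundle has Euler number $1$. Since $B_{m_1,\dots,m_b}$ restricts to $B_{m_i}$ on $\mathbb{S}^2_i$ and is trivial on $L_\varepsilon$, its image in $\mathbb{Z}^b$ is exactly $(m_1,\dots,m_b)$; thus $(m_1,\dots,m_b)\mapsto[B_{m_1,\dots,m_b}]$ corresponds to the identity map of $\mathbb{Z}^b$, hence is a bijection, and composing with \eqref{eq:bunclass} gives the theorem.

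I expect the main obstacle to be the step asserting that an oriented circle bundle on $K_\varepsilon$ is reconstructed, up to isomorphism, by gluing its restrictions to the $2$-spheres $\mathbb{S}^2_i$ to the trivial bundle on $L_\varepsilon$. This is a clutching/Mayer--Vietoris argument: one checks that the pieces agree on the overlaps, which by our construction of $K_\varepsilon$ are empty or contractible and hence carry only the trivial bundle, so that the gluing data is unobstructed; this is exactly where the combinatorial structure of the cover $\{U_i(K_\varepsilon)\}$ is used. The remaining ingredients are routine and have essentially appeared already: the homotopy equivalence $K_\varepsilon\simeq\bigvee_{i=1}^b\mathbb{S}^2_i$ comes from the proof of Theorem~\ref{thm:flat=trivial}, the identification of bundle isomorphism classes with $H^2(K_\varepsilon)$ is \eqref{eq:isom}, and the Euler number of the Hopf bundle is a standard computation that may simply be cited.
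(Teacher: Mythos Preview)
Your proposal is correct and follows the same overall route as the paper: invoke Theorem~\ref{thm:class'} to pass from cohomology classes of continuous cryo-EM cocycles to isomorphism classes of oriented circle bundles, and then identify the latter with the explicit list $\{B_{m_1,\dots,m_b}:(m_1,\dots,m_b)\in\mathbb{Z}^b\}$ via $H^2(K_\varepsilon)\simeq\mathbb{Z}^b$.

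The main difference is one of completeness. The paper's own proof is essentially two lines: it only records that cohomologically equivalent cocycles determine the same bundle (this is Theorem~\ref{thm:class'}(ii)), and for the identification with some $B_{m_1,\dots,m_b}$ it relies on the sentence immediately preceding the theorem, where it is simply asserted that ``every oriented circle bundle on $K_\varepsilon$ is isomorphic to $B_{m_1,\dots,m_b}$ for some $(m_1,\dots,m_b)\in\mathbb{Z}^b$.'' You go further and actually justify that assertion, showing that $(m_1,\dots,m_b)\mapsto[B_{m_1,\dots,m_b}]$ is a bijection by matching it with the isomorphism $H^2(K_\varepsilon)\simeq\bigoplus_i H^2(\mathbb{S}^2_i)$ and reading off Euler numbers. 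This is more than the theorem statement strictly requires, but it is the right thing to check, and your clutching/Mayer--Vietoris remark about the overlaps being contractible is exactly the point the paper gestures at when it notes that ``the intersection of any two simplices in $K_\varepsilon$ is \dots\ either empty or a contractible space and so any bundle is trivial on the intersection.'' One small caveat: the paper deliberately refrains from defining the Euler class, so if you wanted to stay within its framework you could replace your Euler-class bookkeeping by the direct bijection $[K_\varepsilon,\mathbb{C}P^\infty]\simeq H^2(K_\varepsilon)$ from \eqref{eq:isom}, which carries the same information.
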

\begin{proof}
Let $z^c_\varepsilon = \{g_{ij} \in \uline{\SO(2)}(U_i\cap U_j): [i,j]\in K_\varepsilon\}$ and $z^{\prime c}_\varepsilon = \{g'_{ij} \in \uline{\SO(2)}(U_i\cap U_j): [i,j]\in K_\varepsilon\}$ be cohomologically equivalent continuous cryo-EM cocycles on $K_\varepsilon$, i.e., they are related by \eqref{eq:cross1'} for some $g_i, g_j\in \SO(2)$, $[i,j] \in K_\varepsilon$.
By Theorem~\ref{thm:class'}, $z^c_\varepsilon$ and $z^{\prime c}_\varepsilon$ must correspond to the same oriented circle bundle on $K_\varepsilon$. 
\end{proof}

\section{Denoising cryo-EM images and cohomology}\label{sec:denoise}

Aside from providing theoretical classification results (e.g.\ Theorems~\ref{thm:class} and \ref{thm:explicit construction}) whose practical value is as yet unclear, we show here that the more elementary aspects of our cohomology framework can shed light on one aspect of cryo-EM imaging --- denoising cryo-EM images. Our goal is not to propose any new method  but to provide some perspectives on existing methods, which work well in practice \cite{SW1,SZSH,Singer}. We saw in  Section~\ref{sec:main} that a \textit{noiseless} discrete cryo-EM cocycle $z^d_\varepsilon =\{ g_{ij}\in \SO(2): [i,j]\in K_\varepsilon\}$ on $K_\varepsilon$ satisfies the cocycle condition
\begin{equation}\label{eq:cocycle-repeat}
g_{ij} g_{jk} g_{ki} =1,
\end{equation}
when $\varepsilon$ is sufficiently small. In reality, a collection of projected images obtained from cryo-EM measurements, $\widehat{\Psi} =\{\widehat{\psi}_1,\dots,\widehat{\psi}_n\} $,  will invariably be corrupted by noise; here a carat over a quantity signifies that it is possibly corrupted by noise. As a result, the discrete cryo-EM cocycle $\widehat{z}_\varepsilon =\{ \widehat{g}_{ij}\in \SO(2): [i,j]\in K_\varepsilon\}$ obtained from $\widehat{\Psi}$  will not satisfy \eqref{eq:cocycle-repeat} for sufficiently small $\varepsilon > 0$. To see this, let $\Psi = \{\psi_1,\dots, \psi_n\}$ be a set of noise-free projected images whose discrete cryo-EM cocycle is $z^d_\varepsilon$. If $\widehat{z}_\varepsilon$ also satisfies \eqref{eq:cocycle-repeat}, then we have 
\[
\widehat{g}_{ij} = g_{ij} g_i g_j^{-1} \in \SO(2)
\]
for some $g_i\in \SO(2)$. But this implies that $\widehat{\psi_i}$ can be obtained by rotating the noise-free image $\psi_i$ by $g_i\in \SO(2)$ and so $\widehat{\psi}_i$ is also noise-free, a contradiction.

In general cryo-EM images are denoised by \textit{class averaging} \cite{Frank2}. Noisy images are grouped into classes of similar viewing directions. The within-class average is  then taken as an approximation of the noise-free image in that direction. The methods for grouping images into classes \cite{SW1,SZSH,Singer} are essentially all based on the observation that in the noiseless scenario, the cocycle condition \eqref{eq:cocycle-repeat} must hold. We will look at a few measures of deviation of discrete cryo-EM cocycles from being a cocycle.

Let $\widehat{z}^d_\varepsilon =\{ \widehat{g}_{ij}\in \SO(2): [i,j]\in K_\varepsilon\}$  be a discrete cryo-EM cocycle on $K_\varepsilon$ computed from a noisy set of projected images $\widehat{\Psi} $.  Since $\SO(2)$ can be identified with the circle $\mathbb{S}^1$, every $g\in \SO(2)$ corresponds to an angle $\theta \in \mathbb{S}^1$, represented by $\theta \in [0,2\pi)$. A straightforward measure of deviation of $z_\varepsilon$ from being a cocycle is given by
\[
\delta(\widehat{z}^d_\varepsilon) =  \sum\nolimits_{i, j,k \; : \; [i,j],[i,k],[j,k] \in K_\varepsilon} (\theta_{ij} + \theta_{jk} +  \theta_{ki})^2, 
\]
where the addition in the parentheses is computed in $\mathbb{S}^1$, i.e., given by the unique number $\theta_{ijk} \in [0, 2\pi)$ such that
\[
\theta_{ij} + \theta_{jk} +  \theta_{ki} = \theta_{ijk} \pmod{2\pi}.
\]
\begin{lemma}
$\widehat{z}^d_\varepsilon$ is a cocycle if and only if $\delta(\widehat{z}^d_\varepsilon) =0$. 
\end{lemma}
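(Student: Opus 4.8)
This lemma is essentially a matter of unwinding the definition of $\delta$, so the proof will be short. The plan is to show both directions of the biconditional by relating the vanishing of each summand $(\theta_{ij}+\theta_{jk}+\theta_{ki})^2$ to the cocycle condition $\widehat{g}_{ij}\widehat{g}_{jk}\widehat{g}_{ki}=1$ on the corresponding $2$-simplex $[i,j,k]$ of $K_\varepsilon$.

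First I would recall that under the identification $\SO(2)\cong\mathbb{S}^1$, multiplication of rotations corresponds to addition of angles modulo $2\pi$; concretely, if $\widehat{g}_{ij}$, $\widehat{g}_{jk}$, $\widehat{g}_{ki}$ correspond to $\theta_{ij},\theta_{jk},\theta_{ki}\in[0,2\pi)$, then $\widehat{g}_{ij}\widehat{g}_{jk}\widehat{g}_{ki}$ corresponds to $\theta_{ij}+\theta_{jk}+\theta_{ki}\pmod{2\pi}$, which by the definition in the statement is exactly $\theta_{ijk}$. Hence $\widehat{g}_{ij}\widehat{g}_{jk}\widehat{g}_{ki}=1$ (the identity in $\SO(2)$) if and only if $\theta_{ijk}=0$. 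Since $\theta_{ijk}\in[0,2\pi)$ is a genuine real number, $\theta_{ijk}=0$ if and only if $\theta_{ijk}^2=0$, i.e.\ if and only if that summand of $\delta(\widehat{z}^d_\varepsilon)$ vanishes.

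Next I would assemble these local equivalences into the global statement. For the ``if'' direction: $\delta(\widehat{z}^d_\varepsilon)$ is a sum of squares of real numbers, so $\delta(\widehat{z}^d_\varepsilon)=0$ forces every summand $\theta_{ijk}^2$ to vanish, hence $\widehat{g}_{ij}\widehat{g}_{jk}\widehat{g}_{ki}=1$ for every triple $[i,j],[i,k],[j,k]\in K_\varepsilon$, which is precisely the cocycle condition \eqref{eq:cocycle-repeat} (here one should note that, as in Section~\ref{sec:main}, a $2$-simplex $[i,j,k]$ belongs to $K_\varepsilon$ exactly when all three edges do, so the index set of the sum is the set of $2$-simplices of $K_\varepsilon$). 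Conversely, if $\widehat{z}^d_\varepsilon$ is a cocycle, then $\widehat{g}_{ij}\widehat{g}_{jk}\widehat{g}_{ki}=1$ for each such triple, so each $\theta_{ijk}=0$, so each summand is $0$, so $\delta(\widehat{z}^d_\varepsilon)=0$.

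There is no real obstacle here; the only point requiring a modicum of care is the bookkeeping in passing between the three representations of a rotation ($2\times 2$ matrix, element of $\mathbb{S}^1$, angle in $[0,2\pi)$) and making sure the ``addition computed in $\mathbb{S}^1$'' in the definition of $\delta$ is the same operation as group multiplication in $\SO(2)$ transported across the identification $\SO(2)\cong\mathbb{S}^1$. Once that is spelled out, the equivalence $\theta_{ijk}=0\iff\theta_{ijk}^2=0$ (valid because $\theta_{ijk}$ ranges over the reals, not over $\mathbb{S}^1$) closes the argument. It is worth remarking explicitly that the squaring is what makes $\delta$ a faithful obstruction: summing the $\theta_{ijk}$'s without squaring could produce cancellation and a spurious zero, whereas summing squares vanishes only when every term does.
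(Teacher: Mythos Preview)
Your argument is correct and is precisely the natural unwinding of definitions; the paper itself states the lemma without proof, treating it as immediate from the definition of $\delta$. There is nothing to compare: you have simply supplied the obvious verification that the authors omitted.
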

Let $\psi$ be an arbitrary projected image. Then $\delta(\widehat{z}_\varepsilon) $ quantifies the obstruction of gluing images in
\[
\widehat{z}^d_\varepsilon(\psi) = \{ g  \cdot \psi : g \in \widehat{z}^d_\varepsilon\} = \{\widehat{g}_{ij} \cdot \psi : [i,j] \in K_\varepsilon\}
\]
together to get the $3$D structure of the molecule. If $\delta(\widehat{z}_\varepsilon) $ is small, then $\widehat{z}^d_\varepsilon$ is already close enough to a cocycle and hence every image is good.

On the other hand, if $\delta(\widehat{z}^d_\varepsilon) $ is big, then the following measure allows us to identify subsets of good images, if any. Given an image $\psi$, whenever $[i,j]$ is an edge of $K_\varepsilon$ for some $j$, we want the viewing direction of $g_{ij} \cdot \psi$ to be close to that of $\psi$.
This is captured by the quantity $\rho_i (\widehat{z}^d_\varepsilon) \coloneqq \delta_i(\widehat{z}^d_\varepsilon) /3\delta(\widehat{z}^d_\varepsilon) $ where
\[
\delta_i(\widehat{z}^d_\varepsilon)  = \sum\nolimits_{j,k \; : \; [i,j],[i,k],[j,k] \in K_\varepsilon} (\theta_{ij} + \theta_{jk} +  \theta_{ki})^2, \qquad i=1, \dots, n.
\]
Clearly, $\sum_{i=1}^n \rho_i (\widehat{z}^d_\varepsilon)  = 1$. For $g_{ij} \cdot \psi$ to be a good image, we want $\rho_i (\widehat{z}^d_\varepsilon) \ll 1 $.

\section*{Acknowledgment}

We thank Amit Singer for introducing us to this fascinating topic. We are extremely grateful to the two anonymous referees for their unusually thorough reading  and for making numerous useful comments that led to a vast improvement of our manuscript.
The work in this article is generously supported by AFOSR FA9550-13-1-0133, DARPA D15AP00109, NSF IIS 1546413, DMS 1209136, DMS 1057064.


\begin{thebibliography}{99}    
\bibitem{AK1} G.~Ambartsoumian and P.~Kuchment, ``A range description for the planar circular Radon transform," \textit{SIAM J.\ Math.\ Anal.}, \textbf{38} (2006), no.~2, pp.~681--692.
 
\bibitem{AK} G.~Ambartsoumian and P.~Kuchment, ``On the injectivity of the circular Radon transform arising in thermoacoustic tomography," \textit{Inverse Probl.}, \textbf{21} (2005), no.~2, pp.~473–-485.

\bibitem{medical1} S.~R.~Arridge, ``Optical tomography in medical imaging", \textit{Inverse Probl.}, \textbf{15} (1999), no.~2, pp.~R41--R93.

\bibitem{medical2} S.~R.~Arridge and J.~C.~Hebbden, ``Optical imaging in medicine: II. Modelling and reconstruction," \textit{Phys.\ Med.\ Biol.}, \textbf{42} (1997), no.~42, pp.~841.

\bibitem{remote2} C.~Baillard and H.~Maitre, ``3-D reconstruction of urban scenes from aerial stereo imagery: A Focusing Strategy," \textit{Comput.\ Vis.\  Image.\ Und.}, \textbf{76} (1999), no.~3, pp.~244--258.

\bibitem{BCS} A.~S.~Bandeira, Y.~Chen, and A.~Singer, ``Non-unique games over compact groups and orientation estimation in cryo-EM,''  \url{http://arxiv.org/abs/1505.03840}, \textit{preprint}, 2015.

\bibitem{BKS} A.~S.~Bandeira, C.~Kennedy, and A.~Singer, ``Approximating the little Grothendieck problem over the orthogonal and unitary groups,'' \textit{Math.\ Program.}, to appear, (2016).

\bibitem{B} C.~Berge, \textit{Hypergraphs}, North-Holland Mathematical Library, \textbf{45}, North-Holland,  Amsterdam, 1989.

\bibitem{Brown} E.~H.~Brown, ``Cohomology theories,'' \textit{Ann.\ of Math.}, \textbf{75} (1962), pp.~467–-484.

\bibitem{Brylinski} J.-L.~Brylinski, \emph{Loop Spaces, Characteristic Classes and Geometric Quantization}, Birkh\"auser, Boston, MA, 2008.

\bibitem{underwater1} U.~Castellani, A.~Fusiello, and V.~ Murino, ``Registration of multiple acoustic range views for underwater scene reconstruction," \textit{Comput.\ Vis.\ Image.\ Und.}, \textbf{87} (2002), no.~1, pp.~78--89.
  
\bibitem{Chern} S.~S.~Chern, ``Circle bundles,'' pp.~114--131, \textit{Geometry and Topology}, Lecture Notes in Mathematics, \textbf{597}, Springer, Berlin, 1977. 

\bibitem{DSG} V.~De Silva and R.~Ghrist, ``Homological sensor networks,'' \textit{Notices Amer.\ Math.\ Soc.}, \textbf{54} (2007), no.~1, pp.~10--17. 

\bibitem{FPR} D.~Finch, S.~K.~Patch, and Rakesh, ``Determining a function from its mean values over a family of spheres," \textit{SIAM J.\ Math.\ Anal.}, \textbf{35} (2004), no.~5, pp.~1213–-1240.

\bibitem{remote1} A.~Fischer, T.~H.~Kolbe, F.~Lang, A.~B.~Cremers, W.~F\"orstner, L.~Pl\"umer, and V.~Steinhage, ``Extracting buildings from aerial images using hierarchical aggregation in 2D and 3D," \textit{Comput.\ Vis.\  Image.\ Und.}, \textbf{72} (1998), no.~1, pp.~185--203.

\bibitem{Forster} O.~Forster, \emph{Lectures on {R}iemann surfaces}, Graduate Texts in Mathematics, \textbf{81}, Springer, New York, NY, 1991.

\bibitem{Francis} G.~K.~Francis, \emph{A Topological Picturebook}, Springer, New York, NY, 2007.

\bibitem{Frank1} J.~Frank, ``Single-particle imaging of macromolecules by cryo-electron microscopy,'' \textit{Annu.\ Rev.\ Biophys.\ Biomol.\ Struct.}, \textbf{31} (2002), pp.~303–319.

\bibitem{Frank2} J.~Frank, \textit{Three-Dimensional Electron Microscopy of Macromolecular Assemblies: Visualization of biological molecules in their native state}, Oxford University Press, New York, NY, 2006.

\bibitem{Godment} R.~Godement, \emph{Topologie Alg\'ebrique et Th\'eorie des Faisceaux}, Troisi{\`e}me {\'e}dition revue et corrig{\'e}e, Publications de l'Institut de Math{\'e}matique de l'Universit{\'e} de Strasbourg, XIII, Actualit{\'e}s Scientifiques et Industrielles, \textbf{1252}, Hermann, Paris, 1973.

\bibitem{GH} P.~Griffiths and J.~Harris, \textit{Principles of Algebraic Geometry}, Wiley, New York, NY, 1994.      
 
\bibitem{HS1} R.~Hadani and A.~Singer, ``Representation theoretic patterns in three-dimensional cryo-electron microscopy {I}: {T}he intrinsic reconstitution algorithm,'' \textit{Ann.\ of Math.}, \textbf{174} (2011), no.~2, pp.~1219--1241.

\bibitem{HS2} R.~Hadani and A.~Singer, ``Representation theoretic patterns in three-dimensional cryo-electron microscopy {II}: {T}he class averaging problem,'' \textit{ Found.\ Comp.\ Math.},\textbf{11} (2011), no.~5, pp.~589--616.

\bibitem{Hartshorne} R.~Hartshorne, \textit{Algebraic Geometry}, Graduate Texts in Mathematics, \textbf{52}, Springer, New York, NY, 1977.

\bibitem{Hatcher} A.~Hatcher, \textit{Algebraic Topology}, Cambridge University Press, Cambridge, UK, 2002.

\bibitem{Hungerford} T.~W.~Hungerford, \textit{Algebra}, Graduate Texts in Mathematics, \textbf{73}, Springer, New York, NY, 1980.

\bibitem{Husemoller} D.~Husem{\"o}ller, \emph{Fibre Bundles}, 3rd Ed., Graduate Texts in Mathematics, \textbf{20}, Springer, New York, 1994.

\bibitem{HJJS} D.~Husem{\"o}ller, M.~Joachim, B.~Jur{\v{c}}o, and M.~Schottenloher, \emph{Basic Bundle Theory and {$K$}-Cohomology Invariants}, Lecture Notes in Physics, \textbf{726}, Springer, Berlin, 2008.

\bibitem{Iversen} B.~Iversen, \textit{Cohomology of Sheaves}, Universitext, Springer, Berlin, 1986.

\bibitem{John} F.~John, \textit{Plane Waves and Spherical Means Applied to Partial Differential Equations}, Interscience Publishers, New York, NY, 1955.

\bibitem{KT} F.~Kamber and Ph.~Tondeur, ``Flat bundles and characteristic classes of group-representations," \textit{Amer.\ J.\ Math.}, \textbf{89} (1967), no.~4, pp.~857--886.

\bibitem{KLFA} R.~A.~Kruger, P.~Liu, Y.~R.~Fang, and C.~R.~Appledorn, ``Photoacoustic ultrasound (\textsc{paus}) reconstruction tomography," \textit{Med.\ Phys.}, \textbf{22} (1995), no.~10, pp.~1605–-1609.

\bibitem{L} L.-H.~Lim, ``Hodge Laplacians on graphs,'' S.~Mukherjee (Ed.), \textit{Geometry and Topology in Statistical Inference}, Proceedings of Symposia in Applied Mathematics, \textbf{73}, AMS, Providence, RI, 2015.

\bibitem{May} J.~P.~May, \textit{A Concise Course in Algebraic Topology}, Chicago Lectures in Mathematics, University of Chicago Press, Chicago, IL, 1999.

\bibitem{MS} J.~W.~Milnor and J.~D.~Stasheff, \textit{Characteristic Classes}, Princeton University Press, Princeton, NJ, 1974.

\bibitem{Morita} S.~Morita, \textit{Geometry of Characteristic Classes},  Translations of Mathematical Monographs, \textbf{199}, AMS, Providence, RI, 2001.

\bibitem{Natterer} F.~Natterer, \textit{The Mathematics of Computerized Tomography}, Classics in Applied Mathematics,  SIAM, Philadelphia, PA, 2001.

\bibitem{underwater2} S.~Negahdaripour, H.~Sekkati, and H.~Pirsiavash, ``Opti-acoustic stereo imaging: On system calibration and 3-D target reconstruction," \textit{IEEE Trans.\ Image Process.}, \textbf{18} (2009), no.~6, pp.~1203--1214.

\bibitem{OT} J.~Oprea and D.~Tanr\'e, ``Flat circle bundles, pullbacks, and the circle made discrete,'' \textit{Int.\ J.\ Math.\ Math.\ Sci.}, (2005), no.~21, pp.~3487--3495.

\bibitem{PZF} P.~A.~Penczek, J.~ Zhu, and J.~Frank, ``A common-lines based method for determining orientations for $N>3$ particle projections simultaneously,'' \textit{Ultramicroscopy}, \textbf{63 }(1996), nos.~3--4, pp.~205--218.

\bibitem{PRF} P.~Penczek, M. Radermacher, and J.~Frank, ``Three-dimensional reconstruction of single particles embedded in ice,'' \textit{Ultramicroscopy}, \textbf{40} (1992), pp.~33--53.

\bibitem{Penrose} R.~Penrose, ``On the cohomology of impossible figures,'' \textit{Structural Topology}, \textbf{17} (1991), pp.~11--16.

\bibitem{Prasolov} V.~V.~Prasolov, \textit{Elements of Homology Theory}, Graduate Studies in Mathematics, \textbf{81}, AMS, Providence, RI, 2007.      

\bibitem{Radermacher} M.~Radermacher, ``Three-dimensional reconstruction from random projections: orientation alignment via Radon transforms,'' \textit{Ultramicroscopy}, \textbf{53} (1994), pp.~121--136.

\bibitem{medical3} M.~Schweiger and S.~R.~Arridge, ``Optical tomographic reconstruction in a complex head model using a priori region boundary information," \textit{Phys.\ Med.\ Biol.}, \textbf{44} (1999), no.~11, pp.~2703.

\bibitem{Singer} A.~Singer, ``Angular synchronization by eigenvectors and semidefinite programming," \textit{Appl.\ Comput.\ Harmon. Anal.}, \textbf{30} (2011), no.~1, pp.~20--36.

\bibitem{SS} A.~Singer and Y.~Shkolnisky, ``Three-dimensional structure determination from common lines in cryo-EM by eigenvectors and semidefinite programming,'' \textit{SIAM J.\ Imaging Sci.}, \textbf{4} (2011), no.~2, pp.~543--572.

\bibitem{SW2} A.~Singer and H.-T.~Wu, ``Two-dimensional tomography from noisy projections taken at unknown random directions,''  \textit{SIAM J.\ Imaging Sci.}, \textbf{6} (2013), no.~1, pp. 136--175.

\bibitem{SW1} A.~Singer and H.-T.~Wu, ``Vector diffusion maps and the connection laplacian,'' \textit{Comm.\ Pure Appl.\ Math.}, \textbf{65} (2012), no.~8, pp.~1067--1144.

\bibitem{SZSH} A.~Singer, Z.~Zhao, Y.~Shkolnisky, and R.~Hadani, ``Viewing angle classification of cryo-electron microscopy images using eigenvector,''  \textit{SIAM J.\ Imaging Sci.}, \textbf{4} (2011), no.~2, pp.~723--759.

\bibitem{Spanier} E.~H.~Spanier, \textit{Algebraic Topology}, Springer, New York, NY, 1981.

\bibitem{Steenrod} N.~E.~Steenrod, ``The classification of sphere bundles,'' \textit{Ann.\ Math.}, \textbf{45} (1944), no.~2, pp.~294--311.

\bibitem{VG} B.~Vainshtein and A.~Goncharov, ``Determination of the spatial orientation of arbitrarily arranged identical particles of an unknown structure from their projections,'' \textit{Proc.\ Int.\  Congress Electron Microscopy}, \textbf{11} (1986), pp.~459--460.

\bibitem{Heel} M.~Van~Heel,``Angular reconstitution: A posteriori assignment of projection directions for 3D reconstruction,'' \textit{Ultramicroscopy}, \textbf{21} (1987), no.~2, pp.~111--123.   

\bibitem{Rips} L.~Vietoris, ``\"{U}ber den h\"oheren {Z}usammenhang kompakter {R}\"aume und eine {K}lasse von zusammenhangstreuen {A}bbildungen,'' \textit{Math. Ann.}, \textbf{97} (1927), no.~1, pp.~454--472.

\bibitem{XW} M.~Xu and L.-H.~V.~Wang, ``Time-domain reconstruction for thermoacoustic tomography in a spherical geometry," \textit{IEEE Trans.\ Med.\ Imag.}, \textbf{21} (2002), no.~7, pp.~814-–822.
\end{thebibliography}
\end{document}